\pdfoutput=1

\documentclass[11pt]{article}

\usepackage{acl}
\usepackage{times}
\usepackage{latexsym}

\usepackage[T1]{fontenc}

\usepackage[utf8]{inputenc}

\usepackage{microtype}

\usepackage{inconsolata}

\usepackage[hang,flushmargin]{footmisc}

\title{
\textit{When Calibration Rankings Reverse:} Accuracy-Controlled Evaluation for Fair Comparison of LLMs
}

\author{
Zhichao Yang\textsuperscript{1}\thanks{Equal contribution. \dag Corresponding author.}, 
Caiqi Zhang\textsuperscript{2}\footnotemark[1], 
Ruihan Yang\textsuperscript{1}\footnotemark[1], 
Chengzu Li\textsuperscript{2},\\
\textbf{Nigel Collier}\textsuperscript{2},
\textbf{Deqing Yang}\textsuperscript{1\dag}
\\
\textsuperscript{1}Fudan University 
\quad 
\textsuperscript{2}University of Cambridge 
\\
\texttt{\{zcyang25, rhyang21\}@m.fudan.edu.cn, \{cz391\}@cam.ac.uk}
}

\usepackage{microtype}
\usepackage{graphicx}
\usepackage{arydshln}
\usepackage{inconsolata}
\usepackage{todonotes}
\usepackage{tabularx}
\usepackage{booktabs} 
\usepackage{multirow}
\usepackage{amsmath}
\usepackage{amssymb}
\usepackage{graphicx}
\usepackage{subcaption}
\usepackage{fge}

\usepackage[framemethod=TikZ]{mdframed} 
\usepackage{listings}
\usepackage{longtable}
\usepackage{tcolorbox}
\usepackage{xspace}
\usepackage{wrapfig}
\usepackage{pgfplots}
\usepackage{color, colortbl}
\usepackage{array}
\usepackage{paralist}
\usepackage{enumitem}
\usepackage{float}  
\usepackage{makecell}
\usepackage{pifont} 
\usepackage{xcolor}  
\usepackage{colortbl}

\usepackage{geometry}

\usepackage{hyperref}

\definecolor{Gray}{gray}{0.92}
\definecolor{racing-green}{rgb}{0.0, 0.8, 0.6}
\definecolor{awesome-red}{rgb}{1.0, 0.13, 0.32}
\definecolor{LightCyan}{rgb}{0.88,1,1}
\definecolor{darkgreen}{RGB}{0,150,0}
\definecolor{Ground}{RGB}{255,184,55}
\definecolor{Dirt}{RGB}{191,169,115}
\definecolor{Pink}{RGB}{226,184,176}
\definecolor{Violet}{RGB}{163,148,170}
\definecolor{darkred}{RGB}{150,0,0} 
\definecolor{Red}{RGB}{171, 61, 56}
\definecolor{Green}{RGB}{62, 139, 117}
\definecolor{Blue}{RGB}{48, 110, 184}

\definecolor{CC}{RGB}{198, 226, 212} 
\definecolor{UU}{RGB}{198, 228, 253} 
\definecolor{CU}{RGB}{247, 202, 193} 
\definecolor{UC}{RGB}{242, 224, 253}

\newcommand{\ie}{\textit{i}.\textit{e}.,\ }
\newcommand{\eg}{\textit{e}.\textit{g}.,\ }

\newtheorem{assumption}{Assumption}

\newtheorem{remark}{Remark}

\newtheorem{proposition}{Proposition} 

\newenvironment{proof}[1][Proof]{\par\noindent\textit{#1.}\space\ignorespaces}{\hfill$\square$\par\medskip}

\definecolor{level4}{RGB}{110,136,203}
\definecolor{level3}{RGB}{173,190,226}
\definecolor{level2}{RGB}{205,208,243}
\definecolor{level1}{RGB}{236,236,252}

\newcommand{\cellcolorval}[1]{
   \ifdim#1pt>100pt \cellcolor{level4!95}#1\relax
   \else
   \ifdim#1pt>90pt \cellcolor{level4!85}#1\relax
   \else\ifdim#1pt>80pt \cellcolor{level3!75}#1\relax
   \else\ifdim#1pt>70pt \cellcolor{level3!60}#1\relax
   \else\ifdim#1pt>60pt \cellcolor{level2!70}#1\relax
   \else\ifdim#1pt>50pt \cellcolor{level2!45}#1\relax
   \else\ifdim#1pt>40pt \cellcolor{level2!30}#1\relax
   \else\ifdim#1pt>30pt \cellcolor{level2!10}#1\relax
   \else\ifdim#1pt>20pt \cellcolor{level1!10}#1\relax
   \else \cellcolor{level1!0}#1\relax
   \fi\fi\fi\fi\fi\fi\fi\fi
}

\newcolumntype{g}{>{\columncolor{Ground!5.2}}c}
\newcolumntype{d}{>{\columncolor{cyan!6}}c}
\newcolumntype{f}{>{\columncolor{lime!6}}c}
\newcolumntype{v}{>{\columncolor{purple!6}}c}
\newcolumntype{u}{>{\cellcolorval}c}

\definecolor{lightblue}{RGB}{130,169,217}
\definecolor{green}{RGB}{29,177,0}

\definecolor{Gray}{gray}{0.92}
\definecolor{racing-green}{rgb}{0.0, 0.8, 0.6}
\definecolor{awesome-red}{rgb}{1.0, 0.13, 0.32}
\definecolor{LightCyan}{rgb}{0.88,1,1}
\definecolor{darkgreen}{RGB}{0,150,0}
\definecolor{Ground}{RGB}{255,184,55}
\definecolor{Dirt}{RGB}{191,169,115}
\definecolor{Pink}{RGB}{226,184,176}
\definecolor{Violet}{RGB}{163,148,170}
\definecolor{darkred}{RGB}{150,0,0} %
\definecolor{bluelight}{RGB}{240,240,255}
\definecolor{greenight}{RGB}{240,255,240}
\definecolor{amb50}{HTML}{FAEEDA}
\definecolor{amb100}{HTML}{FAC775}
\definecolor{amb200}{HTML}{EF9F27}
\definecolor{amb400}{HTML}{BA7517}
\definecolor{ambtxt50}{HTML}{633806}
\definecolor{ambtxt100}{HTML}{412402}
\definecolor{ambtxtlt}{HTML}{FAEEDA}
\definecolor{bluov}{HTML}{E6F1FB}
\definecolor{bluovtxt}{HTML}{0C447C}

\begin{document}
\maketitle
\begin{abstract}
Calibration evaluates whether a model’s confidence aligns with its empirical accuracy. Existing studies often compare the calibration of different large language models using global calibration metrics such as Expected Calibration Error and Brier Score.
We begin by showing, both theoretically and empirically, that such comparisons are confounded by differences in model accuracy.
For fairer cross-model comparison, we then propose ACE, an accuracy-controlled evaluation framework with three complementary views: Instance-Aligned, Distribution-Aligned, and Candidate-Aligned calibration. Across multiple benchmarks, model families, and confidence elicitation methods, we use ACE to study two practically important comparison axes, small versus large models and thinking versus non-thinking models. We find that many previously reported calibration advantages under raw global metrics weaken substantially after accuracy control. We also find that ranking reversal is frequent: models favored by raw metrics often cease to be favored once accuracy is controlled. Our results show that raw global calibration metrics are not robust for cross-model comparison, and that fair calibration comparison requires accuracy-aware evaluation.\footnote{Project page: \url{https://github.com/chaochaoxx/ACE}.}

\end{abstract}

\section{Introduction}

\begin{figure*}[t]
    \centering
    \includegraphics[width=1.0\linewidth]{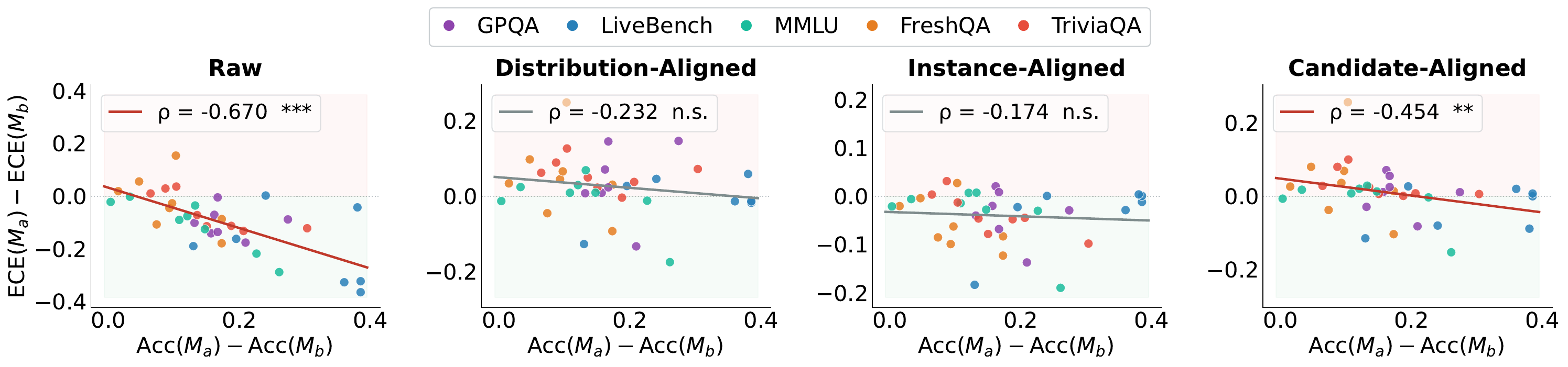}
   \caption{
\textbf{Accuracy differences strongly confound raw ECE comparison.}
Each point is one dataset-level average for a model pair ($\rho \!=\!-0.670$, $n\!=\!40$, 95\% CI $[-0.81,-0.45]$, $p<10^{-5}$; Appendix~\ref{app:empirical_analysis}). Larger accuracy gaps correspond to larger raw calibration gaps;
our accuracy-controlled views reduce this confound, most substantially under the Instance-Aligned (IA) and Distribution-Aligned (DA).}
    \label{fig:head}
\end{figure*}

Large Language Models (LLMs) have achieved remarkable capabilities, making their reliable deployment in high-stakes domains a critical priority~\citep{wang2023decodingtrust, huang2024trustllm, zhang-etal-2024-luq, zhang-etal-2025-atomic, yang-etal-2025-logu, yang-etal-2025-uncle}. In such settings, it is not enough for a model to achieve high average accuracy; it must also indicate how likely its prediction is to be correct \citep{zhang-etal-2026-confidence-estimation, zhang-etal-2026-lovec}. This has made calibration, the alignment between a model’s confidence and its empirical correctness, a central topic in recent work on trustworthy language models~\citep{xiong2023can, geng2024survey, zhou2025finallayerintermediaterepresentations, zhang-etal-2025-roads,  doi:10.36227/techrxiv.177130685.57616643/v1,li-etal-2025-large-language-models}. Global metrics such as Expected Calibration Error (ECE) and Brier Score (BS) are now widely used to quantify the calibration level of an LLM.

A common but underexamined assumption behind this practice is that \emph{global calibration scores are directly comparable across models}. Under this view, if one model achieves lower ECE/BS than another, it is natural to conclude that the former is better calibrated \citep{zeng2025thinking,mei2025reasoning,yoon2025reasoning,lacombe2025don}. However, we argue that this interpretation becomes problematic when the models being compared differ substantially in task accuracy. A model with higher accuracy can achieve a better global calibration score for two different reasons: it may be truly better calibrated, or it may simply make fewer mistakes. For example, if two models both assign confidence 1.0 to every answer, the more accurate one will automatically score better on ECE/BS, despite identical confidence behavior. In Section \ref{sec:background}, we first show that global calibration metrics are confounded by model accuracy, and therefore do not reflect calibration quality alone. This raises a basic but important question: \textbf{when two models operate at different accuracy levels, are their calibration scores really being compared on equal footing?}

In this work, we argue that naïve cross-model comparisons based on raw global calibration metrics can be misleading. 
When models differ greatly in accuracy, raw global metrics can systematically favor the stronger model even when the underlying confidence behavior is not correspondingly better. More importantly, this confound can produce what we term \textit{\textbf{ranking reversal}}: a model that appears better calibrated under raw global metrics can become \emph{less} calibrated once accuracy is controlled. We argue that ranking reversal is a particularly revealing diagnostic for calibration evaluation, because it makes the failure of naïve cross-model comparison directly observable. Rather than merely shifting the magnitude of a calibration gap, controlling for accuracy can overturn the conclusion about which model is better calibrated.

We therefore propose \textbf{ACE} (\textbf{A}ccuracy-\textbf{C}ontrolled \textbf{E}valuation), a framework for fairer cross-model calibration analysis. ACE compares models under matched or partially matched correctness conditions before drawing calibration conclusions. It includes three complementary views. First, \emph{Instance-Aligned} (IA) calibration compares models only on examples where they share the same outcome, providing strict instance-level control. Second, \emph{Distribution-Aligned} (DA) calibration matches the aggregate outcome composition of the compared models, enabling a broader distribution-level comparison. Third, \emph{Candidate-Aligned} (CA) calibration extends the same principle to evaluative settings with a shared answer space, where models judge a common pool of candidate answers. Together, these three views disentangle calibration quality from task accuracy, which raw global metrics conflate.

Using ACE across multiple benchmarks, model families, calibration metrics, and confidence elicitation methods, we find that many raw calibration conclusions are substantially less stable than they appear. Apparent calibration advantages often shrink once accuracy is controlled, and ranking reversal occurs frequently in both scale-based and thinking-versus-non-thinking model comparisons. These findings do not imply that calibration metrics such as ECE or BS are uninformative. Rather, they show that \emph{raw} global calibration scores are not always apples-to-apples across models with different accuracies. When the goal is fair cross-model comparison, accuracy control is not a minor refinement; it can change the conclusion itself.
\begin{itemize}[leftmargin=*, itemsep=0pt, topsep=0pt, parsep=0pt, partopsep=0pt]
    \item We show that raw comparisons of calibration across models can be misleading when models differ in accuracy, and identify a key \textbf{ranking reversal} phenomenon.
    \item We propose \textbf{ACE}, an accuracy-controlled evaluation framework with three complementary views for fairer comparison.
    \item Across multiple datasets, model families, and confidence elicitation methods, we show that many apparent calibration advantages weaken substantially under ACE, and that ranking reversals are common enough to materially change model-level conclusions.
\end{itemize}
\section{Related Work}


\paragraph{Calibration Metrics and Calibration Methods.}
Existing work studies calibration using both visualization-based and scalar metrics. Common examples include the \textit{reliability diagram} and \textit{Expected Calibration Error} (ECE), which summarize the gap between confidence and empirical accuracy across bins \citep{niculescu2005predicting,naeini2015obtaining}, as well as probabilistic scoring rules such as the \textit{Brier Score} and \textit{negative log-likelihood} (NLL) \citep{brier1950verification,lakshminarayanan2017simple}. Prior work has also examined the sensitivity of ECE-style analysis to evaluation design, including equal-mass, adaptive, and class-conditional variants \citep{nguyen2015posterior,nixon2019measuring}. To improve calibration, widely used post-hoc methods include \textit{temperature scaling}, \textit{Platt scaling}, and \textit{isotonic regression} \citep{platt1999probabilistic,zadrozny2002transforming,guo2017calibration}, while training-time approaches such as \textit{label smoothing} and \textit{deep ensembles} have also been explored \citep{pereyra2017regularizing,lakshminarayanan2017simple}. These methods are complementary to our work: rather than proposing a new way to recalibrate model confidences, we study whether calibration comparisons themselves are fair when models differ in task accuracy.

\paragraph{Fairness in Calibration Comparison.}
Recent work has shown that calibration evaluation itself can be sensitive to how it is defined and aggregated \citep{nixon2019measuring,vaicenavicius2019evaluating}. In particular, prior studies have shown that calibration conclusions may change with the choice of metric, binning scheme, and aggregation rule, and that standard ECE-style evaluation may reward confidence assignments that reduce aggregate error without improving the separation between correct and incorrect predictions \citep{si2022re}. Other work further suggests that calibration scores should not be interpreted independently of model accuracy, since more accurate models can obtain systematically more favorable scores under standard global metrics \citep{minderer2021revisiting,chidambaram2024reassessing}. Recent proposals such as \textit{CKCE} highlight that relative calibration comparison can be distorted by differences in predictive marginals \citep{moskvichev2025all}. However, these studies overlook a critical question: when models differ in accuracy, how can we fairly compare their calibration to ensure results reflect genuine confidence quality rather than mere task success? Our work addresses this gap by making accuracy the explicit control variable in cross-model calibration comparison and by showing that such control can change not only calibration gaps but also model rankings.
\section{Background and Motivation}
\label{sec:background}
Reliable confidence estimation is central to the safe deployment of LLMs. A standard lens for studying this property is \emph{calibration}, which asks whether confidence estimates match empirical correctness probabilities. In practice, calibration is often summarized by global metrics such as Expected Calibration Error (ECE) and Brier Score (BS). However, when models differ substantially in task accuracy, such comparisons become problematic. This section introduces the notation and shows, analytically and empirically, why accuracy confounds cross-model calibration comparisons, motivating our accuracy-controlled evaluation framework.

\paragraph{Notation and Standard Metrics.}
Let $D=\{(x_i,y_i)\}_{i=1}^N$ be an evaluation set, where $x_i$ is an input and $y_i$ is its gold label. Given a model $M$, let $\hat{y}_i$ denote its prediction and $c_i\in[0,1]$ the confidence assigned to that prediction. 
We define the correctness indicator $z_i = \mathbf{1}[\hat{y}_i = y_i]$,
and the empirical accuracy 
\[
\mathrm{Acc}(M) = \frac{1}{N}\sum_{i=1}^N z_i.
\]
To measure calibration, we consider two standard global metrics. 
First, ECE partitions the confidence range into bins $\{I_b\}_{b=1}^B$ and computes
\[
\mathrm{ECE}(M) = \sum_{b=1}^B \frac{|S_b|}{N} \left|\mathrm{acc}(S_b)-\mathrm{conf}(S_b)\right|,
\]
where $S_b \!=\!\{i \!:\! c_i\in I_b\}$, $\mathrm{acc}(S_b)=\frac{1}{|S_b|}\sum_{i\in S_b} z_i$, and $\mathrm{conf}(S_b)=\frac{1}{|S_b|}\sum_{i\in S_b} c_i$. 
Second, the Brier Score is
\[
\mathrm{BS}(M) = \frac{1}{N}\sum_{i=1}^N (c_i-z_i)^2.
\]
Both metrics are meaningful summaries of confidence quality for a fixed model on a fixed prediction set. Our concern is different: \emph{directly comparing these global scores across models with different accuracies can be severely confounded}. 

\paragraph{Why Accuracy Is a Confounder.}
The core problem is that global calibration metrics depend not only on how confidence is distributed, but also on the proportion of correct versus incorrect predictions. When a stronger model answers more questions correctly, its global ECE or BS can improve mechanically, even if its confidence behavior is not intrinsically better.

Intuitively, global metrics average over correct and incorrect predictions, so changing their mixture can change the score even when confidence behavior is fixed. This can be seen in a simple toy (with more rigorous discussion in Appendix~\ref{app:general_analysis}) setting where all predictions receive the same confidence $q\in[0,1]$. Let $a=\mathrm{Acc}(M)$. Then
\begin{align*}
\mathrm{ECE}(M) &= |a-q|, \\
\mathrm{BS}(M)  &= a(1-q)^2+(1-a)q^2 \\
&= q^2 + a(1-2q).
\end{align*}
Thus, BS depends linearly on accuracy. In the practically relevant high-confidence regime $q>0.5$, $\mathrm{BS}(M)$ decreases as $a$ increases. Likewise, ECE is piecewise linear in $a$; under the common overconfident regime $a\le q$, it simplifies to
\[
\mathrm{ECE}(M)=q-a,
\]
which also decreases as accuracy increases. Therefore, even when two models exhibit the same nominal confidence level, the more accurate one can obtain a better global calibration score simply because it has more correct predictions.

\paragraph{Empirical Evidence and Motivation.}
This controlled example motivates an empirical diagnostic. As shown in the ``Raw'' panel of Figure~\ref{fig:head}, experiments across mainstream LLM families and benchmark tasks reveal a strong negative correlation ($\rho = -0.670$) between global calibration scores and task accuracy. 
Models with higher accuracy mechanically appear better calibrated under raw ECE, even when they remain highly overconfident. This pattern validates our concern: a substantial part of the reported ``calibration advantage'' of stronger models may simply reflect outcome composition rather than genuinely superior uncertainty estimation. This empirical trend motivates the central question of our paper: \textit{how should we compare calibration fairly when models have different cognitive capabilities?}
\section{ACE: Accuracy-Controlled Evaluation}

We introduce \textsc{ACE} (\textbf{A}ccuracy-\textbf{C}ontrolled \textbf{E}valuation), a framework for fairer cross-model calibration comparison. The key idea is simple: when two models differ in task accuracy, their raw global calibration scores are not directly comparable because they are computed over prediction sets with different outcome compositions. \textsc{ACE} addresses this by constructing comparison protocols in which predictive success is explicitly controlled before calibration is measured. These views are intended to be read jointly rather than as interchangeable metrics: agreement indicates a stable conclusion, while disagreement identifies the relevant alignment choice. The framework contains two views for \emph{generative calibration}, where models score their own answers, and one complementary view for \emph{evaluative calibration}, where they judge a shared answer space.

\paragraph{Setup.}
Consider two models, $M_A$ and $M_B$, evaluated on the same dataset $D=\{(x_i,y_i)\}_{i=1}^N$.
For each example $x_i$, model $M_m$ produces a prediction $\hat{y}_i^{(m)}$, a confidence score $c_i^{(m)}\in[0,1]$, and a correctness indicator
\vspace{-2pt}
$$
z_i^{(m)}=\mathbf{1}\!\left[\hat{y}_i^{(m)}=y_i\right], \quad m\in\{A,B\}.
$$
Let $\mathrm{Acc}(M_m)=\frac{1}{N}\sum_{i=1}^N z_i^{(m)}$ denote the model's accuracy. We focus on binary correctness here; Appendix~\ref{app:graded-correctness} extends the formulation to graded correctness. A naive evaluation would directly compare global scores like $\mathrm{ECE}(M_A)$ and $\mathrm{ECE}(M_B)$ over the full dataset $D$. 
However, as established in Section~\ref{sec:background}, such comparisons are inherently confounded when $\mathrm{Acc}(M_A)\neq \mathrm{Acc}(M_B)$. To address this, \textsc{ACE} evaluates cross-model calibration strictly under controlled accuracy conditions.

\subsection{Instance-Aligned (IA) Calibration}

Our first view enforces \emph{instance-level alignment}. 
We compare two models only on examples where they have the same outcome, \ie both are correct or both are wrong. Formally, we define
\begin{align*}
D_{\mathrm{both\mbox{-}right}} &= \left\{i : z_i^{(A)}=1 \land z_i^{(B)}=1\right\}, \\
D_{\mathrm{both\mbox{-}wrong}} &= \left\{i : z_i^{(A)}=0 \land z_i^{(B)}=0\right\},
\end{align*}
and $D_{\mathrm{IA}} = D_{\mathrm{both\mbox{-}right}} \cup D_{\mathrm{both\mbox{-}wrong}}$.
By construction, the two models have identical empirical accuracy on $D_{\mathrm{IA}}$:
\[
\mathrm{Acc}_{D_{\mathrm{IA}}}(M_A) = \mathrm{Acc}_{D_{\mathrm{IA}}}(M_B) = \frac{|D_{\mathrm{both\mbox{-}right}}|}{|D_{\mathrm{IA}}|}.
\]
This makes $D_{\text{IA}}$ a particularly clean comparison set: because correctness is aligned \textbf{instance by instance}, calibration differences can be interpreted more directly as differences in confidence assignment rather than differences in task success. IA retains substantial coverage in our experiments (median 78.0\%, IQR [66.3\%, 82.2\%], min 55.2\%; Appendix~\ref{app:ia-retention}).

We then compute calibration metrics (\eg ECE and BS) on $D_{\mathrm{IA}}$. A useful feature of IA is its decomposition. $D_{\mathrm{both\mbox{-}right}}$ captures shared competence, while $D_{\mathrm{both\mbox{-}wrong}}$ captures shared blind spots. 
The latter is especially informative: if a stronger model still assigns systematically higher confidence on $D_{\text{both-wrong}}$, then its apparent global calibration advantage may partly mask a more severe form of overconfidence on questions that neither model actually understands. Unlike comparing calibration separately on each model's correct and incorrect predictions, this decomposition holds both the outcome and the evaluated instances fixed across models. This instance-aligned analysis therefore reveals \emph{where} calibration differences come from, not merely whether one global score is lower than another.

\subsection{Distribution-Aligned (DA) Calibration}

Our second view enforces \emph{distribution-level alignment}. Instead of matching models instance by instance, DA matches the overall outcome composition so that calibration is compared under a common effective accuracy level. Concretely, rather than requiring the two models to agree on the same instances, DA only requires them to be compared under the same aggregate proportion of correct and incorrect predictions.

A natural way to achieve this is to rebalance one model's predictions to match the empirical accuracy of the other. One implementation is resampling, which selectively discards observations until the target outcome composition is reached. While simple, this approach introduces sampling variance and reduces effective sample size. We therefore consider a more rigorous alternative that retains all
observations: \emph{Distribution-Aligned} (DA). DA uses importance weighting, a technique widely used in causal inference to correct for confounded comparisons~\citep{rosenbaum1983central}, to reweight each
prediction so that the weighted outcome composition matches a target accuracy (see Appendix~\ref{app:ipw-justification} for a detailed justification).

Let $a_A=\mathrm{Acc}(M_A)$, $a_B=\mathrm{Acc}(M_B)$ with $a_A>a_B$. 
To evaluate $M_A$ at the accuracy level of $M_B$, we define
\[
w_i^{(A)} =
\begin{cases}
a_B / a_A & \text{if } z_i^{(A)} = 1 \\[4pt]
(1 - a_B) / (1 - a_A) & \text{if } z_i^{(A)} = 0
\end{cases},
\]
which satisfies $\sum_i w_i^{(A)} z_i^{(A)} \big/ \sum_i w_i^{(A)} \!= \!a_B$. Thus, the weighted correctness composition of $M_A$ matches the target accuracy $a_B$. The resulting DA calibration metrics are computed as:
\begin{align*}
\mathrm{DA\mbox{-}BS}_{a_{A \to B}} &=
\frac{1}{W} \sum_i w_i^{(A)}\left(c_i^{(A)}-z_i^{(A)}\right)^2, \\
\mathrm{DA\mbox{-}ECE}_{a_{A \to B}} &=
\sum_{b=1}^B \frac{W_b}{W} \left|\mathrm{acc}_w(S_b)\!-\!\mathrm{conf}_w(S_b)\right|,
\end{align*}
where $W \!=\! \sum_i w_i^{(A)}$ is the total weight, $W_b = \sum_{i\in S_b} w_i^{(A)}$ is the bin weight, and $\mathrm{acc}_w$ and $\mathrm{conf}_w$ are weighted bin-level accuracy and confidence.

DA trades the strictness of IA for broader coverage. Rather than enforcing agreement on the same individual instances, it aligns only the overall distribution of correct and incorrect outcomes, allowing comparison on a larger effective sample. This makes DA a useful test of whether an apparent calibration advantage persists once differences in aggregate outcome composition are removed. 
 
\subsection{Candidate-Aligned (CA) Calibration}

The two views above concern \emph{generative calibration}: each model generates its own answer and reports confidence on that answer. Our third view removes this asymmetry by aligning the \emph{candidate answer space}.

For each input $x_i$, let $s_i^{(A)}$ and $s_i^{(B)}$ be the answers generated by $M_A$ and $M_B$, and define the shared candidate pool
$U_i=\{s_i^{(A)},\, s_i^{(B)}\}$.
More generally, $U_i$ can include any set of candidate answers under consideration. Each model then evaluates every candidate in the same pool. Let $e_i^{(m)}(s)\in[0,1]$
denote model $M_m$'s confidence that candidate $s\in U_i$ is correct, and let $y_i(s)\in\{0,1\}$
be the ground-truth correctness label of candidate $s$.
CA is then computed at the \emph{candidate level}: each pair $(x_i,s)$ is treated as one evaluation instance, and calibration compares $e_i^{(m)}(s)$ against $y_i(s)$ over all $i$ and all $s\in U_i$. For example,
$$
\mathrm{CA\mbox{-}BS}(M_m)
\!=\!
\frac{1}{\sum_i |U_i|}
\!\sum_i\! \sum_{s\in U_i}
\left(e_i^{(m)}(s)\!-\!y_i(s)\right)^2
$$
with CA-ECE defined analogously by binning the values $e_i^{(m)}(s)$ across all candidate-level instances.

As both models score the same candidates, CA controls for differences in generation space and isolates calibration in evaluation mode. It also exposes \emph{self-preference bias}: a model may assign higher confidence to its own answer than to a competing answer, even when the competing answer is better supported.

\section{Experiments}


\subsection{Experiment Setup}

\paragraph{Models.}
We study models along two comparison axes: \textit{small vs.\ large} and \textit{thinking vs.\ non-thinking}. For scale-based comparison, we evaluate Qwen2.5 (7B and 72B) \citep{qwen2025qwen25technicalreport}, LLaMA~3.1 (8B and 70B) \citep{grattafiori2024llama3herdmodels}, and GPT-OSS (20B and 120B) \citep{openai2025gptoss120bgptoss20bmodel}. For reasoning-mode comparison, we evaluate Qwen3 (8B and 32B) \citep{yang2025qwen3technicalreport} and GPT-OSS (20B and 120B) under different reasoning settings \citep{openai2025gptoss120bgptoss20bmodel}. These comparison axes are motivated by prior findings that larger models often produce better calibrated confidence estimates than smaller ones, although the gains can be task-dependent \citep{xiong2023can,fang2025credence}. Prior work also suggests that reasoning-enabled models often calibrate better than non-reasoning models on reasoning-intensive tasks, while the picture is more mixed on factual knowledge tasks \citep{yoon2025reasoning,zeng2025thinking}. Additional inference settings and answer-generation prompts are provided in Appendix~\ref{app:answer-prompts} and Appendix~\ref{app:model-details}.

\paragraph{Datasets.}
We evaluate on five datasets spanning two categories: \textit{knowledge-based} and \textit{reasoning-based} tasks. The knowledge-based datasets are FreshQA and TriviaQA \citep{vu2023freshllmsrefreshinglargelanguage,joshi2017triviaqa}, both in an open-ended factual QA format. The reasoning-based datasets are GPQA Diamond, LiveBench, and MMLU-Pro \citep{rein2023gpqagraduatelevelgoogleproofqa,white2025livebenchchallengingcontaminationlimitedllm,wang2024mmluprorobustchallengingmultitask}, which cover expert-level multiple-choice reasoning, general reasoning, and academic problem solving, respectively. The answer formats include both open-ended and multiple-choice settings. Additional dataset details are provided in  Appendix~\ref{app:data-details}.

\paragraph{Confidence Elicitation Methods.}
We consider three representative confidence elicitation methods: \textit{verbalized confidence} \citep{tian2023just}, \textit{P(True)} \citep{kadavath2022language}, and \textit{self-consistency} \citep{wang2022self,manakul2023selfcheckgpt}. Additional prompting templates and implementation details are provided in Appendix~\ref{app:confidence-details}.


\subsection{Main Results}
\begin{figure*}[ht!]
    \centering
    \includegraphics[width=1.0\linewidth]{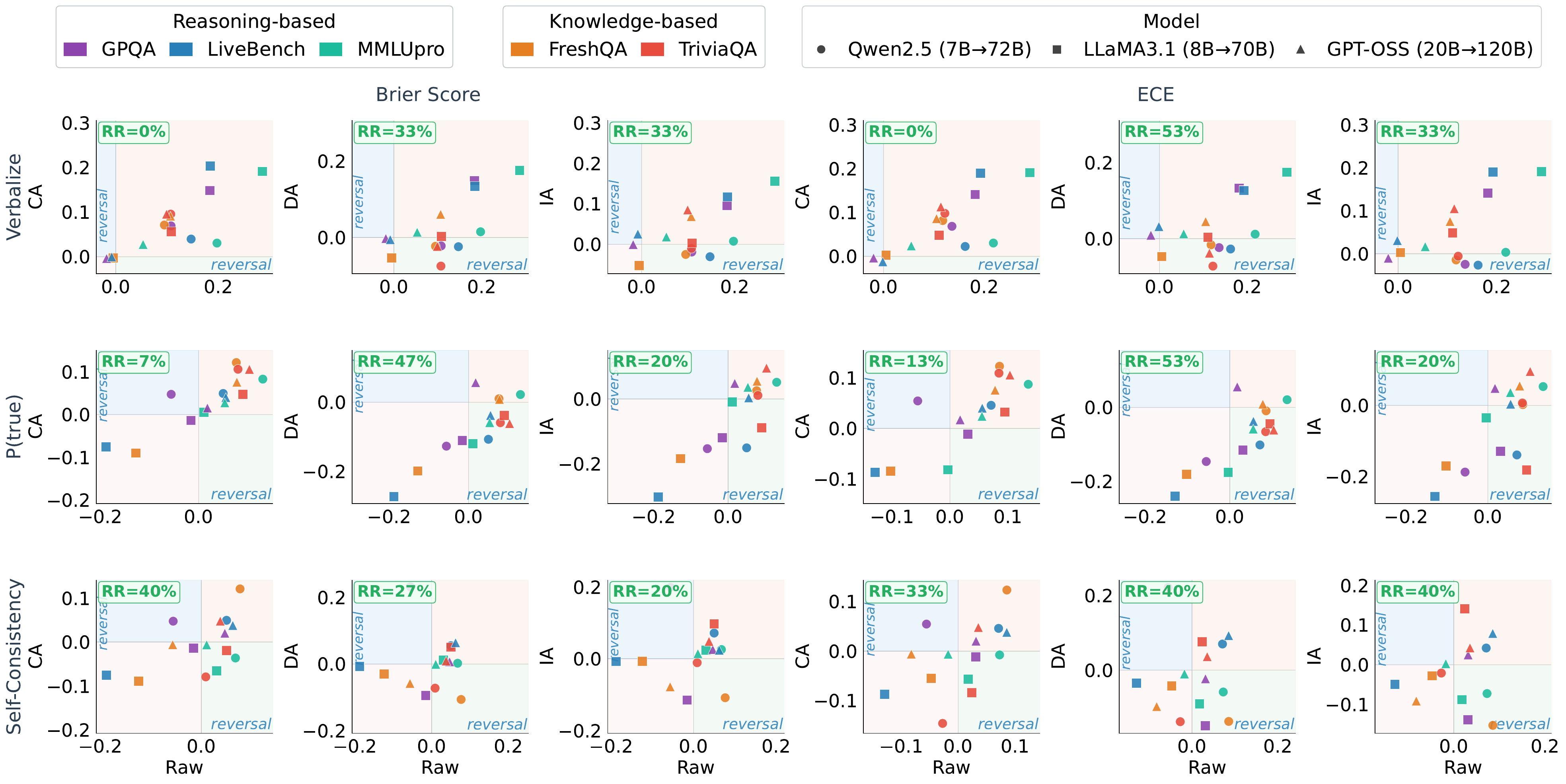}
    \caption{Calibration gap between small and large models within the same family, plotted before (\textit{x}-axis, \textit{Raw}) and after accuracy alignment (\textit{y}-axis). Each point represents one (dataset, calibration metric, confidence elicitation method) combination. Points in \textbf{Quadrants II and IV} (\textit{reversal}) indicate cases where the raw and alignment-based rankings disagree: the model favored by raw ECE/BS is no longer favored after accuracy control.}
    \label{fig:main_results_01}
\end{figure*}

\begin{figure*}[ht!]
    \centering
    \includegraphics[width=1.0\linewidth]{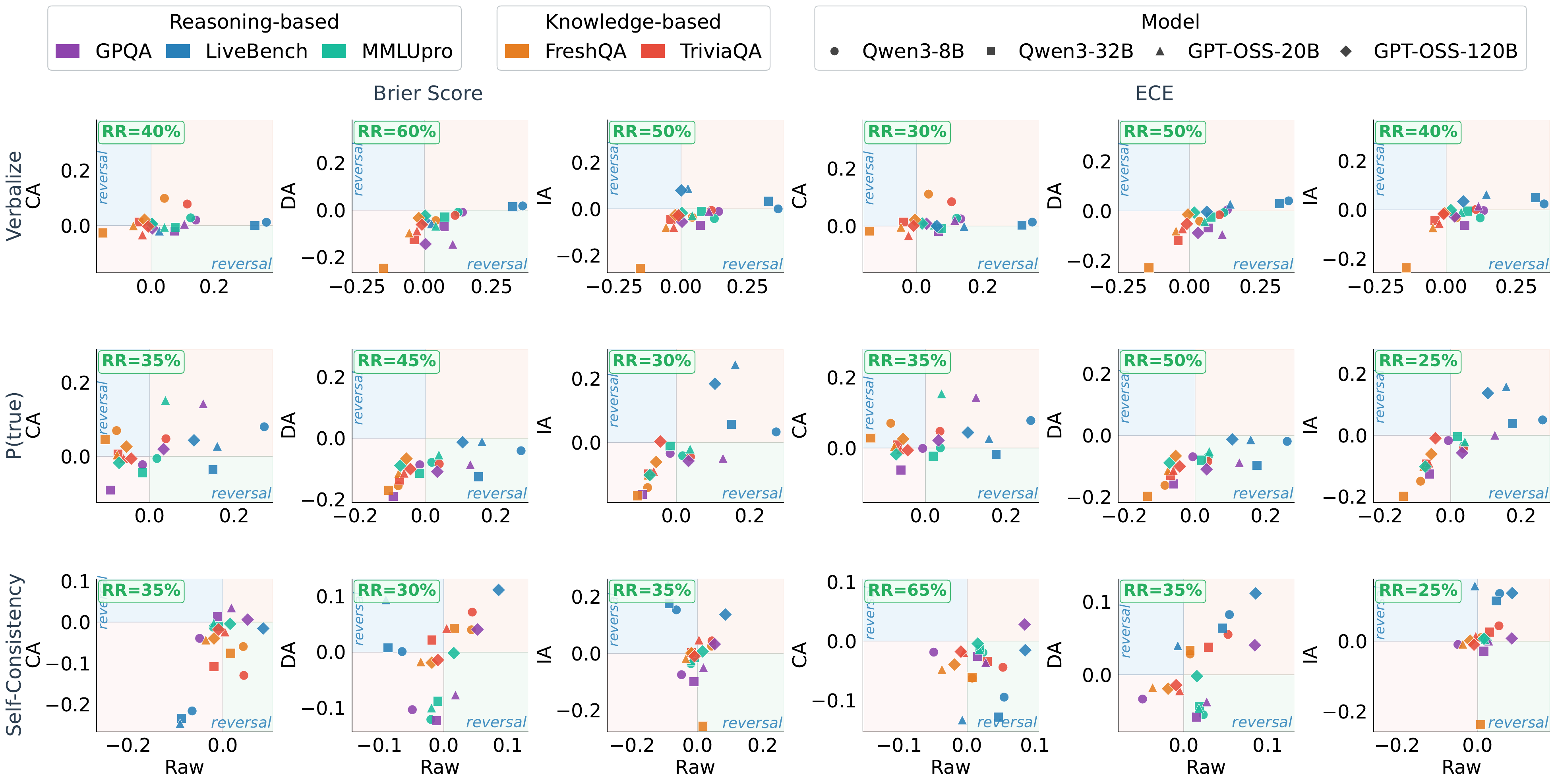}
    \caption{Calibration gap between non-thinking and thinking models, using the same setup as Figure~\ref{fig:main_results_01}.}
    \label{fig:main_results_02}
\end{figure*}

\textbf{Ranking reversal occurs frequently}, and this pattern is robust across metrics and confidence elicitation methods.
We define a \emph{ranking reversal} as a case where the model that appears better calibrated under the raw comparison becomes less calibrated after ACE. In Figures~2 and~3, such reversals correspond to points falling in the second and fourth quadrants, where the raw and ACE-based comparisons disagree in sign. We widely observe ranking reversals under both ECE and BS, and across confidence estimates. Across the 265 reversals in the original
estimates, the median probability that the Raw and aligned gaps retain opposite signs under paired bootstrap resampling is 91.9\%
(Appendix~\ref{app:bootstrap-reversals}). This means that calibration conclusions drawn from raw global metrics can be misleading when the compared models differ in accuracy.

\paragraph{For scale-based comparisons, the calibration advantage of larger models weakens markedly after ACE.}
Figure~2 shows that, when comparing Qwen2.5, LLaMA~3.1, and GPT-OSS model pairs, reversal rates are common across all three confidence elicitation methods and both metrics. Depending on the setting, the reversal rate ranges from 0\% to 53\%, with the strongest effects typically appearing under Distribution-Alignment and Instance-Alignment. For example, under verbalized confidence, the reversal rate reaches 53\% for ECE under DA, and under $P(\mathrm{true})$ it reaches 53\% in some settings. These results indicate that part of the apparent calibration advantage of larger models under raw global metrics is attributable to their higher task accuracy rather than to uniformly better confidence quality.

\paragraph{For thinking-versus-non-thinking comparisons, controlled evaluation also overturns many raw rankings.}
Figure~3 shows the same qualitative pattern when comparing thinking and non-thinking modes within the same model family. Here, reversal remains frequent after accuracy alignment, and in some settings becomes even more pronounced: for instance, under self-consistency, the reversal rate reaches 65\% when Candidate-Aligned, while several Distribution-Aligned settings fall in the 30\%--60\% range. Even under verbalized confidence, where CA can be relatively conservative, DA and IA still produce substantial reversal rates such as 60\% and 50\%. This suggests that some of the raw calibration gains attributed to thinking are mediated by improved answer accuracy, rather than reflecting a purely metacognitive improvement in confidence estimation.

\paragraph{Taken together, the results show that raw global calibration scores are often not apples-to-apples across models with different accuracies.}
Our main empirical finding is therefore not that larger or thinking models are poorly calibrated, but that raw global calibration metrics can confound confidence quality with outcome composition. Once accuracy is explicitly controlled, several widely reported calibration advantages shrink substantially, and in many cases the ranking itself changes. These findings support the use of ACE-style evaluation as a standard companion to raw calibration reporting when the goal is fair cross-model comparison.

\section{Analysis}
\begin{figure}[t]
    \centering
    \includegraphics[width=0.9\linewidth]{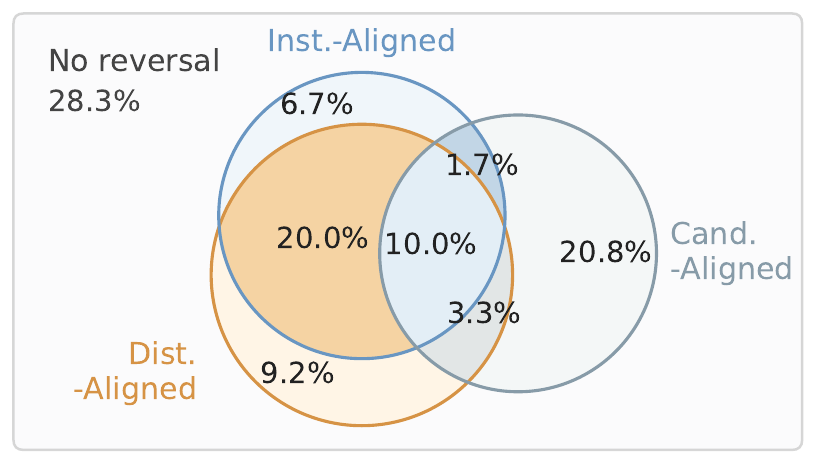}
    \caption{Overall reversal-combination distribution relative to the raw ECE winner. 
    The figure highlights that Dist.-Aligned and Inst.-Aligned agree in 79.2\% of cases, whereas Cand.-Aligned more often behaves as a separate comparison axis.}
    \label{fig:figure_03}
\end{figure}
\paragraph{Distribution-Aligned (DA) and Instance-Aligned (IA) usually overturn raw winners together, while Candidate-Aligned (CA) more often acts alone.}
As shown in Figure~\ref{fig:figure_03}, the dominant
reversal structure is not arbitrary: across all 120 cases, the three most
common patterns are no reversal (28.3\%), CA-only reversal
(20.8\%), and joint DA+IA reversal (20.0\%),
which together account for 69.2\% of all comparisons. More importantly,
DA and IA share the same reversal status in 79.2\% of cases, whereas
CA more often departs from them as a separate comparison axis. This suggests
that DA and IA often capture related corrections to the raw ranking, even
though the structure
is not fully binary, since all-three reversal still occurs in 10.0\% of cases.
The same asymmetry also appears in the knowledge-versus-reasoning split:
across method-level reversal rates, DA and IA generally move in more similar
directions, whereas CA is more likely to diverge.
This asymmetry is also
consistent with the definitions of the three views. DA and IA both still
evaluate confidence on each model's own answers and mainly differ in how they
align accuracy, whereas CA changes the candidate answer pool itself and
therefore more easily yields a separate ranking.

\paragraph{The source of reversal depends strongly on the confidence method.}
Table~\ref{tab:reversal_by_method} shows that the three confidence methods do not share a single reversal signature. Verbalized
confidence is dominated by joint DA+IA
reversal (35.0\%), suggesting that its raw ranking is especially sensitive to
distribution-aligned and instance-aligned comparison. SC follows a different
pattern: CA-only reversal is the most common outcome (35.0\%), indicating substantially greater sensitivity to candidate-pool alignment than to DA or IA comparison. P(True), in contrast, shows the most diffuse profile, with substantial mass on both DA-only (22.5\%) and CA-only (20.0\%) reversals rather than a single dominant pattern. The key point is not merely that reversal frequencies differ, but that different methods are destabilized by
different kinds of alignment.
\begin{table}[t]
\centering
\setlength{\tabcolsep}{13.2pt}
\scalebox{0.82}{%
\begin{tabular}{lcccc}
\toprule
\multirow{2}{*}{Method}
  & \multicolumn{3}{c}{Exclusive RR (\%)}
  & \multirow{2}{*}{\shortstack{NRR \\ (\%)}} \\
\cmidrule(lr){2-4}
  & IA & DA & CA & \\
\midrule
Verbalized
  & \cellcolor{amb50}\textcolor{ambtxt50}{7.5}
  & \cellcolor{amb50}\textcolor{ambtxt50}{2.5}
  & \cellcolor{amb50}\textcolor{ambtxt50}{7.5}
  & \cellcolor{bluov}\textcolor{bluovtxt}{27.5} \\
P(True)
  & \cellcolor{amb50}\textcolor{ambtxt50}{2.5}
  & \cellcolor{amb100}\textcolor{ambtxt100}{22.5}
  & \cellcolor{amb100}\textcolor{ambtxt100}{20.0}
  & \cellcolor{bluov}\textcolor{bluovtxt}{30.0} \\
SC
  & \cellcolor{amb100}\textcolor{ambtxt100}{10.0}
  & \cellcolor{amb50}\textcolor{ambtxt50}{2.5}
  & \cellcolor{amb200}\textcolor{ambtxt100}{35.0}
  & \cellcolor{bluov}\textcolor{bluovtxt}{27.5} \\
\bottomrule
\end{tabular}%
}
\caption{Method-level ranking reversal rates (RR). IA, DA, and CA columns report the
fraction of cases where \emph{exclusively} that view overturns the
Raw ECE winner. NRR (no-reversal rate) is the fraction of cases where
no view overturns the Raw ECE winner.}
\label{tab:reversal_by_method}
\end{table}

\paragraph{Dataset structure determines which kind of reversal becomes dominant.}
As illustrated in Table~\ref{tab:reversal_by_dataset}, reversal
consistency is also highly dataset-dependent. MMLU-Pro exhibits the highest
no-reversal rate (50.0\%), indicating comparatively stable raw rankings. By
contrast, FreshQA and TriviaQA are more often dominated by CA
reversals (45.8\% and 33.3\%, respectively), suggesting that candidate-pool
alignment plays a larger role in open-ended knowledge QA. GPQA
Diamond has the highest all-three-reversal rate (16.7\%), meaning that its raw
winner is more likely to be overturned jointly by all three aligned views.
LiveBench Reasoning shows the most diffuse pattern, with mass spread across
several reversal types rather than concentrated in a single dominant
signature. Taken together, these dataset-level differences indicate that the
stability of raw ECE ranking depends not only on the confidence method, but
also on task format and evaluation structure.
\begin{table}[t]
\centering
\setlength{\tabcolsep}{13.2pt}
\scalebox{0.82}{%
\begin{tabular}{lcccc}
\toprule
\multirow{2}{*}{Dataset}
  & \multicolumn{3}{c}{Exclusive RR (\%)}
  & \multirow{2}{*}{\shortstack{NRR \\ (\%)}} \\
\cmidrule(lr){2-4}
  & IA & DA & CA & \\
\midrule
FreshQA
  & \cellcolor{amb50}\textcolor{ambtxt50}{8.3}
  & \cellcolor{amb50}\textcolor{ambtxt50}{4.2}
  & \cellcolor{amb400}\textcolor{ambtxtlt}{45.8}
  & \cellcolor{bluov}\textcolor{bluovtxt}{25.0} \\
GPQA
  & \cellcolor{amb50}\textcolor{ambtxt50}{0.0}
  & \cellcolor{amb100}\textcolor{ambtxt100}{16.7}
  & \cellcolor{amb100}\textcolor{ambtxt100}{12.5}
  & \cellcolor{bluov}\textcolor{bluovtxt}{25.0} \\
LiveBench
  & \cellcolor{amb100}\textcolor{ambtxt100}{16.7}
  & \cellcolor{amb100}\textcolor{ambtxt100}{12.5}
  & \cellcolor{amb50}\textcolor{ambtxt50}{8.3}
  & \cellcolor{bluov}\textcolor{bluovtxt}{20.8} \\
TriviaQA
  & \cellcolor{amb50}\textcolor{ambtxt50}{8.3}
  & \cellcolor{amb50}\textcolor{ambtxt50}{8.3}
  & \cellcolor{amb200}\textcolor{ambtxt100}{33.3}
  & \cellcolor{bluov}\textcolor{bluovtxt}{20.8} \\
MMLU-Pro
  & \cellcolor{amb50}\textcolor{ambtxt50}{0.0}
  & \cellcolor{amb50}\textcolor{ambtxt50}{4.2}
  & \cellcolor{amb50}\textcolor{ambtxt50}{4.2}
  & \cellcolor{bluov}\textcolor{bluovtxt}{50.0} \\
\bottomrule
\end{tabular}%
}
\caption{Dataset-level ranking reversal rates (RR). IA, DA, and CA columns report the
fraction of cases where \emph{exclusively} that view overturns the
Raw ECE winner. 
}
\label{tab:reversal_by_dataset}
\end{table}

Additional analyses on reversal patterns and confidence distributions are in Appendix~\ref{app:additional-analyses}.

\paragraph{Practical Recommendations.} We recommend using DA as the default reporting metric, as it retains the full sample and agrees with IA in 79.2\% of cases (Figure~\ref{fig:figure_03}). IA should serve as a corroborating check; their agreement indicates a stable conclusion. Finally, treat CA as a diagnostic tool: if CA diverges from DA and IA, the ranking is likely sensitive to candidate-pool alignment rather than indicating a robust reversal.

\section{Conclusion}
We introduced \textsc{ACE}, an accuracy-controlled evaluation framework for fairer calibration comparison in large language models. Our analysis shows that standard global calibration metrics can be substantially confounded by differences in task accuracy, making raw cross-model comparisons misleading. To address this, \textsc{ACE} provides three complementary views that compare models under matched or partially matched correctness conditions. 
Across model families, datasets, and confidence elicitation methods, we find that many apparent calibration advantages weaken once accuracy is controlled, and that ranking reversals are common, suggesting that standard evaluation can overstate the metacognitive advantage of stronger models. 
Our results highlight the importance of accuracy-aware calibration analysis and position \textsc{ACE} as a useful companion to conventional calibration reporting for more reliable assessment of LLM confidence quality.
\section*{Limitations}
Our study also has several limitations that suggest directions for future work. First, our experiments focus on open-source models, which allow controlled and reproducible evaluation across model families, but further validation on proprietary systems would be valuable to assess how broadly the observed trends extend. Second, while \textsc{ACE} provides three complementary views for accuracy-controlled comparison, each view has its own tradeoff: IA offers strict instance-level control but depends on shared-outcome coverage, DA retains the full sample through reweighting but relies on distributional alignment, and CA exposes candidate-pool and self-preference effects but should be read as diagnostic rather than as a replacement for own-output calibration. Accordingly, we interpret the three views jointly rather than treating any single view as a complete solution. Third, \textsc{ACE} is intended as a diagnostic framework, and connecting its accuracy-controlled rankings to downstream applications such as selective prediction, abstention, or routing remains future work. Finally, our main experiments use a binary correctness variable, $z_i \in \{0,1\}$, which is a natural fit for multiple-choice and exact-match benchmarks but less directly applicable to tasks where performance is graded with partial credit or other continuous measures.
In the Appendix, we extend the framework to graded correctness and evaluate this extension on two datasets. Broader evaluation in such settings remains an important direction for future research.

\section*{Ethics Statement}
Our research adheres to strict ethical guidelines. We verified the licenses of all software and datasets used in this study to ensure full compliance with their terms. No privacy concerns have been identified. We have conducted a thorough assessment of the project and do not anticipate any further risks. We only use LLMs for grammar checking during the paper writing.

\section*{Acknowledgments}
This work was supported by the Chinese NSF General Program (No.~62572129).

\bibliography{anthology,custom}

\clearpage
\appendix

\section{General Analysis}
\subsection{Theoretical Analysis}
\label{app:general_analysis}
\paragraph{Why accuracy is a confounder}
Global calibration metrics depend not only on how confidence is distributed, but also on the proportion of correct versus incorrect predictions. 
We formalize this mechanism in a deliberately controlled limit case, which is weaker than requiring uniform confidence across all predictions but is not intended as a full model of real LLM behavior.

\begin{assumption}[Shared Conditional Confidence]
\label{assump:shared}
Two models $M_A$ and $M_B$, evaluated on the same dataset, share identical conditional confidence distributions:

\begin{itemize}[leftmargin=*]
\item \emph{Correct} ($z=1$): confidence $c \sim P_1(c)$, with mean $\mu_1 = \mathbb{E}[c \mid z=1]$ and variance $\sigma_1^2 = \mathrm{Var}(c \mid z=1)$.
\item \emph{Incorrect} ($z=0$): confidence $c \sim P_0(c)$, with mean $\mu_0 = \mathbb{E}[c \mid z=0]$ and variance $\sigma_0^2 = \mathrm{Var}(c \mid z=0)$.
\end{itemize}
The two models differ solely in their accuracy $a = \mathrm{Acc}(M)$.
\end{assumption}
Under this assumption, the two models exhibit identical confidence \emph{behavior} conditioned on correctness, differing only in how often they are right. 
This isolates accuracy as the only source of variation: any resulting calibration difference is attributable to accuracy rather than to intrinsic differences in uncertainty estimation. We use this as an illustrative sufficient case, not as a claim that real LLMs differ only in accuracy.

\begin{proposition}[Limit-case: BS Linear in Accuracy]
\label{prop:bs}
Under Assumption~\ref{assump:shared}, the Brier Score is a linear 
function of accuracy $a$. Let $L_0 = \sigma_0^2 + \mu_0^2$, $\Delta_L = \sigma_1^2 + (1-\mu_1)^2 - \sigma_0^2 - \mu_0^2$. Then 
$$
\mathrm{BS}(a) = L_0 + a \cdot \Delta_L.
$$
\end{proposition}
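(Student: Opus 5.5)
The plan is to treat the Brier Score as an expectation over the mixture of correct and incorrect predictions and to apply the law of total expectation. Under Assumption~\ref{assump:shared}, the model's predictions form a two-component mixture. A fraction $a$ of them have $z=1$ with $c\sim P_1$, and a fraction $1-a$ have $z=0$ with $c\sim P_0$. I will read $\mathrm{BS}$ as the population quantity $\mathbb{E}[(c-z)^2]$. Equivalently, in the finite-sample case I interpret the empirical averages within each group as the conditional moments $\mu_z,\sigma_z^2$. Then
\[
\mathrm{BS}(a) = a\,\mathbb{E}[(c-1)^2\mid z=1] + (1-a)\,\mathbb{E}[c^2\mid z=0].
\]
The key point is that the two conditional expectations do not depend on $a$, because the conditional distributions are shared across models by assumption.

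Next I compute each conditional second moment with the bias--variance identity $\mathbb{E}[X^2]=\mathrm{Var}(X)+(\mathbb{E}X)^2$.

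\begin{itemize}
\item For the correct group, take $X=1-c$. This gives $\mathbb{E}[(1-c)^2\mid z=1]=\sigma_1^2+(1-\mu_1)^2$, since $\mathrm{Var}(1-c)=\mathrm{Var}(c)$.
\item For the incorrect group, take $X=c$. This gives $\mathbb{E}[c^2\mid z=0]=\sigma_0^2+\mu_0^2=L_0$.
\end{itemize}

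Substituting these into the mixture expression gives
\[
\mathrm{BS}(a)=a\bigl(\sigma_1^2+(1-\mu_1)^2\bigr)+(1-a)L_0 .
\]
Regrouping the terms yields $L_0 + a\bigl(\sigma_1^2+(1-\mu_1)^2-L_0\bigr)=L_0+a\Delta_L$, which is the claimed linear form.

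The only delicate step is justifying the mixture decomposition with $a$-independent conditional terms. This is exactly where the assumption is used. In the empirical version, one should note that the per-group sample means and variances, with the population normalization $1/n$, reproduce the identity exactly. I would state the result at the population or expectation level to avoid sample-variance conventions. Finally, I would remark that the sign of $\Delta_L$ determines the monotonicity. This recovers the toy case in Section~\ref{sec:background} with $\sigma_0=\sigma_1=0$ and $\mu_0=\mu_1=q$, where $\Delta_L=1-2q$.
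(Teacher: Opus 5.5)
Your proposal is correct and follows the paper's proof essentially step for step: you split $\mathrm{BS}$ by the law of total expectation into $a\,\mathbb{E}[(c-1)^2\mid z=1]+(1-a)\,\mathbb{E}[c^2\mid z=0]$, apply the bias--variance identity to each conditional term, and regroup into $L_0+a\Delta_L$. Your note on population versus finite-sample normalization is a sensible clarification that the paper leaves implicit.
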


\begin{proof}
By the law of total expectation, decompose the Brier Score by correctness:
\begin{equation}
\begin{aligned}
\mathrm{BS}
&= \mathbb{E}\!\left[(c - z)^2\right] \notag = a \cdot \mathbb{E}\!\left[(c-1)^2 \mid z=1\right] \\
 &+ (1-a) \cdot \mathbb{E}\!\left[c^2 \mid z=0\right].
 \label{eq:bs_decomp}
\end{aligned}    
\end{equation}
For the two conditional terms, applying the bias-variance decomposition gives
\begin{equation*}
\begin{aligned}
\mathbb{E}\!\left[(c-1)^2 \mid z=1\right]
&= \mathrm{Var}(c \mid z=1) \\
&+ \bigl(\mathbb{E}[c\mid z=1]-1\bigr)^2\\
&= \sigma_1^2 + (1-\mu_1)^2, \\
\mathbb{E}\!\left[c^2 \mid z=0\right]
&= \mathrm{Var}(c \mid z=0) \\
&+ \bigl(\mathbb{E}[c\mid z=0]\bigr)^2
 = \sigma_0^2 + \mu_0^2.
\end{aligned}
\end{equation*}
Substituting into Equation \eqref{eq:bs_decomp}, we have 
\begin{equation*}
\begin{aligned}
&\mathrm{BS}(a)
= a\bigl[\sigma_1^2 + (1-\mu_1)^2\bigr] 
 + (1-a)\bigl[\sigma_0^2 + \mu_0^2\bigr] \\
&= \bigl[\sigma_0^2 + \mu_0^2\bigr] 
 + a\bigl[\sigma_1^2 + (1-\mu_1)^2 - \sigma_0^2 - \mu_0^2\bigr] \\
&= L_0 + a \cdot \Delta_L. 
\end{aligned}
\end{equation*}
\end{proof}

\begin{remark}[Slope condition is easily satisfied]
The condition $\Delta_L < 0$ holds when correct predictions 
deviate less from their ideal confidence than incorrect ones. 
Since the ideal confidence is $1$ for correct predictions and 
$0$ for incorrect ones, this reduces to
$$
\sigma_1^2 + (1-\mu_1)^2 < \sigma_0^2 + \mu_0^2,
$$
which is readily satisfied in practice: well-functioning models 
assign higher confidence to correct predictions ($\mu_1 > \mu_0$), 
making $(1-\mu_1)^2$ small, while overconfidence on incorrect 
predictions keeps $\mu_0^2$ substantial. Consequently, higher 
accuracy mechanically improves BS even when the underlying 
confidence behavior is identical.
\end{remark}

\begin{remark}[Constant-confidence case]
When $P_1$ and $P_0$ both degenerate to a point mass at $q$,
i.e., $\mu_1=\mu_0=q$ and $\sigma_1=\sigma_0=0$, Proposition~\ref{prop:bs} simplifies to
\[
    \mathrm{BS}(a) = q^2 + a(1-2q),
\]
recovering the toy example of Section~\ref{sec:background}.
\end{remark}

\paragraph{Extension to ECE.}
Unlike the Brier Score, ECE includes binning and absolute values, so its dependence on accuracy is not globally linear. Under a fixed binning and a common overconfidence regime, however, the mechanism can be stated formally.

\begin{proposition}[ECE confound under overconfidence]
\label{prop:ece_confound}
Suppose Assumption~\ref{assump:shared} holds, $a_B\geq a_A$, and a fixed binning is used. If every populated bin is overconfident for both models, then
\[
\mathrm{ECE}_A-\mathrm{ECE}_B
=(a_B-a_A)(1-\mu_1+\mu_0)\geq 0.
\]
Thus, with identical confidence behavior conditioned on correctness, higher accuracy alone cannot increase ECE under this sufficient condition.
\end{proposition}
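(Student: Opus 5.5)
The plan is to work at the population level, as in the proof of Proposition~\ref{prop:bs}, and to exploit that under the overconfidence hypothesis the absolute values in ECE can be dropped, which makes ECE linear in the bin-level quantities. Fix the binning $\{I_b\}_{b=1}^B$. For a model with accuracy $a$, the mass of bin $b$ is $\pi_b(a)=a\,p_{1b}+(1-a)\,p_{0b}$, where $p_{zb}=\Pr_{P_z}(c\in I_b)$. By Assumption~\ref{assump:shared}, $p_{1b}$ and $p_{0b}$ are the same for both models. Within bin $b$, the accuracy is $a\,p_{1b}/\pi_b(a)$ and the mean confidence is
\[
\bigl(a\,p_{1b}m_{1b}+(1-a)\,p_{0b}m_{0b}\bigr)/\pi_b(a),
\]
where $m_{zb}=\mathbb{E}_{P_z}[c\mid c\in I_b]$.

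\textbf{Step 1: remove the absolute values.} Overconfidence of every populated bin means $\mathrm{conf}(S_b)\ge \mathrm{acc}(S_b)$ for both models. Hence
\[
\mathrm{ECE}(a)=\sum_b \pi_b(a)\bigl(\mathrm{conf}_b-\mathrm{acc}_b\bigr)=\sum_b\bigl[a\,p_{1b}m_{1b}+(1-a)\,p_{0b}m_{0b}-a\,p_{1b}\bigr],
\]
where the weights $\pi_b(a)$ cancel the denominators. Empty bins contribute zero, so they are harmless.

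\textbf{Step 2: sum over bins using the law of total expectation.} We have $\sum_b p_{zb}m_{zb}=\mu_z$ and $\sum_b p_{1b}=1$. This gives
\[
\mathrm{ECE}(a)=a\mu_1+(1-a)\mu_0-a=\mu_0-a(1-\mu_1+\mu_0),
\]
which is just mean confidence minus accuracy. The result is linear in $a$, paralleling Proposition~\ref{prop:bs}.

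\textbf{Step 3: difference the two models.} Subtracting gives $\mathrm{ECE}_A-\mathrm{ECE}_B=(a_B-a_A)(1-\mu_1+\mu_0)$. Nonnegativity follows because $a_B\ge a_A$ and $1-\mu_1+\mu_0\ge 0$, the latter holding since $\mu_1\le 1$ and $\mu_0\ge 0$ for confidences in $[0,1]$.

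\textbf{Main obstacle.} The delicate point is Step 1. The hypothesis must be applied to \emph{both} models' bins, since each model's bin accuracy depends on its own $a$. I will also note explicitly that the sufficient condition concerns only populated bins. The same computation holds for empirical ECE if the empirical conditional distributions coincide; otherwise the statement is read at the population level, as in Proposition~\ref{prop:bs}.
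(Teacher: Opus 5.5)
Your proposal is correct and follows essentially the same route as the paper's proof: drop the absolute values using overconfidence of every populated bin, reduce ECE to mean confidence minus accuracy, $\mu_0 - a(1-\mu_1+\mu_0)$, and take the difference between the two models. Your bin-level decomposition via $p_{zb}$ and $m_{zb}$ simply spells out the step $\sum_b \Pr(S_b)(\mathrm{conf}(S_b)-\mathrm{acc}(S_b)) = \mathbb{E}[c]-\mathbb{E}[z]$, which the paper states without derivation.
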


\begin{proof}
Since every populated bin is overconfident, the absolute value in each bin can be removed in the same direction. Therefore,
\begin{align*}
\mathrm{ECE}(a)
&=\sum_b \Pr(S_b)
\bigl(\mathrm{conf}(S_b)-\mathrm{acc}(S_b)\bigr)\\
&=\mathbb{E}[c]-\mathbb{E}[z]\\
&=a\mu_1+(1-a)\mu_0-a\\
&=\mu_0-a(1-\mu_1+\mu_0).
\end{align*}
Taking the difference between models $A$ and $B$ gives
\[
\mathrm{ECE}_A-\mathrm{ECE}_B
=(a_B-a_A)(1-\mu_1+\mu_0).
\]
The result is nonnegative because $a_B\geq a_A$, $\mu_1\leq1$, and $\mu_0\geq0$.
\end{proof}
\begin{remark}[Constant-confidence ECE case]
When $P_1$ and $P_0$ both degenerate to a point mass at $q$, we have $\mu_1=\mu_0=q$. Under the overconfident regime $a\leq q$ considered in Section~\ref{sec:background}, Proposition~\ref{prop:ece_confound} reduces to
\[
\mathrm{ECE}(a)=q-a,
\]
recovering the ECE toy example in Section~\ref{sec:background}. Consequently, for two models satisfying $a_A\leq a_B\leq q$,
\[
\mathrm{ECE}_A-\mathrm{ECE}_B=a_B-a_A\geq0.
\]
\end{remark}
\paragraph{When ranking reversal occurs}
For the Brier Score, let
\[
L_{m,z}
=
\mathbb{E}\!\left[(c-z)^2\mid z\right]
\]
denote model $m$'s conditional loss for outcome $z$, and let
\[
R_m(t)
=
tL_{m,1}+(1-t)L_{m,0}
\]
be its loss under a common target accuracy $t$. Define
\[
\begin{aligned}
\Delta_{\mathrm{raw}}
&=R_A(a_A)-R_B(a_B),\\
\Delta_t
&=R_A(t)-R_B(t).
\end{aligned}
\]
Direct expansion gives
\[
\Delta_{\mathrm{raw}}
=
\Delta_t+C_t,
\]
where
\[
\begin{aligned}
C_t
={}&(a_A-t)
    (L_{A,1}-L_{A,0})\\
&-(a_B-t)
    (L_{B,1}-L_{B,0}).
\end{aligned}
\]
When $\Delta_t\neq 0$, a strict ranking reversal occurs exactly when
\[
\begin{aligned}
\operatorname{sign}(C_t)
&=-\operatorname{sign}(\Delta_t),\\
|C_t|&>|\Delta_t|.
\end{aligned}
\]
Thus, a reversal occurs when the contribution from the models'
different outcome compositions opposes and exceeds their loss gap at
the common target accuracy.

For ECE, the result above exhibits the same qualitative mechanism
under its stated condition: changing the mixture of correct and
incorrect outcomes can change ECE even when conditional confidence
behavior is held fixed, so its ranking may likewise change after
accuracy alignment.

\subsection{Empirical Analysis}
\label{app:empirical_analysis}

\paragraph{Empirical Setup.}
We verify this empirically in Figure~\ref{fig:head}. Across five 
benchmarks spanning knowledge-based (TriviaQA, FreshQA) and 
reasoning-based tasks (MMLU-Pro, GPQA, LiveBench), we compare the ECE gap against the accuracy gap between stronger models and weaker models, covering both scale-based pairs (Qwen2.5 7B vs.\ 72B, LLaMA~3.1 8B vs.\ 70B, GPT-OSS 20B vs.\ 120B) and thinking vs.\ non-thinking pairs (Qwen3-8B and GPT-OSS under 
high/low reasoning settings). The results reveal a strong negative correlation ($\rho=-0.670$, $n=40$, 95\% CI $[-0.81,-0.45]$, $p<10^{-5}$) between accuracy differences and raw ECE differences, consistent with our theoretical analysis: models with higher accuracy mechanically obtain better ECE scores even when their underlying confidence behavior is not correspondingly superior.

\section{Justification of Distribution-Aligned Calibration}
\label{app:ipw-justification}

The importance-weighting principle underlying Distribution-Aligned (DA) originates in survey sampling \citep{horvitz1952generalization} and is widely used in causal inference to correct for confounded comparisons between treatments and outcomes~\citep{rosenbaum1983central}. We briefly justify its application in our setting by making the analogy explicit.

\paragraph{Standard causal inference formulation.}
Consider a binary treatment $T \in \{0,1\}$, an outcome $Y$, and a confounder $X$ that influences both $T$ and $Y$. The na\"ive comparison $\mathbb{E}[Y \mid T=1] - \mathbb{E}[Y \mid T=0]$ is biased whenever the distribution of $X$ differs across treatment groups. DA removes this bias by reweighting each observation by the inverse of its treatment probability (propensity score), so that the weighted confounder distribution is balanced across groups.

\paragraph{Mapping to calibration comparison.}
In our setting, the role of the \emph{treatment} $T$ is played by model identity $m \in \{A, B\}$: comparing calibration across models is analogous to comparing outcomes across treatment arms. 
The \emph{confounder} $X$ corresponds to instance-level correctness $z_i^{(m)}$, which aggregates to model accuracy $a_m = \mathrm{Acc}(M_m)$; just as a covariate that differs across treatment groups biases the na\"ive comparison, differing accuracy compositions confound the raw calibration comparison. The \emph{outcome} $Y$ maps to the per-instance calibration loss, \eg $(c_i^{(m)} - z_i^{(m)})^2$ for the Brier Score. 
The \emph{propensity score} $P(T \mid X)$ corresponds to the probability of each correctness status in the source model, namely $a_A$ for correct predictions ($z=1$) and $1 - a_A$ for incorrect predictions ($z=0$). 
Finally, the \emph{target population} is the accuracy level of the reference model $a_B$, to which we wish to adjust the distribution of the results of the source model.
Under this mapping, the weights
\[
w_i^{(A)} =
\begin{cases}
a_B \,/\, a_A & \text{if } z_i^{(A)} = 1, \\[4pt]
(1 - a_B) \,/\, (1 - a_A) & \text{if } z_i^{(A)} = 0,
\end{cases}
\]
are standard importance weights that transform the correctness distribution of $M_A$ to match the target accuracy $a_B$. For each correctness class $z \in \{0,1\}$, the weight equals the target proportion divided by the source proportion, which is precisely the density ratio form used in classical importance weighting.

\paragraph{Assumptions.}
The validity of DA here requires two standard conditions:
\begin{inparaenum}[\it 1)] 
\item ~\emph{Positivity}: $0 < a_m < 1$, which holds trivially on any nontrivial benchmark; and
\item ~\emph{Conditional exchangeability}: conditioned on correctness status $z$, the confidence distribution is comparable across the accuracy shift. 
\end{inparaenum} 

\section{Instruction Prompt Examples}

\paragraph{Answer Generation Prompts}
\label{app:answer-prompts}
All models share the same answer-generation prompt family, with template selection determined by task type; reasoning behavior is controlled via model parameters rather than prompt wording. Specifically, we apply the multiple-choice template to GPQA Diamond and MMLU-Pro, the reasoning template to LiveBench, and the open-ended template to FreshQA and TriviaQA. The method-specific confidence-elicitation templates and their use are described in Appendix~\ref{app:confidence-details}.
\begin{table}[h]
\small
  \centering
    \begin{tabularx}{\linewidth}{X}
    \toprule
    \rowcolor[gray]{0.95}\multicolumn{1}{c}{\textbf{I: MULTIPLE-CHOICE TEMPLATE}} \\
    \midrule
    Task: Solve the following multiple-choice problem. \\
    The answer should be a single letter (A, B, C, D, etc.). \\
    Provide your final answer in the following format: \\
    \textbackslash boxed\{A\} \\[4pt]
    Problem: \\
    \{problem\} \\

    \midrule
    \rowcolor[gray]{0.95}\multicolumn{1}{c}{\textbf{II: REASONING TEMPLATE}} \\
    \midrule
    Task: Solve the following reasoning problem. \\
    Provide your final answer in the following XML format: \\
    \texttt{<answer>final answer here</answer>} \\[4pt]
    Problem: \\
    \{problem\} \\

    \midrule
    \rowcolor[gray]{0.95}\multicolumn{1}{c}{\textbf{III: OPEN-ENDED TEMPLATE}} \\
    \midrule
    Task: Answer the following factual question directly. \\
    This is NOT a multiple-choice question. Provide a direct factual answer. \\
    Provide your final answer in the following XML format: \\
    \texttt{<answer>final answer here</answer>} \\[4pt]
    Question: \\
    \{problem\} \\
    \bottomrule
    \end{tabularx}
  \caption{Three prompt templates used for generation across different task types.}
  \label{tab:query_gen}
\end{table}

\section{Implementation Details}
\label{app:exp-details}
\subsection{Model and Inference Details}
\label{app:model-details}
We evaluate 14 locally deployed models spanning four families: Qwen2.5, LLaMA~3.1, Qwen3, and GPT-OSS. Table~\ref{tab:freshqa} lists all models. Qwen3 and GPT-OSS each expose a reasoning-intensity control: for Qwen3, the \texttt{enable\_thinking} flag in the chat template switches between thinking (\texttt{True}) and non-thinking (\texttt{False}) modes; for GPT-OSS, the \texttt{reasoning\_effort} parameter selects between \texttt{high} and \texttt{low} settings. This yields 14 model configurations in total.
All models are served locally via vLLM~\citep{kwon2023efficientmemorymanagementlarge} in \texttt{bfloat16} precision. 
Both answer-generation and confidence-estimation stages share the same decoding configuration: temperature $= 1.0$, top-$p = 1.0$, and top-$k = -1$, with \texttt{max\_tokens} set to \texttt{max\_model\_len}. Context lengths are 32,768 tokens for Qwen2.5 and Qwen3, 40,960 for LLaMA~3.1, and 131,072 for GPT-OSS.

\subsection{Dataset Details and Evaluation Protocol}
\label{app:data-details}
We evaluate on five datasets: FreshQA, TriviaQA, GPQA Diamond, LiveBench, and MMLU-Pro.
\begin{table}[H]
\setlength{\tabcolsep}{9pt}
    \centering
    \small
    \begin{tabular}{l l r}
       \toprule
        \rowcolor[gray]{0.95}Dataset & Type & Size \\
        \midrule
        FreshQA & Knowledge & 600 \\
        TriviaQA & Knowledge & 1,500 \\
        GPQA Diamond & Reasoning & 198 \\
        LiveBench (reasoning subset) & Reasoning & 200 \\
        MMLU-Pro & Reasoning & 2,841 \\
        \bottomrule
    \end{tabular}
    \caption{Summary of the evaluation datasets used in our experiments.}
    \label{tab:dataset_summary}
\end{table}

\paragraph{FreshQA.}
We use the \texttt{2025-10-07} version of FreshQA and evaluate on 600 questions. This dataset is treated as an open-ended factual QA benchmark. Correctness is determined by LLM-based grading.

\paragraph{TriviaQA.}
We sample 1,500 examples from the \texttt{unfiltered-nocontent} validation split of TriviaQA. This dataset is treated as open-ended factual QA. We use alias-aware exact match, where a prediction is counted as correct if it matches any acceptable alias.

\paragraph{GPQA Diamond.}
We use the standard GPQA Diamond split, containing 198 examples. This dataset is treated as an expert-level multiple-choice benchmark. Correctness is determined by exact matching between the extracted option letter and the gold answer.

\paragraph{LiveBench.}
We evaluate on 200 examples from the reasoning subset of LiveBench. This benchmark is treated as a reasoning / fill-in task. Correctness is determined by rule-based matching.

\paragraph{MMLU-Pro.}
We evaluate on 2,841 examples from the computer science, chemistry, and physics domains of MMLU-Pro. This dataset is treated as a multiple-choice benchmark. Correctness is determined by exact matching between the extracted option letter and the gold answer.

\subsection{Confidence Elicitation Details}
\label{app:confidence-details}
We consider three confidence estimation methods: verbalized confidence, P(True), and self-consistency. The answer-generation prompts used to produce the responses evaluated here are listed in Appendix~\ref{app:answer-prompts}.
\paragraph{Verbalized Confidence.}
The model is presented with the original question alongside its own full response and asked to assign a confidence score from 0 to 100. We use the full \texttt{model\_response} rather than the extracted short answer as the answer source. We extract the numeric value inside \texttt{\textbackslash boxed\{\}} and normalize it to $[0,1]$ by dividing by 100. 
\begin{table}[h]
\small
  \centering
    \begin{tabularx}{\linewidth}{X}
    \toprule
    \rowcolor[gray]{0.95}\multicolumn{1}{c}{\textbf{VANILLA-VERB TEMPLATE}} \\
    \midrule
    \{Question\} \\[4pt]
    Proposed Answer: \{Answer\} \\
    How confident are you that the proposed answer is correct? \\[4pt]
    The confidence score should be a number from 0 (completely unsure) to 100 (absolutely certain). \\
    Your response MUST strictly adhere to this format: \\
    \#\#\# Confidence: \textbackslash boxed\{Your confidence score from 0-100.\} \\[4pt]
    Now it is your turn to give a response: \\
    \bottomrule
    \end{tabularx}
  \caption{Vanilla verbalized confidence estimation template.}
  \label{tab:vanilla_verb}
\end{table}

\paragraph{P(True).}
The model is presented with the original question alongside its own full response and asked to judge whether the proposed answer is true or false. Confidence is defined as the probability assigned to token \texttt{A} (``True''): we request \texttt{logprobs=20}, extract the log-probability of token \texttt{A}, and convert it to a linear probability. 
\begin{table}[h]
\small
  \centering
    \begin{tabularx}{\linewidth}{X}
    \toprule
    \rowcolor[gray]{0.95}\multicolumn{1}{c}{\textbf{BINARY VERIFICATION TEMPLATE}} \\
    \midrule
    \{Question\} \\[4pt]
    Proposed Answer: \{Answer\} \\
    Is the proposed answer: \\[4pt]
    \hspace{1em} A. True \\
    \hspace{1em} B. False \\[4pt]
    Output format: A or B only \\
    (single uppercase letter; no spaces, punctuation, or explanation): \\
    \bottomrule
    \end{tabularx}
  \caption{Binary verification template for answer correctness judgment.}
  \label{tab:binary_verif}
\end{table}

\paragraph{Self-Consistency.}
For self-consistency, we generate 10 independent responses for each question using the same answer-generation prompt and decoding setup. The confidence of an example is defined as the fraction of runs whose extracted answer matches the extracted answer from the first run:
\[
c_i=\frac{\sum_{k=1}^{10}\mathbf{1}[\hat{a}_k=\hat{a}_1]}{10}.
\]
Here, $\hat{a}_1$ denotes the extracted answer from the first run and $\hat{a}_k$ denotes the extracted answer from run $k$. If an example is marked as \texttt{is\_attempted=False} in the first run, we skip it. If the answer is missing in later runs, the denominator remains 10.

\clearpage
\section{Additional Analyses}
\label{app:additional-analyses}
\subsection{Reversal Rates on Knowledge and Reasoning Benchmarks}

We further investigate whether ranking reversals are more prevalent on reasoning than knowledge benchmarks. Table~\ref{tab:reasoning_reversal_by_method_view} shows this pattern holds consistently under the DA and IA views, but varies under CA. Specifically, verbalized confidence yields higher reasoning reversal rates across all alignment views, whereas SC and $P(\mathrm{true})$ diverge under CA.
Additionally, Table~\ref{tab:reasoning_reversal_by_pair_type} indicates that, aggregated across all methods and views, this reasoning-knowledge gap is driven primarily by mode/effort differences between evaluated models, rather than size differences.

\begin{table}[H]
\setlength{\tabcolsep}{6.2pt}
    \centering
    \small
    \begin{tabular}{lcccc}
        \toprule
        \rowcolor[gray]{0.95}Method & View & KRR & RRR & $\Delta$ (RRR$-$KRR) \\
        \midrule
        \multirow{3}{*}{SC}
                      & DA &  \phantom{0}6.2 & 41.7 & $+$35.4 \\
                      & IA &  18.8            & 41.7 & $+$22.9 \\
                      & CA &  56.2            & 41.7 & $-$14.6 \\
        \midrule
        \multirow{3}{*}{Verbalized}
                      & DA &  43.8 & 58.3 & $+$14.6 \\
                      & IA &  50.0 & 62.5 & $+$12.5 \\
                      & CA &  18.8 & 33.3 & $+$14.6 \\
        \midrule
        \multirow{3}{*}{$P(\text{true})$}
                      & DA &  37.5 & 54.2 & $+$16.7 \\
                      & IA &  25.0 & 25.0 & $+$0.0 \\
                      & CA &  50.0 & 20.8 & $-$29.2 \\
        \bottomrule
    \end{tabular}
    \caption{Knowledge reversal rate (KRR) and reasoning reversal rate (RRR) broken down by confidence method and alignment view. For $\Delta (\text{RRR} - \text{KRR})$, positive values indicate reversal is more frequent on reasoning datasets.}
    \label{tab:reasoning_reversal_by_method_view}
\end{table}

\begin{table}[H]
      \centering
      \small
      \setlength{\tabcolsep}{6pt}
      \renewcommand{\arraystretch}{1.12}
      \begin{tabular}{lcccc}
          \toprule
          \rowcolor[gray]{0.95}
          Pair type & KRR & RRR & $\Delta$ (RRR$-$KRR) & \#Pairs \\
          \midrule
          mode/effort & 33.3 & 45.2 & $+$11.9 & 5 \\
          size        & 35.2 & 37.0 & $+$1.9  & 3 \\
          \bottomrule
      \end{tabular}
      \caption{Average reversal gap by pair type after merging all confidence methods and aligned
  views. Positive $\Delta$ values indicate that reversal is more frequent on reasoning datasets.}
      \label{tab:reasoning_reversal_by_pair_type}
  \end{table}
  
\subsection{Reversal Rates Across Accuracy-Gap Bins}
\label{app:accgap-analysis}

\begin{figure*}[ht!]
    \centering
    \includegraphics[width=0.95\linewidth]{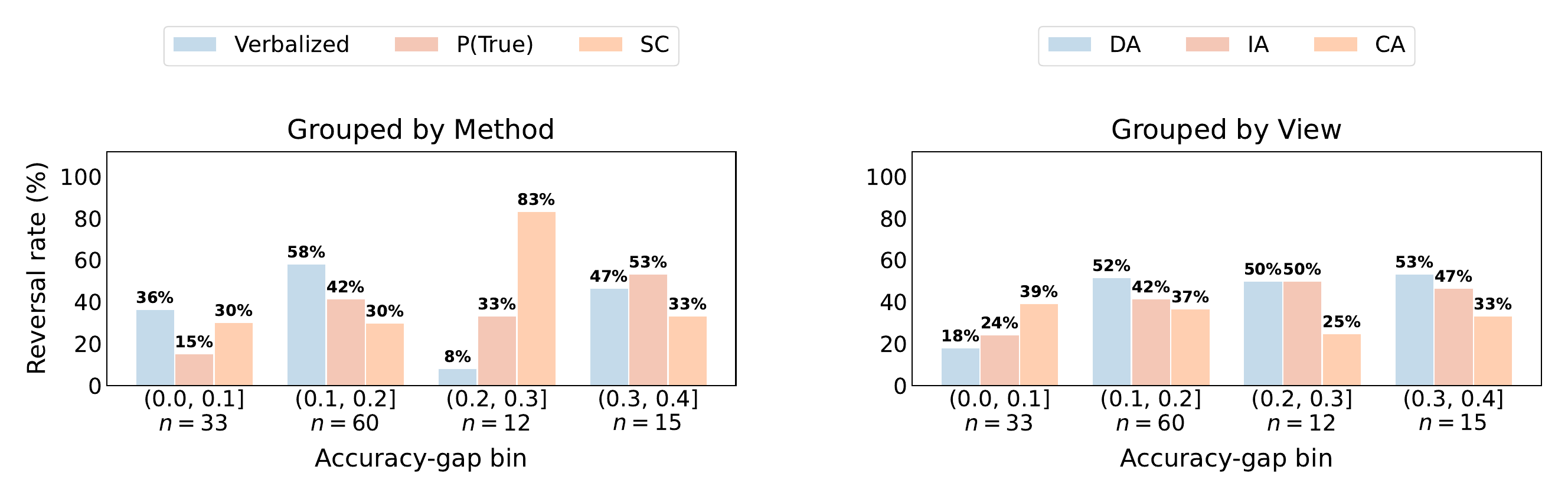}
    \caption{Reversal rate by accuracy-gap bin. The combined figure presents the same pattern from two complementary perspectives: grouped by confidence method and grouped by aligned view. In both views, reversal increases once the accuracy gap exceeds 0.1, but does not exhibit a clear monotonic increase across higher bins.}
    \label{fig:reversal_by_accgap_bin_combined}
\end{figure*}

We further examine how reversal rates change as the accuracy gap between the
two compared models increases. Figure~\ref{fig:reversal_by_accgap_bin_combined}
shows that the pattern is better described as threshold-like than monotonic.

More concretely, when the accuracy gap is below 0.1, reversal rates are
generally much lower across both the method-grouped and view-grouped panels.
Once the gap exceeds 0.1, reversal becomes noticeably more common and remains
at a substantially higher level. This indicates that reversal is much more
likely once the two models are separated by a non-trivial gap in task
accuracy.

At the same time, the figure does not support the stronger claim that
reversal keeps increasing as the accuracy gap becomes larger and larger. After
crossing the 0.1 threshold, the higher-gap bins fluctuate rather than rising
monotonically. In other words, the key empirical dividing line is whether the
accuracy gap is above 0.1, not which larger bin it falls into.

\subsection{Both-Wrong and Both-Right Confidence Distributions}
\label{app:confdist-analysis}
Figure~\ref{fig:both_wrong_vs_both_right_hist} reveals several consistent
patterns across methods and datasets. First, all three confidence methods show
substantial separation between both-wrong and both-right examples: both-right
mass is generally concentrated toward the high-confidence end, while both-wrong
mass is shifted to the left. In this sense, all three methods preserve a
meaningful distinction between clearly correct and clearly incorrect shared
outcomes.

At the same time, the three methods differ in how sharply and how reliably
they make this distinction. Verbalized confidence and $P(\mathrm{true})$ are
more polarized, with much of their mass pushed toward the two extremes. This
produces visually sharp distributions, but it also reveals substantial
overconfidence: both methods still assign considerable probability mass near
the high-confidence end even on both-wrong examples. Among the three,
verbalized confidence appears to show the weakest separation overall, with the
red and blue mean lines often relatively close to each other, indicating that
incorrect shared outcomes still receive fairly high confidence. $P(\mathrm{true})$
shows a similar but somewhat less severe pattern.

SC behaves differently. Its distributions are broader and less concentrated at
the extremes, so the distinction between both-wrong and both-right is less
``all-or-nothing'' but often more conservative. In particular, SC places less
mass on extremely high confidence for both-wrong examples, making its
overconfidence visibly weaker than that of verbalized confidence and
$P(\mathrm{true})$. This does not mean that SC is free of overconfidence---some
datasets still show substantial overlap---but the figure suggests that its
errors are less driven by extreme high-confidence mistakes.

\begin{figure*}[ht!]
    \centering
    \includegraphics[width=1.0\linewidth]{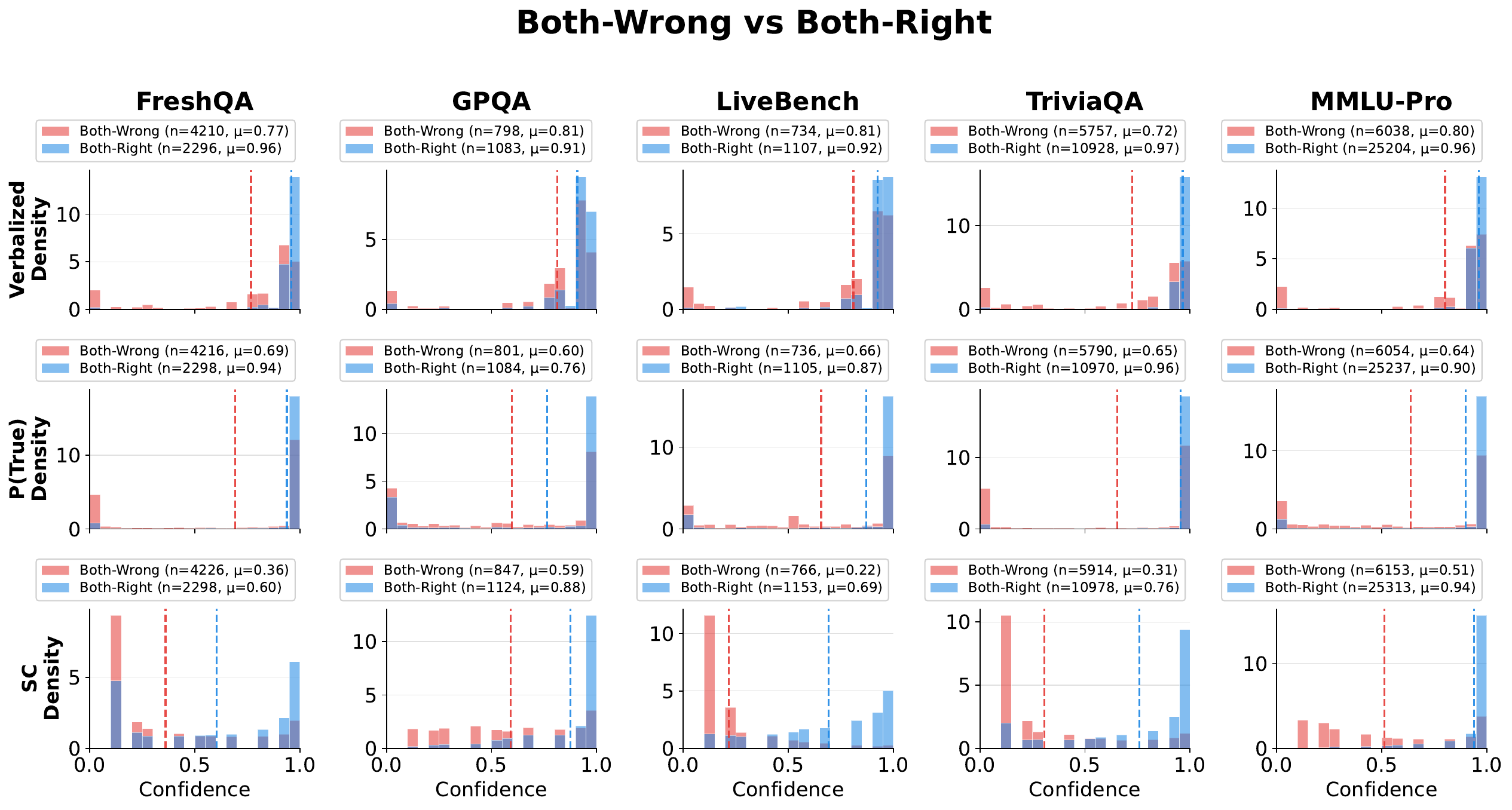}
    \caption{Histogram comparison of confidence distributions on both-wrong and both-right examples across three confidence elicitation methods and five datasets. The figure is intended as a descriptive view of how well the two shared-outcome regions are separated, and where overconfidence on both-wrong examples may appear.}
    \label{fig:both_wrong_vs_both_right_hist}
\end{figure*}

\subsection{Instance-Aligned Retention}
\label{app:ia-retention}

Since IA compares models only on shared-outcome examples, one concern is that
the retained subset might become too small to support stable conclusions. We
therefore report the retained sample size for each model-pair--dataset cell.
Retention is defined as $|D_{\mathrm{IA}}|/|D|$, where
$D_{\mathrm{IA}}=D_{\mathrm{both\mbox{-}right}}\cup D_{\mathrm{both\mbox{-}wrong}}$.
As shown in Table~\ref{tab:ia-retention}, IA retains substantial coverage
across all 40 model-pair--dataset cells: the median retention is 78.0\%,
the IQR is [66.3\%, 82.2\%], and the minimum is 55.2\%. No cell falls below
50\%, indicating that the IA results are not driven by narrow retained subsets. Together with the aggregate correlation in Figure~\ref{fig:head}, this suggests that the observed reversals reflect a systematic accuracy-related confound rather than sampling noise.

\begin{table*}[t]
    \centering
    \small
    \setlength{\tabcolsep}{6.7pt}
    \renewcommand{\arraystretch}{1.2}
    \begin{tabular}{llccccc}
        \toprule
        \rowcolor[gray]{0.95}
        Pair & Type & FreshQA & GPQA & LiveBench & TriviaQA & MMLU-Pro \\
        \midrule
        Qwen3-8B T/N        & Mode   & 474 (79.0) & 135 (68.2) & 113 (56.5) & 1181 (78.7) & 2263 (79.7) \\
        Qwen3-14B T/N       & Mode   & 474 (79.0) & 130 (65.7) & 120 (60.0) & 1229 (81.9) & 2360 (83.1) \\
        Qwen3-32B T/N       & Mode   & 469 (78.2) & 142 (71.7) & 115 (57.5) & 1208 (80.5) & 2369 (83.4) \\
        GPT-OSS-120B H/L    & Effort & 524 (87.3) & 159 (80.3) & 144 (72.0) & 1356 (90.4) & 2574 (90.6) \\
        GPT-OSS-20B H/L     & Effort & 498 (83.0) & 137 (69.2) & 128 (64.0) & 1215 (81.0) & 2377 (83.7) \\
        Qwen2.5 72B/7B      & Size   & 446 (74.3) & 123 (62.1) & 133 (66.5) & 1003 (66.9) & 1797 (63.3) \\
        LLaMA3.1 70B/8B     & Size   & 331 (55.2) & 124 (62.6) & 154 (77.0) & 1119 (74.6) & 1706 (60.0) \\
        GPT-OSS 120B/20B H  & Size   & 480 (80.0) & 168 (84.8) & 179 (89.5) & 1168 (77.9) & 2654 (93.4) \\
        \bottomrule
    \end{tabular}
    \caption{Instance-Aligned retained sample sizes across model-pair--dataset cells. Each cell reports $|D_{\mathrm{IA}}|$ with the retention percentage in parentheses. Dataset sizes are 600 for FreshQA, 198 for GPQA Diamond, 200 for LiveBench Reasoning, 1,500 for TriviaQA, and 2,841 for MMLU-Pro. T/N denotes thinking versus non-thinking, and H/L denotes high versus low reasoning effort.}
    \label{tab:ia-retention}
\end{table*}
\subsection{DA Effective Sample Size}
\label{app:da-ess}

Although DA uses all observed instances, concentrated weights can reduce
its effective precision. Using the Kish effective sample size,
$\mathrm{ESS}=(\sum_i w_i)^2/\sum_i w_i^2$, we obtain a median
$\mathrm{ESS}/N$ of $0.897$ across 120 method--pair--dataset cells.
Overall, $115/120$ cells have $\mathrm{ESS}/N\geq0.5$; the remaining
five fall below this diagnostic threshold, with a minimum of $0.144$. Thus, DA retains high effective precision in most comparisons, with a few cases substantially affected by weight concentration.
\subsection{Bootstrap Analysis of Ranking Reversals}
\label{app:bootstrap-reversals}

We assess sampling uncertainty using 2,000 paired bootstrap samples of
the evaluated questions. For each combination of confidence method,
model pair, and dataset, questions are resampled jointly for the two
models. We then recompute the Raw comparison and all three aligned
views from each resample. IA rebuilds the subset on which the models
share the same outcome, DA re-estimates the outcome proportions and
importance weights, and CA keeps each question together with its
candidate pool. The 2.5th and 97.5th percentiles form the 95\%
confidence intervals for the Raw and aligned gaps.

For each reversal in the original estimates, we also record the
proportion of bootstrap samples in which the Raw and aligned gaps
remain opposite in sign. Among the 720 comparisons, 265 show a
reversal in the original estimates. Their median retention probability
is 91.9\%, and 122 of the 265 remain reversals in at least 95\% of the
bootstrap samples.

\begin{table}[H]
    \centering
    \small
    \setlength{\tabcolsep}{6.2pt}
    \renewcommand{\arraystretch}{1.12}
    \begin{tabular}{lcc}
        \toprule
        \rowcolor[gray]{0.95}
        Metric & Point estimate & Strict criterion \\
        \midrule
        ECE         & 86 (71.7) & 31 (25.8) \\
        Brier Score & 84 (70.0) & 45 (37.5) \\
        \bottomrule
    \end{tabular}
    \caption{Bootstrap results aggregated over 120 cells. Each cell
    corresponds to one combination of confidence method, model pair,
    and dataset. Each entry reports the number of cells, with the
    percentage in parentheses. The point estimate column counts cells
    containing a reversal in at least one of CA, IA, or DA. The strict
    criterion additionally requires, for at least one view, that the
    Raw and aligned 95\% confidence intervals both exclude zero and
    have opposite signs.}
    \label{tab:bootstrap-reversal-summary}
\end{table}
\subsection{Extension to Graded Correctness}
\label{app:graded-correctness}

The binary formulation extends naturally to tasks with partial credit.
Let $z_i\in[0,1]$ denote the graded correctness of prediction $i$, and
interpret the confidence score $c_i\in[0,1]$ as the model's predicted
degree of correctness. The functional forms of the Brier Score and ECE
remain unchanged:
\[
\begin{aligned}
\mathrm{BS}
&= \frac{1}{N}\sum_{i=1}^{N}(c_i-z_i)^2,\\
\mathrm{ECE}
&= \sum_{b=1}^{B}\frac{|S_b|}{N}
\left|\bar z_b-\bar c_b\right|,
\end{aligned}
\]
where $S_b=\{i:c_i\in I_b\}$, and $\bar z_b$ and $\bar c_b$ denote the
mean graded correctness and mean confidence in bin $b$, respectively.
The binary setting is recovered when $z_i\in\{0,1\}$.

IA, DA, and CA extend naturally by replacing binary outcomes with graded
outcomes in their respective constructions. For graded IA, let $h$ be a
prespecified mapping from graded scores to matching labels, and define
\[
D_{\mathrm{IA}}^{\mathrm{graded}}(h)
=
\left\{
i:
h\!\left(z_i^{(A)}\right)
=
h\!\left(z_i^{(B)}\right)
\right\}.
\]
Setting $h(z)=z$ gives exact score matching. If $h$ instead assigns scores
to prespecified bins, it provides coarser alignment that can increase
sample retention. The choice of $h$ controls the trade-off between
alignment granularity and sample retention.

For graded DA, let $g:[0,1]\rightarrow\{1,\ldots,K\}$ map each graded
outcome to an outcome stratum, and let $p_{m,k}$ denote the empirical
proportion of model $m$'s outcomes falling in stratum $k$. Given a common
target distribution $q=(q_1,\ldots,q_K)$ over these strata, the importance
weight assigned to example $i$ for model $m$ is
\[
w_i^{(m)}
=
\frac{q_k}{p_{m,k}},
\qquad
k=g\!\left(z_i^{(m)}\right),
\]
assuming $p_{m,k}>0$ whenever $q_k>0$. These weights align each model's
graded-outcome distribution with the common target $q$ before computing
the calibration metrics.

For graded CA, the shared candidate pool and aggregation procedure remain
unchanged. The existing candidate correctness label $y_i(s)$ is extended
from $\{0,1\}$ to $[0,1]$, and $e_i^{(m)}(s)$ represents model $m$'s
predicted degree of correctness for candidate $s$. The candidate-level
Brier Score and ECE are then computed as in binary CA using these graded
quantities.

\paragraph{Experimental setup.}
To evaluate the graded extension while keeping inference cost manageable,
we compare Qwen2.5-7B and Qwen2.5-72B using verbalized confidence. We use
500 examples from each dataset and normalize the elicited 0--100 confidence
scores to $[0,1]$.

\paragraph{Datasets and graded correctness.}
We evaluate on two datasets with different forms of partial credit:
IF-multi
\citep{pyatkin2025generalizingverifiableinstructionfollowing} and APPS
\citep{hendrycks2021measuring}. IF-multi consists of instruction-following
prompts with multiple automatically verifiable constraints. We use prompts
containing three to five constraints and define graded correctness as the
fraction of constraints satisfied by the response, as determined by the
accompanying rule-based verifiers. APPS is a code-generation benchmark
evaluated through test cases. We sample problems from its official test
split across all three difficulty levels and define graded correctness as
the fraction of official test cases passed by the generated program. In
both datasets, verbalized confidence is elicited for the same quantity used
to define graded correctness: the expected fraction of satisfied constraints
or passed test cases.

\paragraph{Alignment implementation.}
We evaluate graded IA and DA, whose alignments are constructed directly
from graded correctness outcomes and do not require the additional
cross-model candidate evaluation used by CA. For IA, we use exact score
matching on both datasets, corresponding to $h(z)=z$ in the definition
above. For DA, the empirical graded-outcome distribution of Qwen2.5-7B
serves as the target. Because IF-multi has a small discrete set of scores
determined by three to five constraints, we align its exact graded-score
levels. APPS has a denser score distribution because the number of test
cases varies across problems, so we partition $[0,1]$ into 10 equal-width
outcome strata. Every stratum with positive target mass has positive
empirical support under both models, satisfying the support condition
required by graded DA.

\paragraph{Results.}
Qwen2.5-72B achieves higher mean graded correctness than Qwen2.5-7B
on both IF-multi (55.1\% vs.\ 44.1\%) and APPS (53.2\% vs.\ 31.7\%).
Table~\ref{tab:graded-results} reports the corresponding calibration gaps,
defined as the score of Qwen2.5-72B minus that of Qwen2.5-7B. Under Raw
evaluation, both the Brier Score and ECE favor Qwen2.5-72B on both
datasets. On both datasets, IA and DA reverse the Raw ordering for both
metrics, favoring Qwen2.5-7B. These results show that the reversal
phenomenon is not specific to binary correctness and can also arise when
outcomes receive partial credit.

\begin{table}[H]
\setlength{\tabcolsep}{8pt}
\centering
\small
\begin{tabular}{llccc}
\toprule
\rowcolor[gray]{0.95}
Dataset & Metric & Raw $\Delta$ & IA $\Delta$ & DA $\Delta$ \\
\midrule
\multirow{2}{*}{IF-multi}
    & Brier Score & $-6.0$ & $+1.4$ & $+1.5$ \\
    & ECE         & $-5.8$ & $+3.2$ & $+3.1$ \\
\midrule
\multirow{2}{*}{APPS}
    & Brier Score & $-6.3$ & $+6.7$ & $+9.8$ \\
    & ECE         & $-8.0$ & $+5.9$ & $+8.5$ \\
\bottomrule
\end{tabular}
\caption{Calibration gaps under graded correctness. $\Delta$ denotes
Qwen2.5-72B minus Qwen2.5-7B, and all values are multiplied by 100.
Because lower scores indicate better calibration, negative values favor
Qwen2.5-72B and positive values favor Qwen2.5-7B.}
\label{tab:graded-results}
\end{table}
\section{Calibration Metrics}
\label{app:Calibration Metrics}
Tables~\ref{tab:freshqa}--\ref{tab:mmlu} present the full calibration metrics underlying the trends visualized in Figures~\ref{fig:main_results_01} and~\ref{fig:main_results_02}. For each of the five evaluation datasets (FreshQA, TriviaQA, GPQA Diamond, LiveBench Reasoning, and MMLU-Pro), we report accuracy and two calibration measures (\ie Brier Score and ECE), across three confidence elicitation methods (Verbalize, P(True), and Self-Consistency) and four evaluation views: \textbf{Raw}, \textbf{CA}, \textbf{DA} and \textbf{IA}. All accuracy values are computed from a single run with temperature $=1$. Each model pair is followed by a $\Delta$ row showing the difference (Model$_1$ $-$ Model$_2$), where \textcolor{darkgreen}{green} indicates the better-performing direction and \textcolor{darkred}{red} the worse.

\definecolor{darkgreen}{RGB}{0,150,0}
\definecolor{darkred}{RGB}{220,40,40}
\definecolor{lightblue}{RGB}{208,227,251}

\begin{table*}[t!]
\setlength{\tabcolsep}{15.2pt}
\centering

\scalebox{0.72}{
\begin{tabular}{l c cc cc cc cc}
\toprule
\textbf{Model} & \textbf{Acc} & \multicolumn{2}{c}{\textbf{Raw}} & \multicolumn{2}{c}{\textbf{CA}} & \multicolumn{2}{c}{\textbf{DA}} & \multicolumn{2}{c}{\textbf{IA}} \\
\cmidrule(lr){3-4}\cmidrule(lr){5-6}\cmidrule(lr){7-8}\cmidrule(lr){9-10}
 & & Brier & ECE & Brier & ECE & Brier & ECE & Brier & ECE \\
\midrule
\rowcolor[gray]{0.95}\multicolumn{10}{c}{\textit{Verbalize}} \\
Qwen2.5{-}7B & 21.4 & 53.3 & 53.3 & 49.5 & 45.9 & 52.2 & 56.8 & 49.6 & 54.6 \\
Qwen2.5{-}72B & 38.7 & 44.6 & 42.3 & 41.2 & 38.7 & 54.5 & 58.4 & 52.1 & 56.1 \\
$\Delta$  &  \cellcolor{lightblue!50}\textcolor{darkred}{-17.3} &  \cellcolor{lightblue!50}\textcolor{darkred}{+8.7}  &  \cellcolor{lightblue!50}\textcolor{darkred}{+11.0}  &  \cellcolor{lightblue!50}\textcolor{darkred}{+8.3}  &  \cellcolor{lightblue!50}\textcolor{darkred}{+7.2}  &  \cellcolor{lightblue!50}\textcolor{darkgreen}{-2.3}  &  \cellcolor{lightblue!50}\textcolor{darkgreen}{-1.6}  &  \cellcolor{lightblue!50}\textcolor{darkgreen}{-2.5}  &  \cellcolor{lightblue!50}\textcolor{darkgreen}{-1.5} \\
\cdashline{1-10}
Llama3.1{-}8B & 32.1 & 48.5 & 50.7 & 44.9 & 44.2 & 45.1 & 47.2 & 43.7 & 46.6 \\
Llama3.1{-}70B & 49.3 & 32.1 & 32.9 & 32.6 & 31.6 & 50.5 & 51.9 & 49.0 & 50.5 \\
$\Delta$  &  \cellcolor{lightblue!50}\textcolor{darkred}{-17.2} &  \cellcolor{lightblue!50}\textcolor{darkred}{+16.3}  &  \cellcolor{lightblue!50}\textcolor{darkred}{+17.9}  &  \cellcolor{lightblue!50}\textcolor{darkred}{+12.3}  &  \cellcolor{lightblue!50}\textcolor{darkred}{+12.5}  &  \cellcolor{lightblue!50}\textcolor{darkgreen}{-5.4}  &  \cellcolor{lightblue!50}\textcolor{darkgreen}{-4.7}  &  \cellcolor{lightblue!50}\textcolor{darkgreen}{-5.3}  &  \cellcolor{lightblue!50}\textcolor{darkgreen}{-3.9} \\
\cdashline{1-10}
Qwen3{-}8B{-}nothink & 26.8 & 57.0 & 60.7 & 53.8 & 56.7 & 57.1 & 61.0 & 57.6 & 61.7 \\
Qwen3{-}8B{-}think & 36.1 & 42.4 & 46.4 & 35.8 & 42.3 & 61.3 & 65.5 & 60.8 & 65.3 \\
$\Delta$  &  \cellcolor{lightblue!50}\textcolor{darkred}{-9.3} &  \cellcolor{lightblue!50}\textcolor{darkred}{+14.6}  &  \cellcolor{lightblue!50}\textcolor{darkred}{+14.3}  &  \cellcolor{lightblue!50}\textcolor{darkred}{+18.0}  &  \cellcolor{lightblue!50}\textcolor{darkred}{+14.4}  &  \cellcolor{lightblue!50}\textcolor{darkgreen}{-4.1}  &  \cellcolor{lightblue!50}\textcolor{darkgreen}{-4.5}  &  \cellcolor{lightblue!50}\textcolor{darkgreen}{-3.2}  &  \cellcolor{lightblue!50}\textcolor{darkgreen}{-3.6} \\
\cdashline{1-10}
Qwen3{-}32B{-}nothink & 31.3 & 36.5 & 37.6 & 43.3 & 44.9 & 36.5 & 37.6 & 33.6 & 34.9 \\
Qwen3{-}32B{-}think & 41.6 & 50.7 & 52.8 & 45.0 & 47.6 & 59.5 & 62.4 & 57.4 & 60.5 \\
$\Delta$  &  \cellcolor{lightblue!50}\textcolor{darkred}{-10.3} &  \cellcolor{lightblue!50}\textcolor{darkgreen}{-14.2}  &  \cellcolor{lightblue!50}\textcolor{darkgreen}{-15.2}  &  \cellcolor{lightblue!50}\textcolor{darkgreen}{-1.7}  &  \cellcolor{lightblue!50}\textcolor{darkgreen}{-2.7}  &  \cellcolor{lightblue!50}\textcolor{darkgreen}{-23.0}  &  \cellcolor{lightblue!50}\textcolor{darkgreen}{-24.8}  &  \cellcolor{lightblue!50}\textcolor{darkgreen}{-23.8}  &  \cellcolor{lightblue!50}\textcolor{darkgreen}{-25.6} \\
\cdashline{1-10}
GPT{-}OSS{-}20B{-}low & 35.8 & 39.5 & 42.0 & 31.2 & 32.4 & 39.5 & 42.0 & 39.3 & 43.1 \\
GPT{-}OSS{-}20B{-}high & 40.1 & 44.2 & 47.8 & 26.5 & 27.5 & 47.6 & 51.8 & 46.6 & 51.1 \\
$\Delta$  &  \cellcolor{lightblue!50}\textcolor{darkred}{-4.3} &  \cellcolor{lightblue!50}\textcolor{darkgreen}{-4.7}  &  \cellcolor{lightblue!50}\textcolor{darkgreen}{-5.7}  &  \cellcolor{lightblue!50}\textcolor{darkred}{+4.7}  &  \cellcolor{lightblue!50}\textcolor{darkred}{+4.9}  &  \cellcolor{lightblue!50}\textcolor{darkgreen}{-8.1}  &  \cellcolor{lightblue!50}\textcolor{darkgreen}{-9.7}  &  \cellcolor{lightblue!50}\textcolor{darkgreen}{-7.3}  &  \cellcolor{lightblue!50}\textcolor{darkgreen}{-8.0} \\
\cdashline{1-10}
GPT{-}OSS{-}120B{-}low & 46.3 & 33.1 & 35.0 & 32.7 & 33.4 & 33.1 & 35.0 & 32.4 & 35.6 \\
GPT{-}OSS{-}120B{-}high & 47.8 & 33.6 & 37.2 & 32.8 & 34.4 & 34.5 & 38.4 & 33.8 & 38.3 \\
$\Delta$  &  \cellcolor{lightblue!50}\textcolor{darkred}{-1.5} &  \cellcolor{lightblue!50}\textcolor{darkgreen}{-0.5}  &  \cellcolor{lightblue!50}\textcolor{darkgreen}{-2.2}  &  \cellcolor{lightblue!50}\textcolor{darkgreen}{-0.1}  &  \cellcolor{lightblue!50}\textcolor{darkgreen}{-1.0}  &  \cellcolor{lightblue!50}\textcolor{darkgreen}{-1.4}  &  \cellcolor{lightblue!50}\textcolor{darkgreen}{-3.4}  &  \cellcolor{lightblue!50}\textcolor{darkgreen}{-1.4}  &  \cellcolor{lightblue!50}\textcolor{darkgreen}{-2.6} \\
\cdashline{1-10}
GPT{-}OSS{-}20B{-}high & 40.1 & 44.2 & 47.8 & 26.5 & 27.5 & 47.6 & 51.8 & 46.6 & 51.1 \\
GPT{-}OSS{-}120B{-}high & 47.8 & 33.6 & 37.2 & 32.8 & 34.4 & 34.5 & 38.4 & 33.8 & 38.3 \\
$\Delta$  &  \cellcolor{lightblue!50}\textcolor{darkred}{-7.7} &  \cellcolor{lightblue!50}\textcolor{darkred}{+10.6}  &  \cellcolor{lightblue!50}\textcolor{darkred}{+10.5}  &  \cellcolor{lightblue!50}\textcolor{darkgreen}{-6.2}  &  \cellcolor{lightblue!50}\textcolor{darkgreen}{-6.9}  &  \cellcolor{lightblue!50}\textcolor{darkred}{+13.1}  &  \cellcolor{lightblue!50}\textcolor{darkred}{+13.3}  &  \cellcolor{lightblue!50}\textcolor{darkred}{+12.8}  &  \cellcolor{lightblue!50}\textcolor{darkred}{+12.9} \\
\midrule
\rowcolor[gray]{0.95}\multicolumn{10}{c}{\textit{P(True)}} \\
Qwen2.5{-}7B & 21.4 & 39.7 & 40.5 & 39.0 & 39.1 & 39.7 & 40.5 & 39.3 & 40.0 \\
Qwen2.5{-}72B & 38.7 & 31.4 & 31.6 & 26.0 & 25.6 & 38.7 & 41.4 & 36.8 & 39.8 \\
$\Delta$  &  \cellcolor{lightblue!50}\textcolor{darkred}{-17.3} &  \cellcolor{lightblue!50}\textcolor{darkred}{+8.3}  &  \cellcolor{lightblue!50}\textcolor{darkred}{+8.8}  &  \cellcolor{lightblue!50}\textcolor{darkred}{+13.0}  &  \cellcolor{lightblue!50}\textcolor{darkred}{+13.4}  &  \cellcolor{lightblue!50}\textcolor{darkred}{+1.0}  &  \cellcolor{lightblue!50}\textcolor{darkgreen}{-0.9}  &  \cellcolor{lightblue!50}\textcolor{darkred}{+2.6}  &  \cellcolor{lightblue!50}\textcolor{darkred}{+0.2} \\
\cdashline{1-10}
Llama3.1{-}8B & 32.1 & 40.6 & 39.8 & 36.0 & 38.0 & 41.3 & 44.2 & 41.0 & 44.4 \\
Llama3.1{-}70B & 49.3 & 35.6 & 36.0 & 38.2 & 36.9 & 61.1 & 62.4 & 59.4 & 61.3 \\
$\Delta$  &  \cellcolor{lightblue!50}\textcolor{darkred}{-17.2} &  \cellcolor{lightblue!50}\textcolor{darkred}{+5.0}  &  \cellcolor{lightblue!50}\textcolor{darkred}{+3.7}  &  \cellcolor{lightblue!50}\textcolor{darkgreen}{-2.2}  &  \cellcolor{lightblue!50}\textcolor{darkred}{+1.1}  &  \cellcolor{lightblue!50}\textcolor{darkgreen}{-19.8}  &  \cellcolor{lightblue!50}\textcolor{darkgreen}{-18.1}  &  \cellcolor{lightblue!50}\textcolor{darkgreen}{-18.4}  &  \cellcolor{lightblue!50}\textcolor{darkgreen}{-17.0} \\
\cdashline{1-10}
Qwen3{-}8B{-}nothink & 26.8 & 50.3 & 51.2 & 50.5 & 51.6 & 50.3 & 51.2 & 50.5 & 51.6 \\
Qwen3{-}8B{-}think & 36.1 & 66.5 & 66.7 & 65.5 & 65.8 & 66.5 & 66.7 & 65.5 & 65.8 \\
$\Delta$  &  \cellcolor{lightblue!50}\textcolor{darkred}{-9.3} &  \cellcolor{lightblue!50}\textcolor{darkgreen}{-16.2}  &  \cellcolor{lightblue!50}\textcolor{darkgreen}{-15.5}  &  \cellcolor{lightblue!50}\textcolor{darkgreen}{-14.9}  &  \cellcolor{lightblue!50}\textcolor{darkgreen}{-14.1}  &  \cellcolor{lightblue!50}\textcolor{darkgreen}{-16.2}  &  \cellcolor{lightblue!50}\textcolor{darkgreen}{-15.5}  &  \cellcolor{lightblue!50}\textcolor{darkgreen}{-14.9}  &  \cellcolor{lightblue!50}\textcolor{darkgreen}{-14.1} \\
\cdashline{1-10}
Qwen3{-}32B{-}nothink & 31.3 & 33.6 & 36.5 & 40.8 & 42.1 & 33.6 & 36.5 & 32.7 & 36.0 \\
Qwen3{-}32B{-}think & 41.6 & 47.1 & 47.0 & 38.0 & 37.6 & 53.4 & 53.5 & 52.5 & 52.7 \\
$\Delta$  &  \cellcolor{lightblue!50}\textcolor{darkred}{-10.3} &  \cellcolor{lightblue!50}\textcolor{darkgreen}{-13.4}  &  \cellcolor{lightblue!50}\textcolor{darkgreen}{-10.5}  &  \cellcolor{lightblue!50}\textcolor{darkred}{+2.8}  &  \cellcolor{lightblue!50}\textcolor{darkred}{+4.5}  &  \cellcolor{lightblue!50}\textcolor{darkgreen}{-19.7}  &  \cellcolor{lightblue!50}\textcolor{darkgreen}{-17.0}  &  \cellcolor{lightblue!50}\textcolor{darkgreen}{-19.8}  &  \cellcolor{lightblue!50}\textcolor{darkgreen}{-16.7} \\
\cdashline{1-10}
GPT{-}OSS{-}20B{-}low & 35.8 & 44.0 & 44.1 & 46.9 & 46.9 & 44.0 & 44.1 & 44.6 & 44.6 \\
GPT{-}OSS{-}20B{-}high & 40.1 & 51.7 & 51.7 & 46.5 & 46.6 & 55.4 & 55.5 & 54.8 & 54.8 \\
$\Delta$  &  \cellcolor{lightblue!50}\textcolor{darkred}{-4.3} &  \cellcolor{lightblue!50}\textcolor{darkgreen}{-7.6}  &  \cellcolor{lightblue!50}\textcolor{darkgreen}{-7.7}  &  \cellcolor{lightblue!50}\textcolor{darkred}{+0.4}  &  \cellcolor{lightblue!50}\textcolor{darkred}{+0.4}  &  \cellcolor{lightblue!50}\textcolor{darkgreen}{-11.4}  &  \cellcolor{lightblue!50}\textcolor{darkgreen}{-11.4}  &  \cellcolor{lightblue!50}\textcolor{darkgreen}{-10.2}  &  \cellcolor{lightblue!50}\textcolor{darkgreen}{-10.2} \\
\cdashline{1-10}
GPT{-}OSS{-}120B{-}low & 46.3 & 39.6 & 39.6 & 43.0 & 43.1 & 39.6 & 39.6 & 39.1 & 39.1 \\
GPT{-}OSS{-}120B{-}high & 47.8 & 45.1 & 45.1 & 40.5 & 40.5 & 46.2 & 46.2 & 45.2 & 45.2 \\
$\Delta$  &  \cellcolor{lightblue!50}\textcolor{darkred}{-1.5} &  \cellcolor{lightblue!50}\textcolor{darkgreen}{-5.5}  &  \cellcolor{lightblue!50}\textcolor{darkgreen}{-5.5}  &  \cellcolor{lightblue!50}\textcolor{darkred}{+2.6}  &  \cellcolor{lightblue!50}\textcolor{darkred}{+2.6}  &  \cellcolor{lightblue!50}\textcolor{darkgreen}{-6.6}  &  \cellcolor{lightblue!50}\textcolor{darkgreen}{-6.6}  &  \cellcolor{lightblue!50}\textcolor{darkgreen}{-6.1}  &  \cellcolor{lightblue!50}\textcolor{darkgreen}{-6.1} \\
\cdashline{1-10}
GPT{-}OSS{-}20B{-}high & 40.1 & 51.7 & 51.7 & 46.5 & 46.6 & 55.4 & 55.5 & 54.8 & 54.8 \\
GPT{-}OSS{-}120B{-}high & 47.8 & 45.1 & 45.1 & 40.5 & 40.5 & 46.2 & 46.2 & 45.2 & 45.2 \\
$\Delta$  &  \cellcolor{lightblue!50}\textcolor{darkred}{-7.7} &  \cellcolor{lightblue!50}\textcolor{darkred}{+6.6}  &  \cellcolor{lightblue!50}\textcolor{darkred}{+6.7}  &  \cellcolor{lightblue!50}\textcolor{darkred}{+6.0}  &  \cellcolor{lightblue!50}\textcolor{darkred}{+6.1}  &  \cellcolor{lightblue!50}\textcolor{darkred}{+9.2}  &  \cellcolor{lightblue!50}\textcolor{darkred}{+9.3}  &  \cellcolor{lightblue!50}\textcolor{darkred}{+9.5}  &  \cellcolor{lightblue!50}\textcolor{darkred}{+9.6} \\
\midrule
\rowcolor[gray]{0.95}\multicolumn{10}{c}{\textit{Self-Consistency}} \\
Qwen2.5{-}7B & 21.4 & 27.4 & 24.4 & 17.5 & 14.5 & 27.4 & 24.4 & 25.2 & 21.2 \\
Qwen2.5{-}72B & 38.7 & 33.5 & 29.9 & 24.5 & 14.7 & 38.0 & 38.2 & 36.1 & 36.5 \\
$\Delta$  &  \cellcolor{lightblue!50}\textcolor{darkred}{-17.3} &  \cellcolor{lightblue!50}\textcolor{darkgreen}{-6.1}  &  \cellcolor{lightblue!50}\textcolor{darkgreen}{-5.5}  &  \cellcolor{lightblue!50}\textcolor{darkgreen}{-7.0}  &  \cellcolor{lightblue!50}\textcolor{darkgreen}{-0.2}  &  \cellcolor{lightblue!50}\textcolor{darkgreen}{-10.6}  &  \cellcolor{lightblue!50}\textcolor{darkgreen}{-13.8}  &  \cellcolor{lightblue!50}\textcolor{darkgreen}{-10.9}  &  \cellcolor{lightblue!50}\textcolor{darkgreen}{-15.3} \\
\cdashline{1-10}
Llama3.1{-}8B & 32.1 & 21.9 & 14.1 & 19.5 & 13.3 & 21.9 & 14.1 & 23.7 & 16.0 \\
Llama3.1{-}70B & 49.3 & 28.5 & 23.5 & 29.8 & 24.9 & 24.9 & 18.4 & 24.5 & 18.8 \\
$\Delta$  &  \cellcolor{lightblue!50}\textcolor{darkred}{-17.2} &  \cellcolor{lightblue!50}\textcolor{darkgreen}{-6.6}  &  \cellcolor{lightblue!50}\textcolor{darkgreen}{-9.4}  &  \cellcolor{lightblue!50}\textcolor{darkgreen}{-10.3}  &  \cellcolor{lightblue!50}\textcolor{darkgreen}{-11.6}  &  \cellcolor{lightblue!50}\textcolor{darkgreen}{-3.0}  &  \cellcolor{lightblue!50}\textcolor{darkgreen}{-4.2}  &  \cellcolor{lightblue!50}\textcolor{darkgreen}{-0.7}  &  \cellcolor{lightblue!50}\textcolor{darkgreen}{-2.8} \\
\cdashline{1-10}
Qwen3{-}8B{-}nothink & 26.8 & 26.2 & 22.6 & 17.8 & 11.5 & 26.1 & 22.5 & 23.5 & 19.7 \\
Qwen3{-}8B{-}think & 36.1 & 23.0 & 15.0 & 23.1 & 16.2 & 20.8 & 14.3 & 19.8 & 14.1 \\
$\Delta$  &  \cellcolor{lightblue!50}\textcolor{darkred}{-9.3} &  \cellcolor{lightblue!50}\textcolor{darkred}{+3.3}  &  \cellcolor{lightblue!50}\textcolor{darkred}{+7.6}  &  \cellcolor{lightblue!50}\textcolor{darkgreen}{-5.3}  &  \cellcolor{lightblue!50}\textcolor{darkgreen}{-4.7}  &  \cellcolor{lightblue!50}\textcolor{darkred}{+5.3}  &  \cellcolor{lightblue!50}\textcolor{darkred}{+8.2}  &  \cellcolor{lightblue!50}\textcolor{darkred}{+3.8}  &  \cellcolor{lightblue!50}\textcolor{darkred}{+5.7} \\
\cdashline{1-10}
Qwen3{-}32B{-}nothink & 31.3 & 24.7 & 18.4 & 20.2 & 14.0 & 24.7 & 18.4 & 24.5 & 17.5 \\
Qwen3{-}32B{-}think & 41.6 & 23.9 & 16.8 & 26.3 & 21.5 & 21.3 & 14.2 & 20.9 & 14.1 \\
$\Delta$  &  \cellcolor{lightblue!50}\textcolor{darkred}{-10.3} &  \cellcolor{lightblue!50}\textcolor{darkred}{+0.7}  &  \cellcolor{lightblue!50}\textcolor{darkred}{+1.6}  &  \cellcolor{lightblue!50}\textcolor{darkgreen}{-6.1}  &  \cellcolor{lightblue!50}\textcolor{darkgreen}{-7.5}  &  \cellcolor{lightblue!50}\textcolor{darkred}{+3.3}  &  \cellcolor{lightblue!50}\textcolor{darkred}{+4.3}  &  \cellcolor{lightblue!50}\textcolor{darkred}{+3.6}  &  \cellcolor{lightblue!50}\textcolor{darkred}{+3.4} \\
\cdashline{1-10}
GPT{-}OSS{-}20B{-}low & 35.8 & 21.9 & 14.7 & 21.4 & 15.6 & 21.9 & 14.7 & 22.9 & 15.9 \\
GPT{-}OSS{-}20B{-}high & 40.1 & 25.6 & 18.3 & 26.2 & 19.9 & 23.6 & 16.7 & 23.7 & 17.8 \\
$\Delta$  &  \cellcolor{lightblue!50}\textcolor{darkred}{-4.3} &  \cellcolor{lightblue!50}\textcolor{darkgreen}{-3.7}  &  \cellcolor{lightblue!50}\textcolor{darkgreen}{-3.6}  &  \cellcolor{lightblue!50}\textcolor{darkgreen}{-4.8}  &  \cellcolor{lightblue!50}\textcolor{darkgreen}{-4.3}  &  \cellcolor{lightblue!50}\textcolor{darkgreen}{-1.8}  &  \cellcolor{lightblue!50}\textcolor{darkgreen}{-2.0}  &  \cellcolor{lightblue!50}\textcolor{darkgreen}{-0.8}  &  \cellcolor{lightblue!50}\textcolor{darkgreen}{-1.9} \\
\cdashline{1-10}
GPT{-}OSS{-}120B{-}low & 46.3 & 29.4 & 23.7 & 28.3 & 22.7 & 29.4 & 23.7 & 30.6 & 25.1 \\
GPT{-}OSS{-}120B{-}high & 47.8 & 31.2 & 26.5 & 32.3 & 27.1 & 31.3 & 27.0 & 30.5 & 26.1 \\
$\Delta$  &  \cellcolor{lightblue!50}\textcolor{darkred}{-1.5} &  \cellcolor{lightblue!50}\textcolor{darkgreen}{-1.9}  &  \cellcolor{lightblue!50}\textcolor{darkgreen}{-2.8}  &  \cellcolor{lightblue!50}\textcolor{darkgreen}{-4.0}  &  \cellcolor{lightblue!50}\textcolor{darkgreen}{-4.4}  &  \cellcolor{lightblue!50}\textcolor{darkgreen}{-1.9}  &  \cellcolor{lightblue!50}\textcolor{darkgreen}{-3.3}  &  \cellcolor{lightblue!50}\textcolor{darkred}{+0.1}  &  \cellcolor{lightblue!50}\textcolor{darkgreen}{-1.1} \\
\cdashline{1-10}
GPT{-}OSS{-}20B{-}high & 40.1 & 25.6 & 18.3 & 26.2 & 19.9 & 23.6 & 16.7 & 23.7 & 17.8 \\
GPT{-}OSS{-}120B{-}high & 47.8 & 31.2 & 26.5 & 32.3 & 27.1 & 31.3 & 27.0 & 30.5 & 26.1 \\
$\Delta$  &  \cellcolor{lightblue!50}\textcolor{darkred}{-7.7} &  \cellcolor{lightblue!50}\textcolor{darkgreen}{-5.7}  &  \cellcolor{lightblue!50}\textcolor{darkgreen}{-8.2}  &  \cellcolor{lightblue!50}\textcolor{darkgreen}{-6.0}  &  \cellcolor{lightblue!50}\textcolor{darkgreen}{-7.2}  &  \cellcolor{lightblue!50}\textcolor{darkgreen}{-7.7}  &  \cellcolor{lightblue!50}\textcolor{darkgreen}{-10.3}  &  \cellcolor{lightblue!50}\textcolor{darkgreen}{-6.8}  &  \cellcolor{lightblue!50}\textcolor{darkgreen}{-8.3} \\
\bottomrule
\end{tabular}
}
\caption{Calibration metrics on \textbf{FreshQA}. For calibration metrics (Brier, ECE), lower is better ($\downarrow$). $\Delta$ rows show the difference (Model$_1$ $-$ Model$_2$). \textcolor{darkgreen}{Green}: better; \textcolor{darkred}{Red}: worse.}
\label{tab:freshqa}
\end{table*}


\begin{table*}[t!]
\setlength\tabcolsep{15.2pt}
\centering

\scalebox{0.72}{
\begin{tabular}{l c cc cc cc cc}
\toprule
\textbf{Model} & \textbf{Acc} & \multicolumn{2}{c}{\textbf{Raw}} & \multicolumn{2}{c}{\textbf{CA}} & \multicolumn{2}{c}{\textbf{DA}} & \multicolumn{2}{c}{\textbf{IA}} \\
\cmidrule(lr){3-4}\cmidrule(lr){5-6}\cmidrule(lr){7-8}\cmidrule(lr){9-10}
 & & Brier & ECE & Brier & ECE & Brier & ECE & Brier & ECE \\
\midrule
\rowcolor[gray]{0.95}\multicolumn{10}{c}{\textit{Verbalize}} \\
Qwen2.5{-}7B & 41.4 & 31.2 & 32.8 & 27.9 & 28.4 & 31.2 & 32.8 & 25.7 & 27.0 \\
Qwen2.5{-}72B & 71.7 & 20.5 & 20.6 & 18.3 & 18.6 & 38.7 & 40.0 & 26.7 & 27.6 \\
$\Delta$  &  \cellcolor{lightblue!50}\textcolor{darkred}{-30.3} &  \cellcolor{lightblue!50}\textcolor{darkred}{+10.7}  &  \cellcolor{lightblue!50}\textcolor{darkred}{+12.2}  &  \cellcolor{lightblue!50}\textcolor{darkred}{+9.6}  &  \cellcolor{lightblue!50}\textcolor{darkred}{+9.8}  &  \cellcolor{lightblue!50}\textcolor{darkgreen}{-7.5}  &  \cellcolor{lightblue!50}\textcolor{darkgreen}{-7.2}  &  \cellcolor{lightblue!50}\textcolor{darkgreen}{-1.0}  &  \cellcolor{lightblue!50}\textcolor{darkgreen}{-0.6} \\
\cdashline{1-10}
Llama3.1{-}8B & 59.9 & 29.8 & 29.9 & 24.5 & 23.6 & 29.8 & 29.9 & 18.7 & 18.3 \\
Llama3.1{-}70B & 79.1 & 19.0 & 18.9 & 18.9 & 18.8 & 29.5 & 29.5 & 18.4 & 18.3 \\
$\Delta$  &  \cellcolor{lightblue!50}\textcolor{darkred}{-19.2} &  \cellcolor{lightblue!50}\textcolor{darkred}{+10.8}  &  \cellcolor{lightblue!50}\textcolor{darkred}{+11.0}  &  \cellcolor{lightblue!50}\textcolor{darkred}{+5.6}  &  \cellcolor{lightblue!50}\textcolor{darkred}{+4.8}  &  \cellcolor{lightblue!50}\textcolor{darkred}{+0.2}  &  \cellcolor{lightblue!50}\textcolor{darkred}{+0.4}  &  \cellcolor{lightblue!50}\textcolor{darkred}{+0.3}  &  \cellcolor{lightblue!50}\textcolor{darkgreen}{-0.1} \\
\cdashline{1-10}
Qwen3{-}8B{-}nothink & 45.3 & 42.9 & 45.2 & 38.6 & 40.3 & 42.9 & 45.2 & 37.6 & 39.6 \\
Qwen3{-}8B{-}think & 60.3 & 32.4 & 33.8 & 30.1 & 32.5 & 44.5 & 47.4 & 37.5 & 40.2 \\
$\Delta$  &  \cellcolor{lightblue!50}\textcolor{darkred}{-15.0} &  \cellcolor{lightblue!50}\textcolor{darkred}{+10.5}  &  \cellcolor{lightblue!50}\textcolor{darkred}{+11.4}  &  \cellcolor{lightblue!50}\textcolor{darkred}{+8.5}  &  \cellcolor{lightblue!50}\textcolor{darkred}{+7.8}  &  \cellcolor{lightblue!50}\textcolor{darkgreen}{-1.5}  &  \cellcolor{lightblue!50}\textcolor{darkgreen}{-2.3}  &  \cellcolor{lightblue!50}\textcolor{darkred}{+0.2}  &  \cellcolor{lightblue!50}\textcolor{darkgreen}{-0.6} \\
\cdashline{1-10}
Qwen3{-}32B{-}nothink & 54.1 & 24.7 & 25.1 & 27.2 & 27.7 & 24.7 & 25.1 & 21.3 & 21.7 \\
Qwen3{-}32B{-}think & 64.5 & 28.7 & 28.9 & 25.7 & 26.4 & 36.6 & 37.7 & 30.6 & 31.7 \\
$\Delta$  &  \cellcolor{lightblue!50}\textcolor{darkred}{-10.4} &  \cellcolor{lightblue!50}\textcolor{darkgreen}{-4.0}  &  \cellcolor{lightblue!50}\textcolor{darkgreen}{-3.8}  &  \cellcolor{lightblue!50}\textcolor{darkred}{+1.4}  &  \cellcolor{lightblue!50}\textcolor{darkred}{+1.3}  &  \cellcolor{lightblue!50}\textcolor{darkgreen}{-11.9}  &  \cellcolor{lightblue!50}\textcolor{darkgreen}{-12.6}  &  \cellcolor{lightblue!50}\textcolor{darkgreen}{-9.3}  &  \cellcolor{lightblue!50}\textcolor{darkgreen}{-10.0} \\
\cdashline{1-10}
GPT{-}OSS{-}20B{-}low & 52.0 & 21.9 & 22.2 & 22.0 & 22.3 & 21.9 & 22.2 & 18.9 & 19.1 \\
GPT{-}OSS{-}20B{-}high & 60.3 & 24.3 & 25.0 & 25.3 & 25.7 & 29.1 & 31.1 & 24.6 & 27.0 \\
$\Delta$  &  \cellcolor{lightblue!50}\textcolor{darkred}{-8.3} &  \cellcolor{lightblue!50}\textcolor{darkgreen}{-2.4}  &  \cellcolor{lightblue!50}\textcolor{darkgreen}{-2.7}  &  \cellcolor{lightblue!50}\textcolor{darkgreen}{-3.3}  &  \cellcolor{lightblue!50}\textcolor{darkgreen}{-3.4}  &  \cellcolor{lightblue!50}\textcolor{darkgreen}{-7.2}  &  \cellcolor{lightblue!50}\textcolor{darkgreen}{-8.9}  &  \cellcolor{lightblue!50}\textcolor{darkgreen}{-5.6}  &  \cellcolor{lightblue!50}\textcolor{darkgreen}{-8.0} \\
\cdashline{1-10}
GPT{-}OSS{-}120B{-}low & 74.9 & 12.0 & 10.8 & 12.4 & 11.0 & 12.0 & 10.8 & 10.4 & 8.6 \\
GPT{-}OSS{-}120B{-}high & 78.8 & 12.9 & 11.8 & 12.3 & 11.4 & 17.2 & 17.0 & 12.1 & 11.4 \\
$\Delta$  &  \cellcolor{lightblue!50}\textcolor{darkred}{-3.9} &  \cellcolor{lightblue!50}\textcolor{darkgreen}{-0.9}  &  \cellcolor{lightblue!50}\textcolor{darkgreen}{-0.9}  &  \cellcolor{lightblue!50}\textcolor{darkred}{+0.1}  &  \cellcolor{lightblue!50}\textcolor{darkgreen}{-0.4}  &  \cellcolor{lightblue!50}\textcolor{darkgreen}{-5.2}  &  \cellcolor{lightblue!50}\textcolor{darkgreen}{-6.2}  &  \cellcolor{lightblue!50}\textcolor{darkgreen}{-1.7}  &  \cellcolor{lightblue!50}\textcolor{darkgreen}{-2.8} \\
\cdashline{1-10}
GPT{-}OSS{-}20B{-}high & 60.3 & 24.3 & 25.0 & 25.3 & 25.7 & 29.1 & 31.1 & 24.6 & 27.0 \\
GPT{-}OSS{-}120B{-}high & 78.8 & 12.9 & 11.8 & 12.3 & 11.4 & 17.2 & 17.0 & 12.1 & 11.4 \\
$\Delta$  &  \cellcolor{lightblue!50}\textcolor{darkred}{-18.5} &  \cellcolor{lightblue!50}\textcolor{darkred}{+11.4}  &  \cellcolor{lightblue!50}\textcolor{darkred}{+13.2}  &  \cellcolor{lightblue!50}\textcolor{darkred}{+13.0}  &  \cellcolor{lightblue!50}\textcolor{darkred}{+14.3}  &  \cellcolor{lightblue!50}\textcolor{darkred}{+11.9}  &  \cellcolor{lightblue!50}\textcolor{darkred}{+14.1}  &  \cellcolor{lightblue!50}\textcolor{darkred}{+12.5}  &  \cellcolor{lightblue!50}\textcolor{darkred}{+15.7} \\
\midrule
\rowcolor[gray]{0.95}\multicolumn{10}{c}{\textit{P(True)}} \\
Qwen2.5{-}7B & 41.4 & 25.9 & 25.9 & 26.0 & 25.7 & 25.9 & 25.9 & 22.7 & 22.6 \\
Qwen2.5{-}72B & 71.7 & 17.9 & 17.4 & 15.4 & 14.7 & 31.8 & 32.5 & 21.6 & 21.9 \\
$\Delta$  &  \cellcolor{lightblue!50}\textcolor{darkred}{-30.3} &  \cellcolor{lightblue!50}\textcolor{darkred}{+8.0}  &  \cellcolor{lightblue!50}\textcolor{darkred}{+8.4}  &  \cellcolor{lightblue!50}\textcolor{darkred}{+10.6}  &  \cellcolor{lightblue!50}\textcolor{darkred}{+11.0}  &  \cellcolor{lightblue!50}\textcolor{darkgreen}{-5.9}  &  \cellcolor{lightblue!50}\textcolor{darkgreen}{-6.6}  &  \cellcolor{lightblue!50}\textcolor{darkred}{+1.1}  &  \cellcolor{lightblue!50}\textcolor{darkred}{+0.7} \\
\cdashline{1-10}
Llama3.1{-}8B & 59.9 & 26.0 & 25.0 & 21.9 & 19.1 & 26.0 & 25.0 & 17.3 & 14.4 \\
Llama3.1{-}70B & 79.1 & 17.1 & 15.6 & 17.2 & 15.9 & 29.5 & 29.8 & 18.2 & 17.9 \\
$\Delta$  &  \cellcolor{lightblue!50}\textcolor{darkred}{-19.2} &  \cellcolor{lightblue!50}\textcolor{darkred}{+9.0}  &  \cellcolor{lightblue!50}\textcolor{darkred}{+9.5}  &  \cellcolor{lightblue!50}\textcolor{darkred}{+4.8}  &  \cellcolor{lightblue!50}\textcolor{darkred}{+3.2}  &  \cellcolor{lightblue!50}\textcolor{darkgreen}{-3.4}  &  \cellcolor{lightblue!50}\textcolor{darkgreen}{-4.8}  &  \cellcolor{lightblue!50}\textcolor{darkgreen}{-0.9}  &  \cellcolor{lightblue!50}\textcolor{darkgreen}{-3.5} \\
\cdashline{1-10}
Qwen3{-}8B{-}nothink & 45.3 & 38.2 & 38.6 & 34.8 & 34.9 & 38.2 & 38.6 & 35.1 & 35.3 \\
Qwen3{-}8B{-}think & 60.3 & 34.6 & 34.7 & 30.1 & 30.2 & 46.6 & 46.9 & 39.3 & 39.8 \\
$\Delta$  &  \cellcolor{lightblue!50}\textcolor{darkred}{-15.0} &  \cellcolor{lightblue!50}\textcolor{darkred}{+3.6}  &  \cellcolor{lightblue!50}\textcolor{darkred}{+3.9}  &  \cellcolor{lightblue!50}\textcolor{darkred}{+4.8}  &  \cellcolor{lightblue!50}\textcolor{darkred}{+4.8}  &  \cellcolor{lightblue!50}\textcolor{darkgreen}{-8.4}  &  \cellcolor{lightblue!50}\textcolor{darkgreen}{-8.4}  &  \cellcolor{lightblue!50}\textcolor{darkgreen}{-4.3}  &  \cellcolor{lightblue!50}\textcolor{darkgreen}{-4.5} \\
\cdashline{1-10}
Qwen3{-}32B{-}nothink & 54.1 & 22.7 & 22.1 & 25.1 & 24.6 & 22.7 & 22.1 & 20.8 & 20.2 \\
Qwen3{-}32B{-}think & 64.5 & 29.6 & 29.5 & 24.2 & 24.0 & 35.8 & 35.8 & 30.1 & 30.1 \\
$\Delta$  &  \cellcolor{lightblue!50}\textcolor{darkred}{-10.4} &  \cellcolor{lightblue!50}\textcolor{darkgreen}{-6.9}  &  \cellcolor{lightblue!50}\textcolor{darkgreen}{-7.5}  &  \cellcolor{lightblue!50}\textcolor{darkred}{+0.9}  &  \cellcolor{lightblue!50}\textcolor{darkred}{+0.6}  &  \cellcolor{lightblue!50}\textcolor{darkgreen}{-13.1}  &  \cellcolor{lightblue!50}\textcolor{darkgreen}{-13.7}  &  \cellcolor{lightblue!50}\textcolor{darkgreen}{-9.3}  &  \cellcolor{lightblue!50}\textcolor{darkgreen}{-9.9} \\
\cdashline{1-10}
GPT{-}OSS{-}20B{-}low & 52.0 & 25.0 & 25.1 & 26.8 & 26.8 & 25.0 & 25.1 & 22.3 & 22.3 \\
GPT{-}OSS{-}20B{-}high & 60.3 & 31.2 & 31.2 & 27.3 & 27.4 & 36.4 & 36.4 & 31.4 & 31.4 \\
$\Delta$  &  \cellcolor{lightblue!50}\textcolor{darkred}{-8.3} &  \cellcolor{lightblue!50}\textcolor{darkgreen}{-6.1}  &  \cellcolor{lightblue!50}\textcolor{darkgreen}{-6.2}  &  \cellcolor{lightblue!50}\textcolor{darkgreen}{-0.5}  &  \cellcolor{lightblue!50}\textcolor{darkgreen}{-0.5}  &  \cellcolor{lightblue!50}\textcolor{darkgreen}{-11.4}  &  \cellcolor{lightblue!50}\textcolor{darkgreen}{-11.4}  &  \cellcolor{lightblue!50}\textcolor{darkgreen}{-9.1}  &  \cellcolor{lightblue!50}\textcolor{darkgreen}{-9.1} \\
\cdashline{1-10}
GPT{-}OSS{-}120B{-}low & 74.9 & 14.7 & 14.7 & 16.2 & 16.2 & 14.7 & 14.7 & 12.7 & 12.7 \\
GPT{-}OSS{-}120B{-}high & 78.8 & 19.1 & 19.1 & 16.9 & 16.9 & 24.8 & 24.8 & 18.1 & 18.1 \\
$\Delta$  &  \cellcolor{lightblue!50}\textcolor{darkred}{-3.9} &  \cellcolor{lightblue!50}\textcolor{darkgreen}{-4.3}  &  \cellcolor{lightblue!50}\textcolor{darkgreen}{-4.3}  &  \cellcolor{lightblue!50}\textcolor{darkgreen}{-0.6}  &  \cellcolor{lightblue!50}\textcolor{darkgreen}{-0.6}  &  \cellcolor{lightblue!50}\textcolor{darkgreen}{-10.1}  &  \cellcolor{lightblue!50}\textcolor{darkgreen}{-10.1}  &  \cellcolor{lightblue!50}\textcolor{darkgreen}{-5.4}  &  \cellcolor{lightblue!50}\textcolor{darkgreen}{-5.4} \\
\cdashline{1-10}
GPT{-}OSS{-}20B{-}high & 60.3 & 31.2 & 31.2 & 27.3 & 27.4 & 36.4 & 36.4 & 31.4 & 31.4 \\
GPT{-}OSS{-}120B{-}high & 78.8 & 19.1 & 19.1 & 16.9 & 16.9 & 24.8 & 24.8 & 18.1 & 18.1 \\
$\Delta$  &  \cellcolor{lightblue!50}\textcolor{darkred}{-18.5} &  \cellcolor{lightblue!50}\textcolor{darkred}{+12.1}  &  \cellcolor{lightblue!50}\textcolor{darkred}{+12.1}  &  \cellcolor{lightblue!50}\textcolor{darkred}{+10.5}  &  \cellcolor{lightblue!50}\textcolor{darkred}{+10.5}  &  \cellcolor{lightblue!50}\textcolor{darkred}{+11.6}  &  \cellcolor{lightblue!50}\textcolor{darkred}{+11.7}  &  \cellcolor{lightblue!50}\textcolor{darkred}{+13.2}  &  \cellcolor{lightblue!50}\textcolor{darkred}{+13.2} \\
\midrule
\rowcolor[gray]{0.95}\multicolumn{10}{c}{\textit{Self-Consistency}} \\
Qwen2.5{-}7B & 41.4 & 17.9 & 8.9 & 13.6 & 7.8 & 17.9 & 8.9 & 18.1 & 12.2 \\
Qwen2.5{-}72B & 71.7 & 17.1 & 11.7 & 21.5 & 22.4 & 25.1 & 22.7 & 19.2 & 14.2 \\
$\Delta$  &  \cellcolor{lightblue!50}\textcolor{darkred}{-30.3} &  \cellcolor{lightblue!50}\textcolor{darkred}{+0.9}  &  \cellcolor{lightblue!50}\textcolor{darkgreen}{-2.7}  &  \cellcolor{lightblue!50}\textcolor{darkgreen}{-7.9}  &  \cellcolor{lightblue!50}\textcolor{darkgreen}{-14.6}  &  \cellcolor{lightblue!50}\textcolor{darkgreen}{-7.2}  &  \cellcolor{lightblue!50}\textcolor{darkgreen}{-13.8}  &  \cellcolor{lightblue!50}\textcolor{darkgreen}{-1.1}  &  \cellcolor{lightblue!50}\textcolor{darkgreen}{-2.1} \\
\cdashline{1-10}
Llama3.1{-}8B & 59.9 & 23.8 & 19.4 & 23.3 & 21.6 & 23.8 & 19.4 & 26.8 & 28.0 \\
Llama3.1{-}70B & 79.1 & 18.8 & 17.0 & 25.2 & 30.0 & 18.7 & 11.8 & 17.0 & 13.9 \\
$\Delta$  &  \cellcolor{lightblue!50}\textcolor{darkred}{-19.2} &  \cellcolor{lightblue!50}\textcolor{darkred}{+5.0}  &  \cellcolor{lightblue!50}\textcolor{darkred}{+2.4}  &  \cellcolor{lightblue!50}\textcolor{darkgreen}{-1.9}  &  \cellcolor{lightblue!50}\textcolor{darkgreen}{-8.4}  &  \cellcolor{lightblue!50}\textcolor{darkred}{+5.1}  &  \cellcolor{lightblue!50}\textcolor{darkred}{+7.6}  &  \cellcolor{lightblue!50}\textcolor{darkred}{+9.7}  &  \cellcolor{lightblue!50}\textcolor{darkred}{+14.1} \\
\cdashline{1-10}
Qwen3{-}8B{-}nothink & 45.3 & 19.8 & 11.9 & 14.8 & 4.9 & 19.8 & 11.9 & 17.4 & 9.0 \\
Qwen3{-}8B{-}think & 60.3 & 14.5 & 7.4 & 19.2 & 17.9 & 14.3 & 4.7 & 13.0 & 4.5 \\
$\Delta$  &  \cellcolor{lightblue!50}\textcolor{darkred}{-15.0} &  \cellcolor{lightblue!50}\textcolor{darkred}{+5.3}  &  \cellcolor{lightblue!50}\textcolor{darkred}{+4.4}  &  \cellcolor{lightblue!50}\textcolor{darkgreen}{-4.4}  &  \cellcolor{lightblue!50}\textcolor{darkgreen}{-13.0}  &  \cellcolor{lightblue!50}\textcolor{darkred}{+5.5}  &  \cellcolor{lightblue!50}\textcolor{darkred}{+7.2}  &  \cellcolor{lightblue!50}\textcolor{darkred}{+4.4}  &  \cellcolor{lightblue!50}\textcolor{darkred}{+4.5} \\
\cdashline{1-10}
Qwen3{-}32B{-}nothink & 54.1 & 17.6 & 9.5 & 14.7 & 7.4 & 17.6 & 9.5 & 15.9 & 8.5 \\
Qwen3{-}32B{-}think & 64.5 & 14.7 & 11.3 & 18.2 & 18.2 & 13.8 & 7.3 & 13.3 & 8.2 \\
$\Delta$  &  \cellcolor{lightblue!50}\textcolor{darkred}{-10.4} &  \cellcolor{lightblue!50}\textcolor{darkred}{+2.9}  &  \cellcolor{lightblue!50}\textcolor{darkgreen}{-1.8}  &  \cellcolor{lightblue!50}\textcolor{darkgreen}{-3.5}  &  \cellcolor{lightblue!50}\textcolor{darkgreen}{-10.8}  &  \cellcolor{lightblue!50}\textcolor{darkred}{+3.8}  &  \cellcolor{lightblue!50}\textcolor{darkred}{+2.2}  &  \cellcolor{lightblue!50}\textcolor{darkred}{+2.6}  &  \cellcolor{lightblue!50}\textcolor{darkred}{+0.3} \\
\cdashline{1-10}
GPT{-}OSS{-}20B{-}low & 52.0 & 17.4 & 13.3 & 17.6 & 15.9 & 17.4 & 13.3 & 18.0 & 14.7 \\
GPT{-}OSS{-}20B{-}high & 60.3 & 17.9 & 12.8 & 19.8 & 18.3 & 16.9 & 9.0 & 16.5 & 9.9 \\
$\Delta$  &  \cellcolor{lightblue!50}\textcolor{darkred}{-8.3} &  \cellcolor{lightblue!50}\textcolor{darkgreen}{-0.5}  &  \cellcolor{lightblue!50}\textcolor{darkred}{+0.4}  &  \cellcolor{lightblue!50}\textcolor{darkgreen}{-2.2}  &  \cellcolor{lightblue!50}\textcolor{darkgreen}{-2.4}  &  \cellcolor{lightblue!50}\textcolor{darkred}{+0.5}  &  \cellcolor{lightblue!50}\textcolor{darkred}{+4.2}  &  \cellcolor{lightblue!50}\textcolor{darkred}{+1.4}  &  \cellcolor{lightblue!50}\textcolor{darkred}{+4.8} \\
\cdashline{1-10}
GPT{-}OSS{-}120B{-}low & 74.9 & 13.7 & 9.7 & 13.9 & 11.1 & 13.7 & 9.7 & 13.1 & 10.4 \\
GPT{-}OSS{-}120B{-}high & 78.8 & 14.6 & 10.8 & 15.7 & 13.1 & 15.1 & 9.9 & 14.1 & 10.2 \\
$\Delta$  &  \cellcolor{lightblue!50}\textcolor{darkred}{-3.9} &  \cellcolor{lightblue!50}\textcolor{darkgreen}{-0.9}  &  \cellcolor{lightblue!50}\textcolor{darkgreen}{-1.0}  &  \cellcolor{lightblue!50}\textcolor{darkgreen}{-1.8}  &  \cellcolor{lightblue!50}\textcolor{darkgreen}{-1.9}  &  \cellcolor{lightblue!50}\textcolor{darkgreen}{-1.4}  &  \cellcolor{lightblue!50}\textcolor{darkgreen}{-0.2}  &  \cellcolor{lightblue!50}\textcolor{darkgreen}{-0.9}  &  \cellcolor{lightblue!50}\textcolor{darkred}{+0.3} \\
\cdashline{1-10}
GPT{-}OSS{-}20B{-}high & 60.3 & 17.9 & 12.8 & 19.8 & 18.3 & 16.9 & 9.0 & 16.5 & 9.9 \\
GPT{-}OSS{-}120B{-}high & 78.8 & 14.6 & 10.8 & 15.7 & 13.1 & 15.1 & 9.9 & 14.1 & 10.2 \\
$\Delta$  &  \cellcolor{lightblue!50}\textcolor{darkred}{-18.5} &  \cellcolor{lightblue!50}\textcolor{darkred}{+3.3}  &  \cellcolor{lightblue!50}\textcolor{darkred}{+2.1}  &  \cellcolor{lightblue!50}\textcolor{darkred}{+4.1}  &  \cellcolor{lightblue!50}\textcolor{darkred}{+5.2}  &  \cellcolor{lightblue!50}\textcolor{darkred}{+1.8}  &  \cellcolor{lightblue!50}\textcolor{darkgreen}{-0.9}  &  \cellcolor{lightblue!50}\textcolor{darkred}{+2.5}  &  \cellcolor{lightblue!50}\textcolor{darkgreen}{-0.2} \\
\bottomrule
\end{tabular}
}
\caption{Calibration metrics on \textbf{TriviaQA}. For calibration metrics (Brier, ECE), lower is better ($\downarrow$). $\Delta$ rows show the difference (Model$_1$ $-$ Model$_2$). \textcolor{darkgreen}{Green}: better; \textcolor{darkred}{Red}: worse.}
\label{tab:triviaqa}
\end{table*}


\begin{table*}[t!]
\setlength\tabcolsep{15.2pt}
\centering

\scalebox{0.72}{
\begin{tabular}{l c cc cc cc cc}
\toprule
\textbf{Model} & \textbf{Acc} & \multicolumn{2}{c}{\textbf{Raw}} & \multicolumn{2}{c}{\textbf{CA}} & \multicolumn{2}{c}{\textbf{DA}} & \multicolumn{2}{c}{\textbf{IA}} \\
\cmidrule(lr){3-4}\cmidrule(lr){5-6}\cmidrule(lr){7-8}\cmidrule(lr){9-10}
 & & Brier & ECE & Brier & ECE & Brier & ECE & Brier & ECE \\
\midrule
\rowcolor[gray]{0.95}\multicolumn{10}{c}{\textit{Verbalize}} \\
Qwen2.5{-}7B & 33.3 & 52.5 & 54.8 & 48.6 & 49.3 & 54.7 & 57.1 & 50.1 & 52.1 \\
Qwen2.5{-}72B & 50.0 & 41.8 & 41.2 & 41.7 & 42.4 & 54.7 & 57.1 & 52.0 & 54.6 \\
$\Delta$  &  \cellcolor{lightblue!50}\textcolor{darkred}{-16.7} &  \cellcolor{lightblue!50}\textcolor{darkred}{+10.8}  &  \cellcolor{lightblue!50}\textcolor{darkred}{+13.6}  &  \cellcolor{lightblue!50}\textcolor{darkred}{+6.9}  &  \cellcolor{lightblue!50}\textcolor{darkred}{+6.9}  &  \cellcolor{lightblue!50}0.0  &  \cellcolor{lightblue!50}0.0  &  \cellcolor{lightblue!50}\textcolor{darkgreen}{-1.9}  &  \cellcolor{lightblue!50}\textcolor{darkgreen}{-2.5} \\
\cdashline{1-10}
Llama3.1{-}8B & 18.7 & 61.9 & 63.7 & 54.6 & 54.9 & 61.9 & 63.7 & 62.9 & 64.8 \\
Llama3.1{-}70B & 39.9 & 43.6 & 45.5 & 39.7 & 40.8 & 47.0 & 50.4 & 53.3 & 56.6 \\
$\Delta$  &  \cellcolor{lightblue!50}\textcolor{darkred}{-21.2} &  \cellcolor{lightblue!50}\textcolor{darkred}{+18.3}  &  \cellcolor{lightblue!50}\textcolor{darkred}{+18.2}  &  \cellcolor{lightblue!50}\textcolor{darkred}{+14.9}  &  \cellcolor{lightblue!50}\textcolor{darkred}{+14.1}  &  \cellcolor{lightblue!50}\textcolor{darkred}{+14.9}  &  \cellcolor{lightblue!50}\textcolor{darkred}{+13.3}  &  \cellcolor{lightblue!50}\textcolor{darkred}{+9.5}  &  \cellcolor{lightblue!50}\textcolor{darkred}{+8.2} \\
\cdashline{1-10}
Qwen3{-}8B{-}nothink & 46.0 & 41.9 & 42.5 & 32.5 & 29.5 & 45.5 & 45.5 & 38.0 & 36.7 \\
Qwen3{-}8B{-}think & 61.6 & 26.8 & 25.9 & 31.5 & 32.8 & 45.1 & 46.4 & 38.3 & 37.8 \\
$\Delta$  &  \cellcolor{lightblue!50}\textcolor{darkred}{-15.6} &  \cellcolor{lightblue!50}\textcolor{darkred}{+15.2}  &  \cellcolor{lightblue!50}\textcolor{darkred}{+16.6}  &  \cellcolor{lightblue!50}\textcolor{darkred}{+0.9}  &  \cellcolor{lightblue!50}\textcolor{darkgreen}{-3.3}  &  \cellcolor{lightblue!50}\textcolor{darkred}{+0.4}  &  \cellcolor{lightblue!50}\textcolor{darkgreen}{-0.9}  &  \cellcolor{lightblue!50}\textcolor{darkgreen}{-0.3}  &  \cellcolor{lightblue!50}\textcolor{darkgreen}{-1.1} \\
\cdashline{1-10}
Qwen3{-}32B{-}nothink & 51.5 & 38.5 & 39.2 & 34.0 & 33.8 & 38.5 & 39.2 & 29.5 & 28.9 \\
Qwen3{-}32B{-}think & 67.7 & 31.9 & 31.9 & 35.9 & 35.8 & 45.4 & 46.2 & 35.8 & 36.0 \\
$\Delta$  &  \cellcolor{lightblue!50}\textcolor{darkred}{-16.2} &  \cellcolor{lightblue!50}\textcolor{darkred}{+6.6}  &  \cellcolor{lightblue!50}\textcolor{darkred}{+7.3}  &  \cellcolor{lightblue!50}\textcolor{darkgreen}{-1.9}  &  \cellcolor{lightblue!50}\textcolor{darkgreen}{-2.0}  &  \cellcolor{lightblue!50}\textcolor{darkgreen}{-6.9}  &  \cellcolor{lightblue!50}\textcolor{darkgreen}{-7.1}  &  \cellcolor{lightblue!50}\textcolor{darkgreen}{-6.3}  &  \cellcolor{lightblue!50}\textcolor{darkgreen}{-7.1} \\
\cdashline{1-10}
GPT{-}OSS{-}20B{-}low & 56.6 & 24.2 & 19.7 & 25.8 & 20.0 & 24.2 & 19.7 & 12.8 & 7.6 \\
GPT{-}OSS{-}20B{-}high & 74.2 & 12.7 & 9.2 & 22.8 & 19.3 & 33.7 & 34.3 & 11.3 & 8.7 \\
$\Delta$  &  \cellcolor{lightblue!50}\textcolor{darkred}{-17.6} &  \cellcolor{lightblue!50}\textcolor{darkred}{+11.5}  &  \cellcolor{lightblue!50}\textcolor{darkred}{+10.5}  &  \cellcolor{lightblue!50}\textcolor{darkred}{+3.0}  &  \cellcolor{lightblue!50}\textcolor{darkred}{+0.7}  &  \cellcolor{lightblue!50}\textcolor{darkgreen}{-9.5}  &  \cellcolor{lightblue!50}\textcolor{darkgreen}{-14.6}  &  \cellcolor{lightblue!50}\textcolor{darkred}{+1.5}  &  \cellcolor{lightblue!50}\textcolor{darkgreen}{-1.1} \\
\cdashline{1-10}
GPT{-}OSS{-}120B{-}low & 64.6 & 17.5 & 11.6 & 20.9 & 14.5 & 17.5 & 11.6 & 12.9 & 8.6 \\
GPT{-}OSS{-}120B{-}high & 81.3 & 14.5 & 11.2 & 15.5 & 12.9 & 26.3 & 26.1 & 15.6 & 14.2 \\
$\Delta$  &  \cellcolor{lightblue!50}\textcolor{darkred}{-16.7} &  \cellcolor{lightblue!50}\textcolor{darkred}{+2.9}  &  \cellcolor{lightblue!50}\textcolor{darkred}{+0.5}  &  \cellcolor{lightblue!50}\textcolor{darkred}{+5.4}  &  \cellcolor{lightblue!50}\textcolor{darkred}{+1.6}  &  \cellcolor{lightblue!50}\textcolor{darkgreen}{-8.8}  &  \cellcolor{lightblue!50}\textcolor{darkgreen}{-14.5}  &  \cellcolor{lightblue!50}\textcolor{darkgreen}{-2.7}  &  \cellcolor{lightblue!50}\textcolor{darkgreen}{-5.5} \\
\cdashline{1-10}
GPT{-}OSS{-}20B{-}high & 74.2 & 12.7 & 9.2 & 22.8 & 19.3 & 33.7 & 34.3 & 11.3 & 8.7 \\
GPT{-}OSS{-}120B{-}high & 81.3 & 14.5 & 11.2 & 15.5 & 12.9 & 26.3 & 26.1 & 15.6 & 14.2 \\
$\Delta$  &  \cellcolor{lightblue!50}\textcolor{darkred}{-7.1} &  \cellcolor{lightblue!50}\textcolor{darkgreen}{-1.8}  &  \cellcolor{lightblue!50}\textcolor{darkgreen}{-2.0}  &  \cellcolor{lightblue!50}\textcolor{darkred}{+7.3}  &  \cellcolor{lightblue!50}\textcolor{darkred}{+6.4}  &  \cellcolor{lightblue!50}\textcolor{darkred}{+7.4}  &  \cellcolor{lightblue!50}\textcolor{darkred}{+8.2}  &  \cellcolor{lightblue!50}\textcolor{darkgreen}{-4.2}  &  \cellcolor{lightblue!50}\textcolor{darkgreen}{-5.5} \\
\midrule
\rowcolor[gray]{0.95}\multicolumn{10}{c}{\textit{P(True)}} \\
Qwen2.5{-}7B & 33.3 & 34.7 & 33.5 & 39.5 & 37.5 & 34.7 & 33.5 & 31.1 & 30.1 \\
Qwen2.5{-}72B & 50.0 & 40.3 & 39.1 & 34.8 & 32.0 & 47.4 & 48.2 & 46.4 & 48.8 \\
$\Delta$  &  \cellcolor{lightblue!50}\textcolor{darkred}{-16.7} &  \cellcolor{lightblue!50}\textcolor{darkgreen}{-5.6}  &  \cellcolor{lightblue!50}\textcolor{darkgreen}{-5.6}  &  \cellcolor{lightblue!50}\textcolor{darkred}{+4.8}  &  \cellcolor{lightblue!50}\textcolor{darkred}{+5.5}  &  \cellcolor{lightblue!50}\textcolor{darkgreen}{-12.7}  &  \cellcolor{lightblue!50}\textcolor{darkgreen}{-14.7}  &  \cellcolor{lightblue!50}\textcolor{darkgreen}{-15.3}  &  \cellcolor{lightblue!50}\textcolor{darkgreen}{-18.7} \\
\cdashline{1-10}
Llama3.1{-}8B & 18.7 & 38.2 & 40.4 & 36.1 & 34.3 & 38.2 & 40.4 & 39.4 & 43.0 \\
Llama3.1{-}70B & 39.9 & 39.8 & 37.3 & 37.5 & 35.4 & 49.2 & 52.0 & 51.3 & 55.9 \\
$\Delta$  &  \cellcolor{lightblue!50}\textcolor{darkred}{-21.2} &  \cellcolor{lightblue!50}\textcolor{darkgreen}{-1.6}  &  \cellcolor{lightblue!50}\textcolor{darkred}{+3.1}  &  \cellcolor{lightblue!50}\textcolor{darkgreen}{-1.4}  &  \cellcolor{lightblue!50}\textcolor{darkgreen}{-1.2}  &  \cellcolor{lightblue!50}\textcolor{darkgreen}{-11.1}  &  \cellcolor{lightblue!50}\textcolor{darkgreen}{-11.6}  &  \cellcolor{lightblue!50}\textcolor{darkgreen}{-12.0}  &  \cellcolor{lightblue!50}\textcolor{darkgreen}{-12.9} \\
\cdashline{1-10}
Qwen3{-}8B{-}nothink & 46.0 & 40.6 & 39.3 & 32.5 & 29.5 & 40.6 & 39.3 & 41.7 & 40.2 \\
Qwen3{-}8B{-}think & 61.6 & 26.8 & 25.9 & 31.5 & 32.8 & 47.5 & 47.9 & 43.4 & 43.6 \\
$\Delta$  &  \cellcolor{lightblue!50}\textcolor{darkred}{-15.6} &  \cellcolor{lightblue!50}\textcolor{darkred}{+13.8}  &  \cellcolor{lightblue!50}\textcolor{darkred}{+13.4}  &  \cellcolor{lightblue!50}\textcolor{darkred}{+0.9}  &  \cellcolor{lightblue!50}\textcolor{darkgreen}{-3.3}  &  \cellcolor{lightblue!50}\textcolor{darkgreen}{-6.9}  &  \cellcolor{lightblue!50}\textcolor{darkgreen}{-8.6}  &  \cellcolor{lightblue!50}\textcolor{darkgreen}{-1.7}  &  \cellcolor{lightblue!50}\textcolor{darkgreen}{-3.4} \\
\cdashline{1-10}
Qwen3{-}32B{-}nothink & 51.5 & 32.1 & 28.8 & 32.0 & 28.9 & 32.1 & 28.8 & 28.8 & 25.2 \\
Qwen3{-}32B{-}think & 67.7 & 38.1 & 38.1 & 38.2 & 38.1 & 47.8 & 47.7 & 41.4 & 41.4 \\
$\Delta$  &  \cellcolor{lightblue!50}\textcolor{darkred}{-16.2} &  \cellcolor{lightblue!50}\textcolor{darkgreen}{-6.0}  &  \cellcolor{lightblue!50}\textcolor{darkgreen}{-9.3}  &  \cellcolor{lightblue!50}\textcolor{darkgreen}{-6.2}  &  \cellcolor{lightblue!50}\textcolor{darkgreen}{-9.2}  &  \cellcolor{lightblue!50}\textcolor{darkgreen}{-15.7}  &  \cellcolor{lightblue!50}\textcolor{darkgreen}{-18.9}  &  \cellcolor{lightblue!50}\textcolor{darkgreen}{-12.6}  &  \cellcolor{lightblue!50}\textcolor{darkgreen}{-16.2} \\
\cdashline{1-10}
GPT{-}OSS{-}20B{-}low & 56.6 & 29.9 & 30.2 & 36.8 & 37.0 & 29.9 & 30.2 & 23.9 & 24.2 \\
GPT{-}OSS{-}20B{-}high & 74.2 & 17.4 & 17.6 & 22.5 & 22.7 & 38.7 & 38.8 & 17.4 & 17.4 \\
$\Delta$  &  \cellcolor{lightblue!50}\textcolor{darkred}{-17.6} &  \cellcolor{lightblue!50}\textcolor{darkred}{+12.5}  &  \cellcolor{lightblue!50}\textcolor{darkred}{+12.7}  &  \cellcolor{lightblue!50}\textcolor{darkred}{+14.4}  &  \cellcolor{lightblue!50}\textcolor{darkred}{+14.3}  &  \cellcolor{lightblue!50}\textcolor{darkgreen}{-8.8}  &  \cellcolor{lightblue!50}\textcolor{darkgreen}{-8.6}  &  \cellcolor{lightblue!50}\textcolor{darkred}{+6.5}  &  \cellcolor{lightblue!50}\textcolor{darkred}{+6.8} \\
\cdashline{1-10}
GPT{-}OSS{-}120B{-}low & 64.6 & 28.5 & 28.5 & 27.2 & 27.0 & 28.5 & 28.5 & 19.4 & 19.4 \\
GPT{-}OSS{-}120B{-}high & 81.3 & 25.2 & 25.2 & 25.0 & 25.0 & 39.5 & 39.5 & 25.2 & 25.2 \\
$\Delta$  &  \cellcolor{lightblue!50}\textcolor{darkred}{-16.7} &  \cellcolor{lightblue!50}\textcolor{darkred}{+3.2}  &  \cellcolor{lightblue!50}\textcolor{darkred}{+3.3}  &  \cellcolor{lightblue!50}\textcolor{darkred}{+2.2}  &  \cellcolor{lightblue!50}\textcolor{darkred}{+1.9}  &  \cellcolor{lightblue!50}\textcolor{darkgreen}{-11.0}  &  \cellcolor{lightblue!50}\textcolor{darkgreen}{-10.9}  &  \cellcolor{lightblue!50}\textcolor{darkgreen}{-5.7}  &  \cellcolor{lightblue!50}\textcolor{darkgreen}{-5.8} \\
\cdashline{1-10}
GPT{-}OSS{-}20B{-}high & 74.2 & 17.4 & 17.6 & 22.5 & 22.7 & 38.7 & 38.8 & 17.4 & 17.4 \\
GPT{-}OSS{-}120B{-}high & 81.3 & 25.2 & 25.2 & 25.0 & 25.0 & 39.5 & 39.5 & 25.2 & 25.2 \\
$\Delta$  &  \cellcolor{lightblue!50}\textcolor{darkred}{-7.1} &  \cellcolor{lightblue!50}\textcolor{darkgreen}{-7.8}  &  \cellcolor{lightblue!50}\textcolor{darkgreen}{-7.7}  &  \cellcolor{lightblue!50}\textcolor{darkgreen}{-2.5}  &  \cellcolor{lightblue!50}\textcolor{darkgreen}{-2.3}  &  \cellcolor{lightblue!50}\textcolor{darkgreen}{-0.7}  &  \cellcolor{lightblue!50}\textcolor{darkgreen}{-0.7}  &  \cellcolor{lightblue!50}\textcolor{darkgreen}{-7.7}  &  \cellcolor{lightblue!50}\textcolor{darkgreen}{-7.7} \\
\midrule
\rowcolor[gray]{0.95}\multicolumn{10}{c}{\textit{Self-Consistency}} \\
Qwen2.5{-}7B & 33.3 & 55.5 & 56.2 & 32.5 & 34.7 & 55.5 & 56.2 & 52.5 & 54.1 \\
Qwen2.5{-}72B & 50.0 & 32.4 & 34.1 & 32.5 & 34.7 & 32.4 & 34.1 & 32.5 & 34.7 \\
$\Delta$  &  \cellcolor{lightblue!50}\textcolor{darkred}{-16.7} &  \cellcolor{lightblue!50}\textcolor{darkred}{+23.2}  &  \cellcolor{lightblue!50}\textcolor{darkred}{+22.1}  &  \cellcolor{lightblue!50}0.0  &  \cellcolor{lightblue!50}0.0  &  \cellcolor{lightblue!50}\textcolor{darkred}{+23.2}  &  \cellcolor{lightblue!50}\textcolor{darkred}{+22.1}  &  \cellcolor{lightblue!50}\textcolor{darkred}{+19.9}  &  \cellcolor{lightblue!50}\textcolor{darkred}{+19.3} \\
\cdashline{1-10}
Llama3.1{-}8B & 18.7 & 18.9 & 17.5 & 17.2 & 20.2 & 18.9 & 17.5 & 17.2 & 20.2 \\
Llama3.1{-}70B & 39.9 & 28.3 & 32.5 & 28.7 & 34.0 & 28.3 & 32.5 & 28.7 & 34.0 \\
$\Delta$  &  \cellcolor{lightblue!50}\textcolor{darkred}{-21.2} &  \cellcolor{lightblue!50}\textcolor{darkgreen}{-9.4}  &  \cellcolor{lightblue!50}\textcolor{darkgreen}{-14.9}  &  \cellcolor{lightblue!50}\textcolor{darkgreen}{-11.5}  &  \cellcolor{lightblue!50}\textcolor{darkgreen}{-13.9}  &  \cellcolor{lightblue!50}\textcolor{darkgreen}{-9.4}  &  \cellcolor{lightblue!50}\textcolor{darkgreen}{-14.9}  &  \cellcolor{lightblue!50}\textcolor{darkgreen}{-11.5}  &  \cellcolor{lightblue!50}\textcolor{darkgreen}{-13.9} \\
\cdashline{1-10}
Qwen3{-}8B{-}nothink & 46.0 & 23.2 & 12.8 & 20.1 & 13.5 & 23.2 & 14.6 & 24.1 & 15.4 \\
Qwen3{-}8B{-}think & 61.6 & 23.2 & 18.0 & 22.0 & 12.7 & 29.4 & 27.5 & 26.4 & 24.4 \\
$\Delta$  &  \cellcolor{lightblue!50}\textcolor{darkred}{-15.6} &  \cellcolor{lightblue!50}0.0  &  \cellcolor{lightblue!50}\textcolor{darkgreen}{-5.2}  &  \cellcolor{lightblue!50}\textcolor{darkgreen}{-1.9}  &  \cellcolor{lightblue!50}\textcolor{darkred}{+0.8}  &  \cellcolor{lightblue!50}\textcolor{darkgreen}{-6.2}  &  \cellcolor{lightblue!50}\textcolor{darkgreen}{-12.9}  &  \cellcolor{lightblue!50}\textcolor{darkgreen}{-2.3}  &  \cellcolor{lightblue!50}\textcolor{darkgreen}{-9.0} \\
\cdashline{1-10}
Qwen3{-}32B{-}nothink & 51.5 & 20.3 & 13.7 & 15.2 & 10.5 & 20.3 & 13.7 & 17.8 & 10.6 \\
Qwen3{-}32B{-}think & 67.7 & 18.8 & 14.8 & 17.8 & 9.2 & 26.1 & 26.0 & 20.5 & 20.5 \\
$\Delta$  &  \cellcolor{lightblue!50}\textcolor{darkred}{-16.2} &  \cellcolor{lightblue!50}\textcolor{darkred}{+1.5}  &  \cellcolor{lightblue!50}\textcolor{darkgreen}{-1.1}  &  \cellcolor{lightblue!50}\textcolor{darkgreen}{-2.6}  &  \cellcolor{lightblue!50}\textcolor{darkred}{+1.3}  &  \cellcolor{lightblue!50}\textcolor{darkgreen}{-5.8}  &  \cellcolor{lightblue!50}\textcolor{darkgreen}{-12.3}  &  \cellcolor{lightblue!50}\textcolor{darkgreen}{-2.8}  &  \cellcolor{lightblue!50}\textcolor{darkgreen}{-9.9} \\
\cdashline{1-10}
GPT{-}OSS{-}20B{-}low & 56.6 & 18.4 & 14.8 & 12.2 & 10.4 & 18.4 & 14.8 & 16.9 & 13.5 \\
GPT{-}OSS{-}20B{-}high & 74.2 & 15.7 & 13.0 & 15.8 & 6.9 & 22.1 & 22.5 & 16.8 & 18.4 \\
$\Delta$  &  \cellcolor{lightblue!50}\textcolor{darkred}{-17.6} &  \cellcolor{lightblue!50}\textcolor{darkred}{+2.7}  &  \cellcolor{lightblue!50}\textcolor{darkred}{+1.8}  &  \cellcolor{lightblue!50}\textcolor{darkgreen}{-3.6}  &  \cellcolor{lightblue!50}\textcolor{darkred}{+3.5}  &  \cellcolor{lightblue!50}\textcolor{darkgreen}{-3.7}  &  \cellcolor{lightblue!50}\textcolor{darkgreen}{-7.7}  &  \cellcolor{lightblue!50}\textcolor{darkred}{+0.1}  &  \cellcolor{lightblue!50}\textcolor{darkgreen}{-4.9} \\
\cdashline{1-10}
GPT{-}OSS{-}120B{-}low & 64.6 & 19.5 & 15.2 & 13.0 & 10.0 & 19.5 & 15.2 & 14.3 & 10.9 \\
GPT{-}OSS{-}120B{-}high & 81.3 & 11.0 & 9.9 & 12.3 & 7.2 & 15.4 & 16.0 & 11.0 & 10.1 \\
$\Delta$  &  \cellcolor{lightblue!50}\textcolor{darkred}{-16.7} &  \cellcolor{lightblue!50}\textcolor{darkred}{+8.5}  &  \cellcolor{lightblue!50}\textcolor{darkred}{+5.2}  &  \cellcolor{lightblue!50}\textcolor{darkred}{+0.6}  &  \cellcolor{lightblue!50}\textcolor{darkred}{+2.8}  &  \cellcolor{lightblue!50}\textcolor{darkred}{+4.0}  &  \cellcolor{lightblue!50}\textcolor{darkgreen}{-0.9}  &  \cellcolor{lightblue!50}\textcolor{darkred}{+3.3}  &  \cellcolor{lightblue!50}\textcolor{darkred}{+0.8} \\
\cdashline{1-10}
GPT{-}OSS{-}20B{-}high & 74.2 & 15.7 & 13.0 & 15.8 & 6.9 & 22.1 & 22.5 & 16.8 & 18.4 \\
GPT{-}OSS{-}120B{-}high & 81.3 & 11.0 & 9.9 & 12.3 & 7.2 & 15.4 & 16.0 & 11.0 & 10.1 \\
$\Delta$  &  \cellcolor{lightblue!50}\textcolor{darkred}{-7.1} &  \cellcolor{lightblue!50}\textcolor{darkred}{+4.6}  &  \cellcolor{lightblue!50}\textcolor{darkred}{+3.1}  &  \cellcolor{lightblue!50}\textcolor{darkred}{+3.5}  &  \cellcolor{lightblue!50}\textcolor{darkgreen}{-0.3}  &  \cellcolor{lightblue!50}\textcolor{darkred}{+6.7}  &  \cellcolor{lightblue!50}\textcolor{darkred}{+6.5}  &  \cellcolor{lightblue!50}\textcolor{darkred}{+5.8}  &  \cellcolor{lightblue!50}\textcolor{darkred}{+8.2} \\
\bottomrule
\end{tabular}
}
\caption{Calibration metrics on \textbf{GPQA Diamond}. For calibration metrics (Brier, ECE), lower is better ($\downarrow$). $\Delta$ rows show the difference (Model$_1$ $-$ Model$_2$). \textcolor{darkgreen}{Green}: better; \textcolor{darkred}{Red}: worse.}
\label{tab:gpqa_diamond}
\end{table*}


\begin{table*}[t!]
\setlength\tabcolsep{15.2pt}
\centering

\scalebox{0.72}{
\begin{tabular}{l c cc cc cc cc}
\toprule
\textbf{Model} & \textbf{Acc} & \multicolumn{2}{c}{\textbf{Raw}} & \multicolumn{2}{c}{\textbf{CA}} & \multicolumn{2}{c}{\textbf{DA}} & \multicolumn{2}{c}{\textbf{IA}} \\
\cmidrule(lr){3-4}\cmidrule(lr){5-6}\cmidrule(lr){7-8}\cmidrule(lr){9-10}
 & & Brier & ECE & Brier & ECE & Brier & ECE & Brier & ECE \\
\midrule
\rowcolor[gray]{0.95}\multicolumn{10}{c}{\textit{Verbalize}} \\
Qwen2.5{-}7B & 17.0 & 70.6 & 74.1 & 64.5 & 67.0 & 73.0 & 76.8 & 71.2 & 75.7 \\
Qwen2.5{-}72B & 36.5 & 55.9 & 57.9 & 60.5 & 64.8 & 73.0 & 76.8 & 74.2 & 78.4 \\
$\Delta$  &  \cellcolor{lightblue!50}\textcolor{darkred}{-19.5} &  \cellcolor{lightblue!50}\textcolor{darkred}{+14.7}  &  \cellcolor{lightblue!50}\textcolor{darkred}{+16.2}  &  \cellcolor{lightblue!50}\textcolor{darkred}{+4.0}  &  \cellcolor{lightblue!50}\textcolor{darkred}{+2.3}  &  \cellcolor{lightblue!50}0.0  &  \cellcolor{lightblue!50}0.0  &  \cellcolor{lightblue!50}\textcolor{darkgreen}{-3.1}  &  \cellcolor{lightblue!50}\textcolor{darkgreen}{-2.7} \\
\cdashline{1-10}
Llama3.1{-}8B & 9.5 & 70.5 & 73.0 & 65.1 & 66.8 & 70.5 & 73.0 & 70.7 & 73.9 \\
Llama3.1{-}70B & 22.5 & 52.1 & 53.8 & 44.7 & 47.8 & 57.1 & 60.3 & 59.0 & 62.5 \\
$\Delta$  &  \cellcolor{lightblue!50}\textcolor{darkred}{-13.0} &  \cellcolor{lightblue!50}\textcolor{darkred}{+18.4}  &  \cellcolor{lightblue!50}\textcolor{darkred}{+19.2}  &  \cellcolor{lightblue!50}\textcolor{darkred}{+20.4}  &  \cellcolor{lightblue!50}\textcolor{darkred}{+19.0}  &  \cellcolor{lightblue!50}\textcolor{darkred}{+13.4}  &  \cellcolor{lightblue!50}\textcolor{darkred}{+12.7}  &  \cellcolor{lightblue!50}\textcolor{darkred}{+11.7}  &  \cellcolor{lightblue!50}\textcolor{darkred}{+11.5} \\
\cdashline{1-10}
Qwen3{-}8B{-}nothink & 38.5 & 55.6 & 55.4 & 21.3 & 19.1 & 53.5 & 55.1 & 31.5 & 30.8 \\
Qwen3{-}8B{-}think & 77.0 & 17.3 & 16.0 & 37.6 & 38.5 & 49.4 & 53.4 & 29.0 & 30.7 \\
$\Delta$  &  \cellcolor{lightblue!50}\textcolor{darkred}{-38.5} &  \cellcolor{lightblue!50}\textcolor{darkred}{+38.3}  &  \cellcolor{lightblue!50}\textcolor{darkred}{+39.4}  &  \cellcolor{lightblue!50}\textcolor{darkgreen}{-16.3}  &  \cellcolor{lightblue!50}\textcolor{darkgreen}{-19.4}  &  \cellcolor{lightblue!50}\textcolor{darkred}{+4.1}  &  \cellcolor{lightblue!50}\textcolor{darkred}{+1.8}  &  \cellcolor{lightblue!50}\textcolor{darkred}{+2.4}  &  \cellcolor{lightblue!50}+0.0 \\
\cdashline{1-10}
Qwen3{-}32B{-}nothink & 47.0 & 43.7 & 44.5 & 27.7 & 27.8 & 43.7 & 44.5 & 18.9 & 17.3 \\
Qwen3{-}32B{-}think & 85.5 & 12.0 & 11.7 & 27.4 & 27.8 & 40.7 & 43.1 & 17.3 & 18.1 \\
$\Delta$  &  \cellcolor{lightblue!50}\textcolor{darkred}{-38.5} &  \cellcolor{lightblue!50}\textcolor{darkred}{+31.7}  &  \cellcolor{lightblue!50}\textcolor{darkred}{+32.8}  &  \cellcolor{lightblue!50}\textcolor{darkred}{+0.3}  &  \cellcolor{lightblue!50}0.0  &  \cellcolor{lightblue!50}\textcolor{darkred}{+3.0}  &  \cellcolor{lightblue!50}\textcolor{darkred}{+1.4}  &  \cellcolor{lightblue!50}\textcolor{darkred}{+1.6}  &  \cellcolor{lightblue!50}\textcolor{darkgreen}{-0.7} \\
\cdashline{1-10}
GPT{-}OSS{-}20B{-}low & 59.5 & 16.5 & 8.9 & 23.8 & 20.8 & 16.5 & 8.9 & 8.5 & 16.3 \\
GPT{-}OSS{-}20B{-}high & 89.5 & 2.2 & 6.3 & 16.9 & 14.9 & 13.7 & 14.8 & 2.2 & 7.4 \\
$\Delta$  &  \cellcolor{lightblue!50}\textcolor{darkred}{-30.0} &  \cellcolor{lightblue!50}\textcolor{darkred}{+14.3}  &  \cellcolor{lightblue!50}\textcolor{darkred}{+2.6}  &  \cellcolor{lightblue!50}\textcolor{darkred}{+7.0}  &  \cellcolor{lightblue!50}\textcolor{darkred}{+5.9}  &  \cellcolor{lightblue!50}\textcolor{darkred}{+2.8}  &  \cellcolor{lightblue!50}\textcolor{darkgreen}{-5.9}  &  \cellcolor{lightblue!50}\textcolor{darkred}{+6.3}  &  \cellcolor{lightblue!50}\textcolor{darkred}{+8.8} \\
\cdashline{1-10}
GPT{-}OSS{-}120B{-}low & 69.0 & 9.1 & 6.5 & 23.8 & 20.8 & 9.1 & 6.5 & 6.0 & 13.2 \\
GPT{-}OSS{-}120B{-}high & 93.0 & 3.0 & 6.4 & 8.5 & 5.2 & 9.4 & 11.0 & 2.5 & 5.2 \\
$\Delta$  &  \cellcolor{lightblue!50}\textcolor{darkred}{-24.0} &  \cellcolor{lightblue!50}\textcolor{darkred}{+6.1}  &  \cellcolor{lightblue!50}\textcolor{darkred}{+0.1}  &  \cellcolor{lightblue!50}\textcolor{darkred}{+15.4}  &  \cellcolor{lightblue!50}\textcolor{darkred}{+15.5}  &  \cellcolor{lightblue!50}\textcolor{darkgreen}{-0.3}  &  \cellcolor{lightblue!50}\textcolor{darkgreen}{-4.6}  &  \cellcolor{lightblue!50}\textcolor{darkred}{+3.4}  &  \cellcolor{lightblue!50}\textcolor{darkred}{+8.0} \\
\cdashline{1-10}
GPT{-}OSS{-}20B{-}high & 89.5 & 2.2 & 6.3 & 16.9 & 14.9 & 13.7 & 14.8 & 2.2 & 7.4 \\
GPT{-}OSS{-}120B{-}high & 93.0 & 3.0 & 6.4 & 8.5 & 5.2 & 9.4 & 11.0 & 2.5 & 5.2 \\
$\Delta$  &  \cellcolor{lightblue!50}\textcolor{darkred}{-3.5} &  \cellcolor{lightblue!50}\textcolor{darkgreen}{-0.8}  &  \cellcolor{lightblue!50}\textcolor{darkgreen}{-0.1}  &  \cellcolor{lightblue!50}\textcolor{darkred}{+8.4}  &  \cellcolor{lightblue!50}\textcolor{darkred}{+9.7}  &  \cellcolor{lightblue!50}\textcolor{darkred}{+4.3}  &  \cellcolor{lightblue!50}\textcolor{darkred}{+3.7}  &  \cellcolor{lightblue!50}\textcolor{darkgreen}{-0.3}  &  \cellcolor{lightblue!50}\textcolor{darkred}{+2.2} \\
\midrule
\rowcolor[gray]{0.95}\multicolumn{10}{c}{\textit{P(True)}} \\
Qwen2.5{-}7B & 17.0 & 59.1 & 61.4 & 57.6 & 59.2 & 59.1 & 61.4 & 57.2 & 59.6 \\
Qwen2.5{-}72B & 36.5 & 54.1 & 54.3 & 52.7 & 54.6 & 69.8 & 71.6 & 72.3 & 73.5 \\
$\Delta$  &  \cellcolor{lightblue!50}\textcolor{darkred}{-19.5} &  \cellcolor{lightblue!50}\textcolor{darkred}{+5.0}  &  \cellcolor{lightblue!50}\textcolor{darkred}{+7.1}  &  \cellcolor{lightblue!50}\textcolor{darkred}{+5.0}  &  \cellcolor{lightblue!50}\textcolor{darkred}{+4.6}  &  \cellcolor{lightblue!50}\textcolor{darkgreen}{-10.7}  &  \cellcolor{lightblue!50}\textcolor{darkgreen}{-10.2}  &  \cellcolor{lightblue!50}\textcolor{darkgreen}{-15.1}  &  \cellcolor{lightblue!50}\textcolor{darkgreen}{-13.9} \\
\cdashline{1-10}
Llama3.1{-}8B & 9.5 & 34.9 & 45.9 & 38.8 & 46.9 & 34.9 & 45.9 & 34.4 & 47.8 \\
Llama3.1{-}70B & 22.5 & 52.1 & 53.8 & 44.7 & 47.8 & 62.2 & 69.9 & 64.7 & 73.3 \\
$\Delta$  &  \cellcolor{lightblue!50}\textcolor{darkred}{-13.0} &  \cellcolor{lightblue!50}\textcolor{darkgreen}{-17.2}  &  \cellcolor{lightblue!50}\textcolor{darkgreen}{-7.9}  &  \cellcolor{lightblue!50}\textcolor{darkgreen}{-5.9}  &  \cellcolor{lightblue!50}\textcolor{darkgreen}{-0.9}  &  \cellcolor{lightblue!50}\textcolor{darkgreen}{-27.3}  &  \cellcolor{lightblue!50}\textcolor{darkgreen}{-24.1}  &  \cellcolor{lightblue!50}\textcolor{darkgreen}{-30.3}  &  \cellcolor{lightblue!50}\textcolor{darkgreen}{-25.6} \\
\cdashline{1-10}
Qwen3{-}8B{-}nothink & 38.5 & 45.1 & 45.5 & 34.0 & 33.3 & 45.1 & 45.5 & 34.0 & 33.3 \\
Qwen3{-}8B{-}think & 77.0 & 29.0 & 30.0 & 29.0 & 30.0 & 46.9 & 49.5 & 29.0 & 30.0 \\
$\Delta$  &  \cellcolor{lightblue!50}\textcolor{darkred}{-38.5} &  \cellcolor{lightblue!50}\textcolor{darkred}{+16.1}  &  \cellcolor{lightblue!50}\textcolor{darkred}{+15.5}  &  \cellcolor{lightblue!50}\textcolor{darkred}{+5.0}  &  \cellcolor{lightblue!50}\textcolor{darkred}{+3.3}  &  \cellcolor{lightblue!50}\textcolor{darkgreen}{-1.9}  &  \cellcolor{lightblue!50}\textcolor{darkgreen}{-4.1}  &  \cellcolor{lightblue!50}\textcolor{darkred}{+5.0}  &  \cellcolor{lightblue!50}\textcolor{darkred}{+3.3} \\
\cdashline{1-10}
Qwen3{-}32B{-}nothink & 47.0 & 30.3 & 28.0 & 19.5 & 17.4 & 30.3 & 28.0 & 22.5 & 24.6 \\
Qwen3{-}32B{-}think & 85.5 & 12.8 & 13.0 & 21.3 & 21.0 & 40.0 & 40.6 & 18.7 & 19.0 \\
$\Delta$  &  \cellcolor{lightblue!50}\textcolor{darkred}{-38.5} &  \cellcolor{lightblue!50}\textcolor{darkred}{+17.4}  &  \cellcolor{lightblue!50}\textcolor{darkred}{+15.0}  &  \cellcolor{lightblue!50}\textcolor{darkgreen}{-1.8}  &  \cellcolor{lightblue!50}\textcolor{darkgreen}{-3.7}  &  \cellcolor{lightblue!50}\textcolor{darkgreen}{-9.7}  &  \cellcolor{lightblue!50}\textcolor{darkgreen}{-12.6}  &  \cellcolor{lightblue!50}\textcolor{darkred}{+3.8}  &  \cellcolor{lightblue!50}\textcolor{darkred}{+5.6} \\
\cdashline{1-10}
GPT{-}OSS{-}20B{-}low & 59.5 & 21.9 & 22.2 & 15.3 & 15.7 & 21.9 & 22.2 & 12.7 & 12.8 \\
GPT{-}OSS{-}20B{-}high & 89.5 & 6.2 & 6.2 & 12.7 & 13.0 & 23.3 & 23.3 & 3.9 & 3.9 \\
$\Delta$  &  \cellcolor{lightblue!50}\textcolor{darkred}{-30.0} &  \cellcolor{lightblue!50}\textcolor{darkred}{+15.7}  &  \cellcolor{lightblue!50}\textcolor{darkred}{+16.0}  &  \cellcolor{lightblue!50}\textcolor{darkred}{+2.6}  &  \cellcolor{lightblue!50}\textcolor{darkred}{+2.8}  &  \cellcolor{lightblue!50}\textcolor{darkgreen}{-1.4}  &  \cellcolor{lightblue!50}\textcolor{darkgreen}{-1.1}  &  \cellcolor{lightblue!50}\textcolor{darkred}{+8.8}  &  \cellcolor{lightblue!50}\textcolor{darkred}{+8.9} \\
\cdashline{1-10}
GPT{-}OSS{-}120B{-}low & 69.0 & 15.0 & 15.0 & 9.4 & 9.3 & 15.0 & 15.0 & 12.5 & 12.5 \\
GPT{-}OSS{-}120B{-}high & 93.0 & 4.5 & 4.5 & 5.0 & 5.0 & 16.2 & 16.2 & 3.5 & 3.5 \\
$\Delta$  &  \cellcolor{lightblue!50}\textcolor{darkred}{-24.0} &  \cellcolor{lightblue!50}\textcolor{darkred}{+10.5}  &  \cellcolor{lightblue!50}\textcolor{darkred}{+10.5}  &  \cellcolor{lightblue!50}\textcolor{darkred}{+4.4}  &  \cellcolor{lightblue!50}\textcolor{darkred}{+4.3}  &  \cellcolor{lightblue!50}\textcolor{darkgreen}{-1.2}  &  \cellcolor{lightblue!50}\textcolor{darkgreen}{-1.3}  &  \cellcolor{lightblue!50}\textcolor{darkred}{+9.0}  &  \cellcolor{lightblue!50}\textcolor{darkred}{+9.0} \\
\cdashline{1-10}
GPT{-}OSS{-}20B{-}high & 89.5 & 6.2 & 6.2 & 12.7 & 13.0 & 23.3 & 23.3 & 3.9 & 3.9 \\
GPT{-}OSS{-}120B{-}high & 93.0 & 4.5 & 4.5 & 5.0 & 5.0 & 16.2 & 16.2 & 3.5 & 3.5 \\
$\Delta$  &  \cellcolor{lightblue!50}\textcolor{darkred}{-3.5} &  \cellcolor{lightblue!50}\textcolor{darkred}{+1.8}  &  \cellcolor{lightblue!50}\textcolor{darkred}{+1.7}  &  \cellcolor{lightblue!50}\textcolor{darkred}{+7.7}  &  \cellcolor{lightblue!50}\textcolor{darkred}{+8.0}  &  \cellcolor{lightblue!50}\textcolor{darkred}{+7.1}  &  \cellcolor{lightblue!50}\textcolor{darkred}{+7.0}  &  \cellcolor{lightblue!50}\textcolor{darkred}{+0.4}  &  \cellcolor{lightblue!50}\textcolor{darkred}{+0.4} \\
\midrule
\rowcolor[gray]{0.95}\multicolumn{10}{c}{\textit{Self-Consistency}} \\
Qwen2.5{-}7B & 17.0 & 20.0 & 18.8 & 12.3 & 16.0 & 20.0 & 18.8 & 19.5 & 20.2 \\
Qwen2.5{-}72B & 36.5 & 14.5 & 11.8 & 12.3 & 16.0 & 14.5 & 11.8 & 12.3 & 16.0 \\
$\Delta$  &  \cellcolor{lightblue!50}\textcolor{darkred}{-19.5} &  \cellcolor{lightblue!50}\textcolor{darkred}{+5.5}  &  \cellcolor{lightblue!50}\textcolor{darkred}{+7.0}  &  \cellcolor{lightblue!50}0.0  &  \cellcolor{lightblue!50}0.0  &  \cellcolor{lightblue!50}\textcolor{darkred}{+5.5}  &  \cellcolor{lightblue!50}\textcolor{darkred}{+7.0}  &  \cellcolor{lightblue!50}\textcolor{darkred}{+7.2}  &  \cellcolor{lightblue!50}\textcolor{darkred}{+4.2} \\
\cdashline{1-10}
Llama3.1{-}8B & 9.5 & 7.8 & 5.3 & 6.5 & 7.0 & 7.8 & 5.3 & 6.5 & 7.0 \\
Llama3.1{-}70B & 22.5 & 8.5 & 8.8 & 7.2 & 11.9 & 8.5 & 8.8 & 7.2 & 11.9 \\
$\Delta$  &  \cellcolor{lightblue!50}\textcolor{darkred}{-13.0} &  \cellcolor{lightblue!50}\textcolor{darkgreen}{-0.7}  &  \cellcolor{lightblue!50}\textcolor{darkgreen}{-3.5}  &  \cellcolor{lightblue!50}\textcolor{darkgreen}{-0.8}  &  \cellcolor{lightblue!50}\textcolor{darkgreen}{-4.9}  &  \cellcolor{lightblue!50}\textcolor{darkgreen}{-0.7}  &  \cellcolor{lightblue!50}\textcolor{darkgreen}{-3.5}  &  \cellcolor{lightblue!50}\textcolor{darkgreen}{-0.8}  &  \cellcolor{lightblue!50}\textcolor{darkgreen}{-4.9} \\
\cdashline{1-10}
Qwen3{-}8B{-}nothink & 38.5 & 17.0 & 7.6 & 11.8 & 10.1 & 17.0 & 8.6 & 21.9 & 21.9 \\
Qwen3{-}8B{-}think & 77.0 & 12.5 & 15.3 & 22.4 & 33.7 & 9.8 & 9.0 & 9.2 & 8.8 \\
$\Delta$  &  \cellcolor{lightblue!50}\textcolor{darkred}{-38.5} &  \cellcolor{lightblue!50}\textcolor{darkred}{+4.5}  &  \cellcolor{lightblue!50}\textcolor{darkgreen}{-7.8}  &  \cellcolor{lightblue!50}\textcolor{darkgreen}{-10.6}  &  \cellcolor{lightblue!50}\textcolor{darkgreen}{-23.6}  &  \cellcolor{lightblue!50}\textcolor{darkred}{+7.2}  &  \cellcolor{lightblue!50}\textcolor{darkgreen}{-0.4}  &  \cellcolor{lightblue!50}\textcolor{darkred}{+12.7}  &  \cellcolor{lightblue!50}\textcolor{darkred}{+13.1} \\
\cdashline{1-10}
Qwen3{-}32B{-}nothink & 47.0 & 16.4 & 8.8 & 9.1 & 10.1 & 16.4 & 8.8 & 20.9 & 27.0 \\
Qwen3{-}32B{-}think & 85.5 & 11.8 & 17.4 & 22.0 & 33.5 & 10.0 & 8.0 & 9.4 & 9.3 \\
$\Delta$  &  \cellcolor{lightblue!50}\textcolor{darkred}{-38.5} &  \cellcolor{lightblue!50}\textcolor{darkred}{+4.6}  &  \cellcolor{lightblue!50}\textcolor{darkgreen}{-8.7}  &  \cellcolor{lightblue!50}\textcolor{darkgreen}{-12.8}  &  \cellcolor{lightblue!50}\textcolor{darkgreen}{-23.4}  &  \cellcolor{lightblue!50}\textcolor{darkred}{+6.4}  &  \cellcolor{lightblue!50}\textcolor{darkred}{+0.8}  &  \cellcolor{lightblue!50}\textcolor{darkred}{+11.5}  &  \cellcolor{lightblue!50}\textcolor{darkred}{+17.7} \\
\cdashline{1-10}
GPT{-}OSS{-}20B{-}low & 59.5 & 16.1 & 18.1 & 13.2 & 19.0 & 16.3 & 18.7 & 26.6 & 43.6 \\
GPT{-}OSS{-}20B{-}high & 89.5 & 16.9 & 27.2 & 26.6 & 43.7 & 12.3 & 9.3 & 10.7 & 19.2 \\
$\Delta$  &  \cellcolor{lightblue!50}\textcolor{darkred}{-30.0} &  \cellcolor{lightblue!50}\textcolor{darkgreen}{-0.7}  &  \cellcolor{lightblue!50}\textcolor{darkgreen}{-9.0}  &  \cellcolor{lightblue!50}\textcolor{darkgreen}{-13.3}  &  \cellcolor{lightblue!50}\textcolor{darkgreen}{-24.6}  &  \cellcolor{lightblue!50}\textcolor{darkred}{+4.0}  &  \cellcolor{lightblue!50}\textcolor{darkred}{+9.4}  &  \cellcolor{lightblue!50}\textcolor{darkred}{+15.8}  &  \cellcolor{lightblue!50}\textcolor{darkred}{+24.3} \\
\cdashline{1-10}
GPT{-}OSS{-}120B{-}low & 69.0 & 19.2 & 17.4 & 16.9 & 20.1 & 19.6 & 18.9 & 24.2 & 35.7 \\
GPT{-}OSS{-}120B{-}high & 93.0 & 10.6 & 18.6 & 18.4 & 31.7 & 8.4 & 8.7 & 10.4 & 17.3 \\
$\Delta$  &  \cellcolor{lightblue!50}\textcolor{darkred}{-24.0} &  \cellcolor{lightblue!50}\textcolor{darkred}{+8.5}  &  \cellcolor{lightblue!50}\textcolor{darkgreen}{-1.3}  &  \cellcolor{lightblue!50}\textcolor{darkgreen}{-1.5}  &  \cellcolor{lightblue!50}\textcolor{darkgreen}{-11.6}  &  \cellcolor{lightblue!50}\textcolor{darkred}{+11.1}  &  \cellcolor{lightblue!50}\textcolor{darkred}{+10.2}  &  \cellcolor{lightblue!50}\textcolor{darkred}{+13.8}  &  \cellcolor{lightblue!50}\textcolor{darkred}{+18.4} \\
\cdashline{1-10}
GPT{-}OSS{-}20B{-}high & 89.5 & 16.9 & 27.2 & 26.6 & 43.7 & 12.3 & 9.3 & 10.7 & 19.2 \\
GPT{-}OSS{-}120B{-}high & 93.0 & 10.6 & 18.6 & 18.4 & 31.7 & 8.4 & 8.7 & 10.4 & 17.3 \\
$\Delta$  &  \cellcolor{lightblue!50}\textcolor{darkred}{-3.5} &  \cellcolor{lightblue!50}\textcolor{darkred}{+6.2}  &  \cellcolor{lightblue!50}\textcolor{darkred}{+8.5}  &  \cellcolor{lightblue!50}\textcolor{darkred}{+8.2}  &  \cellcolor{lightblue!50}\textcolor{darkred}{+12.0}  &  \cellcolor{lightblue!50}\textcolor{darkred}{+3.9}  &  \cellcolor{lightblue!50}\textcolor{darkred}{+0.7}  &  \cellcolor{lightblue!50}\textcolor{darkred}{+0.3}  &  \cellcolor{lightblue!50}\textcolor{darkred}{+1.9} \\
\bottomrule
\end{tabular}
}
\caption{Calibration metrics on \textbf{LiveBench}. For calibration metrics (Brier, ECE), lower is better ($\downarrow$). $\Delta$ rows show the difference (Model$_1$ $-$ Model$_2$). \textcolor{darkgreen}{Green}: better; \textcolor{darkred}{Red}: worse.}
\label{tab:livebench}
\end{table*}


\begin{table*}[t!]
\setlength\tabcolsep{15.2pt}
\centering

\scalebox{0.72}{
\begin{tabular}{l c cc cc cc cc}
\toprule
\textbf{Model} & \textbf{Acc} & \multicolumn{2}{c}{\textbf{Raw}} & \multicolumn{2}{c}{\textbf{CA}} & \multicolumn{2}{c}{\textbf{DA}} & \multicolumn{2}{c}{\textbf{IA}} \\
\cmidrule(lr){3-4}\cmidrule(lr){5-6}\cmidrule(lr){7-8}\cmidrule(lr){9-10}
 & & Brier & ECE & Brier & ECE & Brier & ECE & Brier & ECE \\
\midrule
\rowcolor[gray]{0.95}\multicolumn{10}{c}{\textit{Verbalize}} \\
Qwen2.5{-}7B & 49.0 & 43.6 & 44.8 & 33.9 & 34.2 & 43.6 & 44.8 & 28.4 & 28.3 \\
Qwen2.5{-}72B & 71.6 & 23.9 & 23.0 & 30.9 & 31.1 & 42.1 & 43.6 & 27.7 & 28.0 \\
$\Delta$  &  \cellcolor{lightblue!50}\textcolor{darkred}{-22.6} &  \cellcolor{lightblue!50}\textcolor{darkred}{+19.7}  &  \cellcolor{lightblue!50}\textcolor{darkred}{+21.8}  &  \cellcolor{lightblue!50}\textcolor{darkred}{+3.1}  &  \cellcolor{lightblue!50}\textcolor{darkred}{+3.0}  &  \cellcolor{lightblue!50}\textcolor{darkred}{+1.5}  &  \cellcolor{lightblue!50}\textcolor{darkred}{+1.2}  &  \cellcolor{lightblue!50}\textcolor{darkred}{+0.8}  &  \cellcolor{lightblue!50}\textcolor{darkred}{+0.3} \\
\cdashline{1-10}
Llama3.1{-}8B & 31.6 & 56.7 & 57.3 & 47.3 & 47.5 & 57.0 & 57.7 & 48.8 & 49.4 \\
Llama3.1{-}70B & 57.6 & 28.3 & 28.5 & 28.3 & 28.5 & 39.5 & 40.2 & 33.3 & 34.1 \\
$\Delta$  &  \cellcolor{lightblue!50}\textcolor{darkred}{-26.0} &  \cellcolor{lightblue!50}\textcolor{darkred}{+28.4}  &  \cellcolor{lightblue!50}\textcolor{darkred}{+28.8}  &  \cellcolor{lightblue!50}\textcolor{darkred}{+19.0}  &  \cellcolor{lightblue!50}\textcolor{darkred}{+19.0}  &  \cellcolor{lightblue!50}\textcolor{darkred}{+17.5}  &  \cellcolor{lightblue!50}\textcolor{darkred}{+17.4}  &  \cellcolor{lightblue!50}\textcolor{darkred}{+15.6}  &  \cellcolor{lightblue!50}\textcolor{darkred}{+15.3} \\
\cdashline{1-10}
Qwen3{-}8B{-}nothink & 70.9 & 25.2 & 25.1 & 19.2 & 19.1 & 25.2 & 25.1 & 12.6 & 11.4 \\
Qwen3{-}8B{-}think & 85.6 & 13.1 & 12.6 & 16.5 & 16.3 & 25.9 & 26.1 & 13.0 & 12.7 \\
$\Delta$  &  \cellcolor{lightblue!50}\textcolor{darkred}{-14.7} &  \cellcolor{lightblue!50}\textcolor{darkred}{+12.1}  &  \cellcolor{lightblue!50}\textcolor{darkred}{+12.5}  &  \cellcolor{lightblue!50}\textcolor{darkred}{+2.7}  &  \cellcolor{lightblue!50}\textcolor{darkred}{+2.8}  &  \cellcolor{lightblue!50}\textcolor{darkgreen}{-0.6}  &  \cellcolor{lightblue!50}\textcolor{darkgreen}{-0.9}  &  \cellcolor{lightblue!50}\textcolor{darkgreen}{-0.5}  &  \cellcolor{lightblue!50}\textcolor{darkgreen}{-1.3} \\
\cdashline{1-10}
Qwen3{-}32B{-}nothink & 76.6 & 18.7 & 18.1 & 14.4 & 13.7 & 18.7 & 18.1 & 9.6 & 8.0 \\
Qwen3{-}32B{-}think & 88.6 & 11.2 & 10.5 & 15.2 & 14.4 & 21.2 & 21.0 & 10.6 & 10.0 \\
$\Delta$  &  \cellcolor{lightblue!50}\textcolor{darkred}{-12.0} &  \cellcolor{lightblue!50}\textcolor{darkred}{+7.5}  &  \cellcolor{lightblue!50}\textcolor{darkred}{+7.6}  &  \cellcolor{lightblue!50}\textcolor{darkgreen}{-0.8}  &  \cellcolor{lightblue!50}\textcolor{darkgreen}{-0.7}  &  \cellcolor{lightblue!50}\textcolor{darkgreen}{-2.5}  &  \cellcolor{lightblue!50}\textcolor{darkgreen}{-2.9}  &  \cellcolor{lightblue!50}\textcolor{darkgreen}{-1.0}  &  \cellcolor{lightblue!50}\textcolor{darkgreen}{-2.0} \\
\cdashline{1-10}
GPT{-}OSS{-}20B{-}low & 74.8 & 15.9 & 12.1 & 12.8 & 9.1 & 16.0 & 12.2 & 9.1 & 4.9 \\
GPT{-}OSS{-}20B{-}high & 86.7 & 10.5 & 7.9 & 12.4 & 9.7 & 20.3 & 19.0 & 10.1 & 7.7 \\
$\Delta$  &  \cellcolor{lightblue!50}\textcolor{darkred}{-11.9} &  \cellcolor{lightblue!50}\textcolor{darkred}{+5.3}  &  \cellcolor{lightblue!50}\textcolor{darkred}{+4.2}  &  \cellcolor{lightblue!50}\textcolor{darkred}{+0.4}  &  \cellcolor{lightblue!50}\textcolor{darkgreen}{-0.6}  &  \cellcolor{lightblue!50}\textcolor{darkgreen}{-4.3}  &  \cellcolor{lightblue!50}\textcolor{darkgreen}{-6.9}  &  \cellcolor{lightblue!50}\textcolor{darkgreen}{-1.0}  &  \cellcolor{lightblue!50}\textcolor{darkgreen}{-2.9} \\
\cdashline{1-10}
GPT{-}OSS{-}120B{-}low & 85.2 & 10.6 & 6.6 & 10.0 & 6.7 & 10.6 & 6.6 & 7.3 & 3.2 \\
GPT{-}OSS{-}120B{-}high & 88.2 & 8.9 & 6.3 & 9.1 & 6.1 & 11.2 & 9.0 & 7.4 & 5.0 \\
$\Delta$  &  \cellcolor{lightblue!50}\textcolor{darkred}{-3.0} &  \cellcolor{lightblue!50}\textcolor{darkred}{+1.7}  &  \cellcolor{lightblue!50}\textcolor{darkred}{+0.3}  &  \cellcolor{lightblue!50}\textcolor{darkred}{+0.9}  &  \cellcolor{lightblue!50}\textcolor{darkred}{+0.7}  &  \cellcolor{lightblue!50}\textcolor{darkgreen}{-0.6}  &  \cellcolor{lightblue!50}\textcolor{darkgreen}{-2.4}  &  \cellcolor{lightblue!50}\textcolor{darkgreen}{-0.1}  &  \cellcolor{lightblue!50}\textcolor{darkgreen}{-1.8} \\
\cdashline{1-10}
GPT{-}OSS{-}20B{-}high & 86.7 & 10.5 & 7.9 & 12.4 & 9.7 & 20.3 & 19.0 & 10.1 & 7.7 \\
GPT{-}OSS{-}120B{-}high & 88.2 & 8.9 & 6.3 & 9.1 & 6.1 & 11.2 & 9.0 & 7.4 & 5.0 \\
$\Delta$  &  \cellcolor{lightblue!50}\textcolor{darkred}{-1.5} &  \cellcolor{lightblue!50}\textcolor{darkred}{+1.7}  &  \cellcolor{lightblue!50}\textcolor{darkred}{+1.6}  &  \cellcolor{lightblue!50}\textcolor{darkred}{+3.3}  &  \cellcolor{lightblue!50}\textcolor{darkred}{+3.7}  &  \cellcolor{lightblue!50}\textcolor{darkred}{+9.1}  &  \cellcolor{lightblue!50}\textcolor{darkred}{+10.1}  &  \cellcolor{lightblue!50}\textcolor{darkred}{+2.7}  &  \cellcolor{lightblue!50}\textcolor{darkred}{+2.8} \\
\midrule
\rowcolor[gray]{0.95}\multicolumn{10}{c}{\textit{P(True)}} \\
Qwen2.5{-}7B & 49.0 & 33.7 & 32.7 & 29.4 & 28.3 & 33.7 & 32.7 & 26.1 & 25.3 \\
Qwen2.5{-}72B & 71.6 & 20.6 & 19.2 & 21.1 & 19.5 & 31.4 & 30.6 & 20.9 & 20.0 \\
$\Delta$  &  \cellcolor{lightblue!50}\textcolor{darkred}{-22.6} &  \cellcolor{lightblue!50}\textcolor{darkred}{+13.0}  &  \cellcolor{lightblue!50}\textcolor{darkred}{+13.5}  &  \cellcolor{lightblue!50}\textcolor{darkred}{+8.3}  &  \cellcolor{lightblue!50}\textcolor{darkred}{+8.8}  &  \cellcolor{lightblue!50}\textcolor{darkred}{+2.2}  &  \cellcolor{lightblue!50}\textcolor{darkred}{+2.1}  &  \cellcolor{lightblue!50}\textcolor{darkred}{+5.2}  &  \cellcolor{lightblue!50}\textcolor{darkred}{+5.3} \\
\cdashline{1-10}
Llama3.1{-}8B & 31.6 & 28.6 & 25.6 & 28.9 & 19.9 & 28.6 & 25.6 & 25.8 & 18.9 \\
Llama3.1{-}70B & 57.6 & 27.5 & 26.0 & 28.3 & 28.1 & 40.6 & 43.2 & 34.6 & 37.0 \\
$\Delta$  &  \cellcolor{lightblue!50}\textcolor{darkred}{-26.0} &  \cellcolor{lightblue!50}\textcolor{darkred}{+1.1}  &  \cellcolor{lightblue!50}\textcolor{darkgreen}{-0.4}  &  \cellcolor{lightblue!50}\textcolor{darkred}{+0.6}  &  \cellcolor{lightblue!50}\textcolor{darkgreen}{-8.2}  &  \cellcolor{lightblue!50}\textcolor{darkgreen}{-11.9}  &  \cellcolor{lightblue!50}\textcolor{darkgreen}{-17.6}  &  \cellcolor{lightblue!50}\textcolor{darkgreen}{-8.9}  &  \cellcolor{lightblue!50}\textcolor{darkgreen}{-18.2} \\
\cdashline{1-10}
Qwen3{-}8B{-}nothink & 70.9 & 23.4 & 21.3 & 20.3 & 19.2 & 23.4 & 21.3 & 16.2 & 15.2 \\
Qwen3{-}8B{-}think & 85.6 & 19.7 & 19.6 & 20.3 & 19.8 & 30.3 & 29.2 & 19.5 & 19.4 \\
$\Delta$  &  \cellcolor{lightblue!50}\textcolor{darkred}{-14.7} &  \cellcolor{lightblue!50}\textcolor{darkred}{+3.7}  &  \cellcolor{lightblue!50}\textcolor{darkred}{+1.7}  &  \cellcolor{lightblue!50}+0.0  &  \cellcolor{lightblue!50}\textcolor{darkgreen}{-0.6}  &  \cellcolor{lightblue!50}\textcolor{darkgreen}{-6.9}  &  \cellcolor{lightblue!50}\textcolor{darkgreen}{-7.8}  &  \cellcolor{lightblue!50}\textcolor{darkgreen}{-3.3}  &  \cellcolor{lightblue!50}\textcolor{darkgreen}{-4.2} \\
\cdashline{1-10}
Qwen3{-}32B{-}nothink & 76.6 & 14.9 & 11.3 & 13.0 & 10.7 & 14.9 & 11.3 & 12.2 & 11.6 \\
Qwen3{-}32B{-}think & 88.6 & 13.0 & 13.1 & 15.3 & 15.2 & 22.9 & 22.8 & 12.7 & 12.7 \\
$\Delta$  &  \cellcolor{lightblue!50}\textcolor{darkred}{-12.0} &  \cellcolor{lightblue!50}\textcolor{darkred}{+1.9}  &  \cellcolor{lightblue!50}\textcolor{darkgreen}{-1.7}  &  \cellcolor{lightblue!50}\textcolor{darkgreen}{-2.3}  &  \cellcolor{lightblue!50}\textcolor{darkgreen}{-4.5}  &  \cellcolor{lightblue!50}\textcolor{darkgreen}{-8.0}  &  \cellcolor{lightblue!50}\textcolor{darkgreen}{-11.5}  &  \cellcolor{lightblue!50}\textcolor{darkgreen}{-0.5}  &  \cellcolor{lightblue!50}\textcolor{darkgreen}{-1.1} \\
\cdashline{1-10}
GPT{-}OSS{-}20B{-}low & 74.8 & 20.1 & 19.9 & 30.1 & 30.0 & 20.1 & 19.9 & 13.6 & 13.6 \\
GPT{-}OSS{-}20B{-}high & 86.7 & 16.1 & 16.2 & 14.7 & 14.7 & 25.3 & 25.3 & 15.6 & 15.7 \\
$\Delta$  &  \cellcolor{lightblue!50}\textcolor{darkred}{-11.9} &  \cellcolor{lightblue!50}\textcolor{darkred}{+4.0}  &  \cellcolor{lightblue!50}\textcolor{darkred}{+3.7}  &  \cellcolor{lightblue!50}\textcolor{darkred}{+15.4}  &  \cellcolor{lightblue!50}\textcolor{darkred}{+15.2}  &  \cellcolor{lightblue!50}\textcolor{darkgreen}{-5.2}  &  \cellcolor{lightblue!50}\textcolor{darkgreen}{-5.4}  &  \cellcolor{lightblue!50}\textcolor{darkgreen}{-2.1}  &  \cellcolor{lightblue!50}\textcolor{darkgreen}{-2.1} \\
\cdashline{1-10}
GPT{-}OSS{-}120B{-}low & 85.2 & 14.5 & 14.6 & 15.4 & 15.3 & 14.5 & 14.6 & 10.0 & 10.0 \\
GPT{-}OSS{-}120B{-}high & 88.2 & 21.7 & 21.8 & 17.2 & 17.3 & 23.4 & 23.5 & 20.2 & 20.2 \\
$\Delta$  &  \cellcolor{lightblue!50}\textcolor{darkred}{-3.0} &  \cellcolor{lightblue!50}\textcolor{darkgreen}{-7.2}  &  \cellcolor{lightblue!50}\textcolor{darkgreen}{-7.2}  &  \cellcolor{lightblue!50}\textcolor{darkgreen}{-1.8}  &  \cellcolor{lightblue!50}\textcolor{darkgreen}{-2.0}  &  \cellcolor{lightblue!50}\textcolor{darkgreen}{-8.9}  &  \cellcolor{lightblue!50}\textcolor{darkgreen}{-8.9}  &  \cellcolor{lightblue!50}\textcolor{darkgreen}{-10.2}  &  \cellcolor{lightblue!50}\textcolor{darkgreen}{-10.2} \\
\cdashline{1-10}
GPT{-}OSS{-}20B{-}high & 86.7 & 16.1 & 16.2 & 14.7 & 14.7 & 25.3 & 25.3 & 15.6 & 15.7 \\
GPT{-}OSS{-}120B{-}high & 88.2 & 21.7 & 21.8 & 17.2 & 17.3 & 23.4 & 23.5 & 20.2 & 20.2 \\
$\Delta$  &  \cellcolor{lightblue!50}\textcolor{darkred}{-1.5} &  \cellcolor{lightblue!50}\textcolor{darkgreen}{-5.6}  &  \cellcolor{lightblue!50}\textcolor{darkgreen}{-5.6}  &  \cellcolor{lightblue!50}\textcolor{darkgreen}{-2.5}  &  \cellcolor{lightblue!50}\textcolor{darkgreen}{-2.5}  &  \cellcolor{lightblue!50}\textcolor{darkred}{+1.9}  &  \cellcolor{lightblue!50}\textcolor{darkred}{+1.8}  &  \cellcolor{lightblue!50}\textcolor{darkgreen}{-4.5}  &  \cellcolor{lightblue!50}\textcolor{darkgreen}{-4.5} \\
\midrule
\rowcolor[gray]{0.95}\multicolumn{10}{c}{\textit{Self-Consistency}} \\
Qwen2.5{-}7B & 49.0 & 19.5 & 13.6 & 11.0 & 8.9 & 19.5 & 13.6 & 16.2 & 6.1 \\
Qwen2.5{-}72B & 71.6 & 12.8 & 6.3 & 14.6 & 9.7 & 19.3 & 19.4 & 13.6 & 13.4 \\
$\Delta$  &  \cellcolor{lightblue!50}\textcolor{darkred}{-22.6} &  \cellcolor{lightblue!50}\textcolor{darkred}{+6.7}  &  \cellcolor{lightblue!50}\textcolor{darkred}{+7.3}  &  \cellcolor{lightblue!50}\textcolor{darkgreen}{-3.6}  &  \cellcolor{lightblue!50}\textcolor{darkgreen}{-0.8}  &  \cellcolor{lightblue!50}\textcolor{darkred}{+0.2}  &  \cellcolor{lightblue!50}\textcolor{darkgreen}{-5.9}  &  \cellcolor{lightblue!50}\textcolor{darkred}{+2.6}  &  \cellcolor{lightblue!50}\textcolor{darkgreen}{-7.2} \\
\cdashline{1-10}
Llama3.1{-}8B & 31.6 & 16.1 & 6.8 & 9.9 & 7.4 & 16.1 & 6.7 & 14.9 & 5.7 \\
Llama3.1{-}70B & 57.6 & 13.1 & 5.0 & 16.5 & 13.1 & 14.9 & 15.8 & 12.5 & 14.5 \\
$\Delta$  &  \cellcolor{lightblue!50}\textcolor{darkred}{-26.0} &  \cellcolor{lightblue!50}\textcolor{darkred}{+3.0}  &  \cellcolor{lightblue!50}\textcolor{darkred}{+1.8}  &  \cellcolor{lightblue!50}\textcolor{darkgreen}{-6.6}  &  \cellcolor{lightblue!50}\textcolor{darkgreen}{-5.7}  &  \cellcolor{lightblue!50}\textcolor{darkred}{+1.2}  &  \cellcolor{lightblue!50}\textcolor{darkgreen}{-9.1}  &  \cellcolor{lightblue!50}\textcolor{darkred}{+2.4}  &  \cellcolor{lightblue!50}\textcolor{darkgreen}{-8.8} \\
\cdashline{1-10}
Qwen3{-}8B{-}nothink & 70.9 & 11.5 & 4.8 & 7.8 & 4.5 & 11.5 & 4.8 & 9.1 & 5.3 \\
Qwen3{-}8B{-}think & 85.6 & 9.2 & 6.9 & 9.7 & 5.7 & 17.0 & 16.9 & 8.9 & 8.9 \\
$\Delta$  &  \cellcolor{lightblue!50}\textcolor{darkred}{-14.7} &  \cellcolor{lightblue!50}\textcolor{darkred}{+2.3}  &  \cellcolor{lightblue!50}\textcolor{darkgreen}{-2.0}  &  \cellcolor{lightblue!50}\textcolor{darkgreen}{-1.9}  &  \cellcolor{lightblue!50}\textcolor{darkgreen}{-1.2}  &  \cellcolor{lightblue!50}\textcolor{darkgreen}{-5.5}  &  \cellcolor{lightblue!50}\textcolor{darkgreen}{-12.1}  &  \cellcolor{lightblue!50}\textcolor{darkred}{+0.2}  &  \cellcolor{lightblue!50}\textcolor{darkgreen}{-3.6} \\
\cdashline{1-10}
Qwen3{-}32B{-}nothink & 76.6 & 9.0 & 4.1 & 6.3 & 3.5 & 9.0 & 4.1 & 7.6 & 4.9 \\
Qwen3{-}32B{-}think & 88.6 & 7.1 & 5.0 & 7.7 & 4.6 & 13.3 & 12.9 & 6.9 & 6.3 \\
$\Delta$  &  \cellcolor{lightblue!50}\textcolor{darkred}{-12.0} &  \cellcolor{lightblue!50}\textcolor{darkred}{+1.9}  &  \cellcolor{lightblue!50}\textcolor{darkgreen}{-0.9}  &  \cellcolor{lightblue!50}\textcolor{darkgreen}{-1.5}  &  \cellcolor{lightblue!50}\textcolor{darkgreen}{-1.1}  &  \cellcolor{lightblue!50}\textcolor{darkgreen}{-4.3}  &  \cellcolor{lightblue!50}\textcolor{darkgreen}{-8.8}  &  \cellcolor{lightblue!50}\textcolor{darkred}{+0.8}  &  \cellcolor{lightblue!50}\textcolor{darkgreen}{-1.4} \\
\cdashline{1-10}
GPT{-}OSS{-}20B{-}low & 74.8 & 10.1 & 4.6 & 7.2 & 4.4 & 10.1 & 4.6 & 8.8 & 5.5 \\
GPT{-}OSS{-}20B{-}high & 86.7 & 8.3 & 6.5 & 8.6 & 4.7 & 14.7 & 14.6 & 8.3 & 8.0 \\
$\Delta$  &  \cellcolor{lightblue!50}\textcolor{darkred}{-11.9} &  \cellcolor{lightblue!50}\textcolor{darkred}{+1.8}  &  \cellcolor{lightblue!50}\textcolor{darkgreen}{-1.9}  &  \cellcolor{lightblue!50}\textcolor{darkgreen}{-1.4}  &  \cellcolor{lightblue!50}\textcolor{darkgreen}{-0.2}  &  \cellcolor{lightblue!50}\textcolor{darkgreen}{-4.6}  &  \cellcolor{lightblue!50}\textcolor{darkgreen}{-10.0}  &  \cellcolor{lightblue!50}\textcolor{darkred}{+0.5}  &  \cellcolor{lightblue!50}\textcolor{darkgreen}{-2.5} \\
\cdashline{1-10}
GPT{-}OSS{-}120B{-}low & 85.2 & 9.0 & 6.3 & 7.2 & 5.1 & 9.0 & 6.3 & 7.3 & 5.2 \\
GPT{-}OSS{-}120B{-}high & 88.2 & 7.5 & 6.5 & 7.6 & 5.0 & 9.2 & 8.6 & 6.6 & 6.5 \\
$\Delta$  &  \cellcolor{lightblue!50}\textcolor{darkred}{-3.0} &  \cellcolor{lightblue!50}\textcolor{darkred}{+1.6}  &  \cellcolor{lightblue!50}\textcolor{darkgreen}{-0.2}  &  \cellcolor{lightblue!50}\textcolor{darkgreen}{-0.4}  &  \cellcolor{lightblue!50}+0.0  &  \cellcolor{lightblue!50}\textcolor{darkgreen}{-0.2}  &  \cellcolor{lightblue!50}\textcolor{darkgreen}{-2.2}  &  \cellcolor{lightblue!50}\textcolor{darkred}{+0.8}  &  \cellcolor{lightblue!50}\textcolor{darkgreen}{-1.2} \\
\cdashline{1-10}
GPT{-}OSS{-}20B{-}high & 86.7 & 8.3 & 6.5 & 8.6 & 4.7 & 14.7 & 14.6 & 8.3 & 8.0 \\
GPT{-}OSS{-}120B{-}high & 88.2 & 7.5 & 6.5 & 7.6 & 5.0 & 9.2 & 8.6 & 6.6 & 6.5 \\
$\Delta$  &  \cellcolor{lightblue!50}\textcolor{darkred}{-1.5} &  \cellcolor{lightblue!50}\textcolor{darkred}{+0.8}  &  \cellcolor{lightblue!50}0.0  &  \cellcolor{lightblue!50}\textcolor{darkred}{+1.0}  &  \cellcolor{lightblue!50}\textcolor{darkgreen}{-0.4}  &  \cellcolor{lightblue!50}\textcolor{darkred}{+5.4}  &  \cellcolor{lightblue!50}\textcolor{darkred}{+6.1}  &  \cellcolor{lightblue!50}\textcolor{darkred}{+1.7}  &  \cellcolor{lightblue!50}\textcolor{darkred}{+1.6} \\
\bottomrule
\end{tabular}
}
\caption{Calibration metrics on \textbf{MMLU-Pro}. For calibration metrics (Brier, ECE), lower is better ($\downarrow$). $\Delta$ rows show the difference (Model$_1$ $-$ Model$_2$). \textcolor{darkgreen}{Green}: better; \textcolor{darkred}{Red}: worse.}
\label{tab:mmlu}
\end{table*}

\end{document}